\def\eqref#1{equation~\ref{#1}}
\def\1{\bm{1}}
\def\rs{{\textnormal{s}}}
\DeclareMathAlphabet{\mathsfit}{\encodingdefault}{\sfdefault}{m}{sl}
\SetMathAlphabet{\mathsfit}{bold}{\encodingdefault}{\sfdefault}{bx}{n}
\def\gG{{\mathcal{G}}}
\def\gI{{\mathcal{I}}}
\def\gM{{\mathcal{M}}}
\def\gR{{\mathcal{R}}}
\newcommand{\E}{\mathbb{E}}
\newcommand{\parents}{Pa}
\DeclareMathOperator*{\argmax}{arg\,max}
\newcommand{\children}{Ch}
\newcommand{\enumplans}{\ensuremath{\mathcal{P}}}
\newcommand{\frontier}{\ensuremath{\mathcal{F}}}
\newcommand{\probability}{\ensuremath{\mathbb{P}}}
\newcommand{\targetmol}{\ensuremath{m_\star}}
\newcommand{\retrofallbackcodeurl}{\url{https://github.com/AustinT/retro-fallback-iclr24}}
\newcommand{\rfbarXivURL}{\url{https://arxiv.org/abs/2310.09270}}
\theoremstyle{plain}
\newtheorem{theorem}{Theorem}[section]
\newtheorem{lemma}[theorem]{Lemma}
\newtheorem{corollary}[theorem]{Corollary}
\newtheorem{assumption}[theorem]{Assumption}
\newtheorem{proposition}[theorem]{Proposition}
\title{Retro-fallback: retrosynthetic planning \\ in an uncertain world}
\author{
Austin Tripp$^1$\thanks{Work done partly during internship at Microsoft Research AI4Science} \quad
Krzysztof Maziarz$^2$ \quad
Sarah Lewis$^2$ \quad \\
\textbf{Marwin Segler}$^2$ \quad
\textbf{José Miguel Hernández-Lobato}$^1$ \quad \\
$^1$University of Cambridge \quad $^2$Microsoft Research AI4Science \\
\texttt{\{ajt212,jmh233\}@cam.ac.uk} \\
\texttt{\{krmaziar,sarahlewis,marwinsegler\}@microsoft.com}
}
\tikzstyle{ornode} = [circle, draw, inner sep=2pt, font=\tiny]
\tikzstyle{andnode} = [rectangle, draw, inner sep=2pt, font=\scriptsize]
\tikzstyle{labelnode} = [rectangle, font=\tiny]
\tikzstyle{molset} = [rectangle, draw, inner sep=2pt, font=\small]
\begin{document}

\renewcommand \thepart{}
\renewcommand \partname{}
\doparttoc
\faketableofcontents
\part{} %

\maketitle

\begin{abstract}
Retrosynthesis is the task of planning a series of chemical reactions
to create a desired molecule from simpler, buyable molecules.
While previous works have proposed algorithms to find optimal solutions for a range of metrics
(e.g.\@ shortest, lowest-cost),
these works generally overlook the fact that we have imperfect knowledge of the space
of possible reactions, meaning plans created by algorithms may not work in a laboratory.
In this paper we propose a novel formulation of retrosynthesis in terms of stochastic processes
to account for this uncertainty.
We then propose a novel greedy algorithm called retro-fallback which maximizes
the probability that at least one synthesis plan can be executed in the lab.
Using \textit{in-silico} benchmarks we demonstrate that retro-fallback
generally produces better sets of synthesis plans than the popular MCTS
and retro* algorithms.
\end{abstract}

\section{Introduction}

Retrosynthesis (planning the synthesis of organic molecules via a series of chemical reactions)
is a common task in chemistry with a long history of automation \citep{vleduts1963concerning,corey1969computer}.
Although the combinatorially large search space of chemical reactions makes naive brute-force methods ineffective,
recently significant progress has been made by developing modern machine-learning based search algorithms for retrosynthesis \citep{strieth2020machine,tu2023predictive,stanley2023fake}.
However, there remain obstacles to translating the output of retrosynthesis algorithms into real-world syntheses.
One significant issue is that these algorithms have imperfect knowledge of the space of chemical reactions.
Because the underlying physics of chemical reactions cannot be efficiently simulated,
retrosynthesis algorithms typically rely on data-driven reaction prediction models which can ``hallucinate'' unrealistic
or otherwise infeasible reactions~\citep{zhong2023retrosynthesis}.
This results in synthesis plans which cannot actually be executed.

Although future advances in modelling may reduce the prevalence of infeasible reactions,
we think it is unlikely that they will ever be eliminated entirely, as even the plans of expert chemists do not always work on the first try.
One possible workaround to failing plans is to produce \emph{multiple} synthesis plans instead of just a single one:
the other plans can act as \emph{backup} plans in case the primary plan fails.
Although existing algorithms may find multiple synthesis plans, they are generally not designed to do so,
and there is no reason to expect the plans found will be suitable as \emph{backup} plans (e.g.\@ they may share steps with the primary plan and thereby also share the same failure points).

In this paper, we present several advancements towards retrosynthesis with backup plans.
First, in section~\ref{sec:retrosynthesis formalism} we explain how uncertainty about whether a synthesis plan will work in the lab
can be quantified with stochastic processes.
We then propose an evaluation metric called \emph{successful synthesis probability} (SSP) which quantifies the probability that \emph{at least one}
synthesis plan found by an algorithm will work. This naturally captures the idea of producing backup plans.
Next, in section~\ref{sec:retro fallback} we present a novel search algorithm called \emph{retro-fallback}
which greedily optimizes SSP.
Finally, in section~\ref{sec:experiments} we demonstrate quantitatively that retro-fallback outperforms existing algorithms on several \textit{in-silico} benchmarks.
Together, we believe these contributions form a notable advancement towards translating results from retrosynthesis algorithms into the lab.

\section{Background: standard formulation of retrosynthesis}\label{sec:background}

Let $\gM$ denote the space of molecules, and $\gR$ denote
the space of single-product reactions
which transform a set of \emph{reactant} molecules in $2^\gM$
into a
\emph{product} molecule in $\gM$.
The set of reactions which produce a given molecule is given by 
a \emph{backward reaction model} $B:\gM\mapsto 2^\gR$.
$B$ can be used to define an (implicit) reaction graph $\gG$ with nodes
for each molecule and each reaction, and edges linking molecules
to reactions which involve them.
Figure~\ref{fig:retrosynthesis-background-schematic}a
illustrates a small example graph.
Note that by convention
the arrows are drawn backwards (from products towards reactants).
This kind of graph is sometimes called an \emph{AND/OR graph}
(see Appendix~\ref{appendix:search-graphs} for details).

A \emph{synthesis plan} for a molecule $m$ is a sequence of chemical reactions
which produces $m$ as the final product.  %
Synthesis plans usually form trees $T\subseteq\gG$  %
(more generally directed acyclic subgraphs),
wherein each molecule is produced by at most one reaction.
The set of all synthesis plans in $\gG$ which produce a molecule $m$
is denoted $\enumplans_m(\gG)$.
Figure~\ref{fig:retrosynthesis-background-schematic}b provides an example
(see Appendix~\ref{appendix:synthesis plan details} for a detailed definition).
Not all synthesis plans are equally useful however.
Most importantly,
for a synthesis plan to actually be executed by a chemist
the starting molecules must all be bought.
Typically this is formalized as requiring all starting molecules
to be contained in an \emph{inventory} $\gI\subseteq\gM$
(although we will propose an alternative formulation in  section~\ref{sec:retrosynthesis formalism}).
It is also desirable for synthesis plans to have low cost,
fewer steps, and reactions which are easier to perform.

In retrosynthesis, one usually seeks to create synthesis plans
for a specific \emph{target molecule} $\targetmol$.
Typically this is formulated as a search problem over $\gG$.
Various search algorithms have been proposed which,
at a high level, all behave similarly.
First, they initialize an \emph{explicit subgraph} $\gG'\subseteq\gG$ with $\gG'\gets\{\targetmol\}$.
Nodes whose children have not been added to $\gG'$
form the \emph{frontier} $\frontier(\gG')$.
Then, at each iteration $i$ they select a frontier molecule $m_{(i)}\in\frontier(\gG')$
(necessarily $\targetmol$ on the first iteration),
query $B$ to find reactions which produce $m_{(i)}$,
then add these reactions and their corresponding reactant molecules to the explicit graph $\gG'$.
This process is called \emph{expansion},
and is illustrated for $m_c$ in Figure~\ref{fig:retrosynthesis-background-schematic}a.
Search continues until a suitable synthesis plan is found or until the computational budget is exhausted.
Afterwards,
synthesis plans can be enumerated from $\gG'$.

The most popular retrosynthesis search algorithms compute some sort of metric
of synthesis plan quality, and use a 
\emph{search heuristic} to guide the search towards high-quality synthesis plans.
For example, Monte Carlo Tree Search (\emph{MCTS}) searches for synthesis plans
which maximize an arbitrary scalar reward function \citep{segler2018planning}.
\emph{Retro*} is a best-first search algorithm to find minimum-cost synthesis plans,
where the cost of a synthesis plan is defined as the sum of costs for each reaction
and each starting molecule \citep{chen2020retro}.
In both algorithms, frontier nodes are chosen using the heuristic to estimate the reward (or cost)
which could be achieved upon expansion.
We introduce these algorithms more extensively in Appendix~\ref{appendix:search-algorithms}.

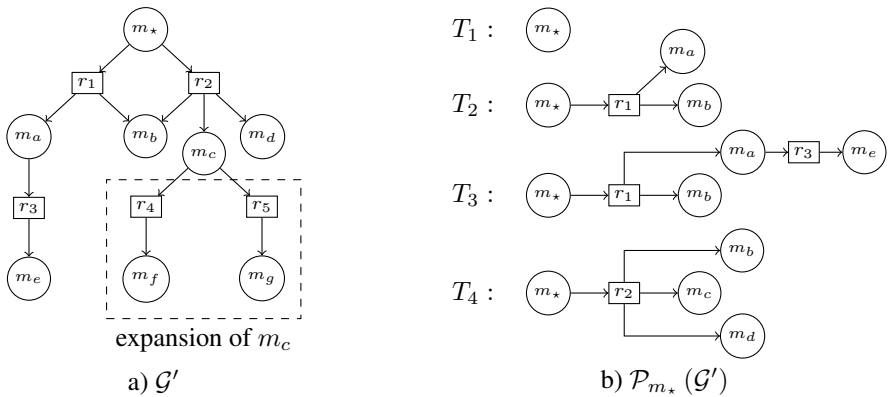
\begin{figure}[hb]
    \centering
    \begin{minipage}{.48\textwidth}
    \centering
    
    \begin{tikzpicture}[node distance=0.5cm]
          \node[ornode] (target) at (0, 0) {$\targetmol$};

          \node[andnode, below left = of target] (r1) {$r_1$};%
          \draw[->] (target) -- (r1);
          \node[ornode, below left = of r1] (ma) {$m_a$};
          \draw[->] (r1) -- (ma);
          \node[ornode, below right = of r1] (mb) {$m_b$};
          \draw[->] (r1) -- (mb);
          
          \node[andnode, below right = of target] (r2) {$r_2$};%
          \draw[->] (target) -- (r2);
          \draw[->] (r2) -- (mb);
          \node[ornode, below = of r2] (mc) {$m_c$};
          \draw[->] (r2) -- (mc);
          \node[ornode, below right = of r2] (md) {$m_d$};
          \draw[->] (r2) -- (md);
          
          \node[andnode, below = of ma] (r3) {$r_3$};%
          \draw[->] (ma) -- (r3);
          \node[ornode, below = of r3] (me) {$m_e$};
          \draw[->] (r3) -- (me);

          \node[andnode, below left = of mc] (rc1) {$r_4$};
          \draw[->] (mc) -- (rc1);
          \node[andnode, below right = of mc] (rc2) {$r_5$};
          \draw[->] (mc) -- (rc2);
          \node[ornode, below = of rc1] (mf) {$m_f$};
          \draw[->] (rc1) -- (mf);
          \node[ornode, below = of rc2] (mg) {$m_g$};
          \draw[->] (rc2) -- (mg);
          \node[draw, dashed, fit=(rc1) (rc2) (mf) (mg), inner sep=6pt, label=below:{expansion of $m_c$}] (box) {};
    \end{tikzpicture}

    \vfill
    a) $\gG'$
\end{minipage}
\begin{minipage}{.48\textwidth}
    \centering

    \begin{tikzpicture}[node distance=0.5cm]
        \node (T1) at (-1,0) {$T_1:$};
        \node[ornode] (target) at (0, 0) {$\targetmol$};
        
        \node (T2) at (-1,-1) {$T_2:$};
        \node[ornode] (target2) at (0, -1) {$\targetmol$};
        \node[andnode, right = of target2] (r1) {$r_1$};
        \draw[->] (target2) -- (r1);
        \node[ornode, above right = of r1] (ma) {$m_a$};
        \draw[->] (r1) -- (ma);
        \node[ornode, right = of r1] (mb) {$m_b$};
        \draw[->] (r1) -- (mb);
        
        \node (T3) at (-1,-2.2) {$T_3:$};
        \node[ornode] (target3) at (0, -2.2) {$\targetmol$};
        \node[andnode, right = of target3] (r1b) {$r_1$};
        \draw[->] (target3) -- (r1b);
        \node[ornode, right = of r1b] (mb2) {$m_b$};
        \draw[->] (r1b) -- (mb2);
        \node[ornode, above right = of mb2, xshift=-.2cm, yshift=-.2cm] (ma2) {$m_a$};
        \draw[->] (r1b) |- (ma2);
        \node[andnode, right = of ma2, xshift=-0.2cm] (r3) {$r_3$};
        \draw[->] (ma2) -- (r3);
        \node[ornode, right = of r3, xshift=-0.2cm] (me) {$m_e$};
        \draw[->] (r3) -- (me);

        \node (T4) at (-1,-3.5) {$T_4:$};
        \node[ornode] (target4) at (0, -3.5) {$\targetmol$};
        \node[andnode, right = of target4] (r2) {$r_2$};
        \draw[->] (target4) -- (r2);
        \node[ornode, right = of r2] (mc) {$m_c$};
        \draw[->] (r2) -- (mc);
        \node[ornode, above right = of mc, xshift=-0.2cm, yshift=-0.2cm] (mb) {$m_b$};
        \draw[->] (r2) |- (mb);
        \node[ornode, below right = of mc, , xshift=-0.2cm, yshift=+0.2cm] (md) {$m_d$};
        \draw[->] (r2) |- (md);

    \end{tikzpicture}

    b) $\enumplans_{\targetmol}\left(\gG'\right)$

\end{minipage}
    \caption{
       \textbf{a)} graph $\gG'$ with (backward) reactions
            $\targetmol\Rightarrow m_a+m_b\,(r_1)$,
            $\targetmol\Rightarrow m_b+m_c+m_d\,(r_2)$,
            and $m_a\Rightarrow m_e\,(r_3)$.
            Dashed box illustrates expansion of $m_c$. %
        \textbf{b)} All synthesis plans in $\enumplans_{\targetmol}(\gG')$.
    }
    \label{fig:retrosynthesis-background-schematic}
\end{figure}

\section{Reformulating retrosynthesis with uncertainty}\label{sec:retrosynthesis formalism}

The ``standard'' formulation of retrosynthesis presented in
section~\ref{sec:background} requires knowledge of which reactions are possible
(encoded by the backward reaction model $B$)
and which molecules are purchasable (encoded by the inventory $\gI$).
In reality, neither of these things are perfectly known.
As mentioned in the introduction,
predicting the outcome of chemical reactions is difficult even for experts,
and machine learning models for $B$ can ``hallucinate'' unrealistic reactions.
Perhaps surprisingly, it is also not totally clear which molecules can be bought.
Things like shipping delays mean you might not always receive molecules which you order.
However, many companies now advertise large ``virtual libraries'' with billions of molecules
which they \emph{believe} they can synthesize upon request, but not with 100\% reliability.\footnote{ For example, \href{https://enamine.net/}{Enamine} a popular supplier, only claims that \href{https://enamine.net/compound-collections/real-compounds}{80\% of its virtual ``REAL'' library can be made}.}
This section presents our first main contribution to account for this:  %
a novel formulation of retrosynthesis which explicitly represents uncertainty.

\subsection{Stochastic processes for ``feasibility'' and ``buyability''}

There are many reasons why chemists may consider a reaction unsuccessful,
ranging from having a low yield to producing the wrong product altogether.
Similarly, ``unsuccessfully'' buying a molecule
could indicate anything from a prohibitively high cost to the molecule not being delivered.
In either case, for simplicity we propose to collapse this nuance into a binary outcome:
reactions are either \emph{feasible} or \emph{infeasible},
and molecules are either \emph{buyable} or not.
We therefore postulate the existence of an unknown
``feasibility'' function $f^*:\gR\mapsto\{0,1\}$
and ``buyability'' function $b^*:\gM\mapsto\{0,1\}$.

Uncertainty about $f^*$ and $b^*$ can be represented
by \emph{stochastic processes} (essentially distributions over functions).
We define a \emph{feasibility model} $\xi_f$ to be a binary stochastic process over $\gR$,
and define a \emph{buyability model} $\xi_b$ to be a binary stochastic process over $\gM$.
This formulation is very general:
$\xi_f$ and $\xi_b$ not only represent beliefs
of $\probability\left[f^*(r)=1\right]$
and $\probability\left[b^*(m)=1\right]$
for all molecules $m$ and reactions $r$,
but also allows \emph{correlations} between feasibilities and buyabilities to be modelled.

Although this formalism may seem esoteric,
it is possible to re-cast almost all existing approaches to reaction prediction
as stochastic processes.
Any model which implicitly assigns a probability
to each reaction (e.g.\@ the softmax outputs of a neural network)
can be trivially converted into a stochastic process by assuming that all outcomes are independent.
Correlations can be induced via Bayesian inference over the model's parameters
\citep{mackay1992practical}
or using a non-parametric model like a Gaussian process
\citep{williams2006gaussian}.
Importantly however,
it is not at all clear how to produce \emph{realistic}
models $\xi_f$  and $\xi_b$.
Intuitively,  producing such models is at least as challenging
as predicting reaction outcomes \emph{without} uncertainty estimates,
which is itself an active (and challenging) research area.
Therefore, we will generally discuss $\xi_f$/$\xi_b$ in a model-agnostic way. %

\subsection{New evaluation metric: successful synthesis probability (SSP)}

Given $f$ and $b$,
a synthesis plan $T$ is \emph{successful} if all its reactions $r$ are feasible ($f(r)=1$)
and all its starting molecules $m$ are buyable ($b(m)=1$).
We formalize this with the function
\begin{equation}\label{eqn:success defn for single synthesis plan}
    \sigma(T;f,b) = \begin{cases}
        1 & f(r)=1\ \forall r\in T,\ b(m)=1\text{ and } \forall m\in\frontier(T) \\
        0 & \text{otherwise}
    \end{cases}\ .
\end{equation}
Finding successful synthesis plans is a natural goal of retrosynthesis.
Of course, because $f$ and $b$ are unknown, 
we can at best search for synthesis plans with a high \emph{probability}
of being successful.
Given a \emph{set} of synthesis plans $\mathcal{T}$,
we define the \emph{successful synthesis probability} (SSP) as:
\begin{equation}\label{eqn:SSP defn}
    \mathrm{SSP}(\mathcal T ; \xi_f, \xi_b) =
    \probability_{f\sim\xi_f, b\sim\xi_b}\left[
        \exists\ T\in\mathcal T\text{ with } 
        \sigma(T;f,b)=1
    \right]
\end{equation}
Given just a single plan $T$,
$\mathrm{SSP}(\{T\};\xi_f,\xi_b)=\mathbb E_{f,b}\left[\sigma(T;f,b)\right]$
and represents the probability that $T$ is successful,
which we will hereafter refer to as the \emph{success probability} of $T$.
When $\mathcal T$ contains multiple synthesis plans, then 
SSP quantifies the probability that \emph{any} of these synthesis plans is successful.
We argue that SSP is a good evaluation metric for the
synthesis plans produced by retrosynthesis
search algorithms.
It simultaneously captures the goals of 
producing synthesis plans with high success probability
and producing ``backup'' plans
which could succeed if the primary synthesis plan does not.
Note that by definition, SSP is non-decreasing with respect to $\mathcal T$,
implying that an algorithm will never be penalized for producing additional synthesis plans.

\subsection{
    Efficiently estimating SSP for all synthesis plans in 
    \texorpdfstring{$\enumplans_{\targetmol}(\gG')$}{P(G)}
}

Recall from section~\ref{sec:background} that many retrosynthesis search algorithms
do not directly output synthesis plans:
they produce a search graph $\gG'$ which (implicitly)
contains a set of synthesis plans $\enumplans_{\targetmol}(\gG')$.
Therefore, it is natural to calculate the SSP of the entire set $\enumplans_{\targetmol}(\gG')$.
However, this set may be combinatorially large,
making calculating SSP by enumerating $\enumplans_{\targetmol}(\gG')$ intractable.
Instead, we propose a method to estimate SSP using functions sampled from $\xi_f$ and $\xi_b$.

Let 
$\rs(n; \gG',f,b):\gM\cup\gR\mapsto\{0,1\}$
define the \emph{success} of a node $n\in\gG$:
whether \emph{any} successful synthesis plan in $\gG$ contains $n$
(we write $\rs(n)$ when $\gG',f,b$ are clear from context).
$\rs(n)$ will satisfy
\begin{equation}\label{eqn:s and sigma relationship}
    \rs(n;\gG',f,b)
    \stackrel{\text{(A)}}{=}
    \sigma(T^*;f,b)
    \stackrel{\text{(B)}}{=}
    \rs(n;T^*,f,b)
    ,\qquad
    T^*\in\argmax_{
        T\in\enumplans_{*}(\gG'):\ n\in T
    }
        \sigma(T;f,b)\ ,
\end{equation}
where $\enumplans_{*}(\gG')=\bigcup_{m\in\gG'}\enumplans_m(\gG')$ is the set of all synthesis plans for all molecules in $\gG'$.
Equality (A) follows directly from the definition above,
and equality (B) holds because $T^*$
would still satisfy the $\arg\max$ if nodes not in $T^*$ were pruned from $\gG'$.
Let $\children_{\gG'}(n)$ denote the children of node $n$.
For a reaction $r\in\gG'$ to succeed,
it must be feasible ($f(r)=1$) and have all its reactant molecules
$m'\in\children_{\gG'}(r)$ succeed.
Conversely, a molecule $m\in\gG'$ will succeed
if it is buyable ($b(m)=1$) or if any reaction producing $m$ succeeds.
This suggests $\rs(\cdot)$ will satisfy the recursive equations
\begin{align}
    \rs(m;\gG',f,b) &= 
        \max\left[
            b(m),
            \max_{r\in\children_{\gG'}(m)}\rs(r;\gG',f,b)
        \right]\ , \label{eqn:molecule success}\\
    \rs(r;\gG',f,b) &=
        f(r) \prod_{m\in\children_{\gG'}(r)} \rs(m;\gG',f,b)
    \ . \label{eqn:reaction success}
\end{align}
SSP can then be estimated by averaging $\rs(\targetmol)$
over $k$ i.i.d.\@ functions sampled from $\xi_f$ and $\xi_b$:
\begin{equation}\label{eqn:estimating SSP from s}
    \mathrm{SSP}(\enumplans_{\targetmol}(\gG') ; \xi_f, \xi_b)
    \stackrel{\text{(A)}}{=}
    \probability_{f\sim\xi_f, b\sim\xi_b}\left[
        \rs(\targetmol;\gG',f,b) = 1
    \right]
    \approx
    \frac{1}{k}\sum_{i=1}^k
        \rs(\targetmol;\gG',f_k,b_k)
    \ .
\end{equation}
Note that equality (A) above follows directly from
equations~\ref{eqn:SSP defn} and \ref{eqn:s and sigma relationship}.
The existence of such recursive equations suggests that $\rs(\cdot)$
could be efficiently computed for all nodes in $\gG'$ in polynomial time
using dynamic programming
(we discuss this further in Appendix~\ref{appendix:proofs:polynomial-dp-updates}),
allowing an overall polynomial time estimate of SSP.
That being said, it is still only an \emph{estimate}.
Unfortunately, we are able to prove that an exact calculation is generally intractable.
\begin{theorem}\label{thm:success prob is NP hard}
    Unless $P=NP$, there does not exist an algorithm to compute $\mathrm{SSP}(\enumplans_{\targetmol}(\gG') ; \xi_f, \xi_b)$ for arbitrary $\xi_f,\xi_b$ whose time complexity grows polynomially with the number of nodes
    in $\gG'$.
\end{theorem}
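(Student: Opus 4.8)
The plan is to show that an exact polynomial-time SSP algorithm could be used to solve a $\#P$-complete counting problem, which would place a $\#P$-complete function in $FP$ and thereby force $P=NP$. The starting observation is that, by the recursions (\ref{eqn:molecule success})--(\ref{eqn:reaction success}), the random variable $\rs(\targetmol;\gG',f,b)$ is a \emph{monotone Boolean function} of the bits $\{f(r)\}_{r}$ and $\{b(m)\}_{m}$: each molecule node acts as an OR gate and each reaction node as an AND gate. Consequently $\mathrm{SSP}=\probability[\rs(\targetmol)=1]$ is exactly the probability that a monotone AND/OR circuit evaluates to $1$ under a prescribed input distribution, i.e. a network-reliability-type quantity, and such quantities are classically hard to evaluate. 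I would therefore reduce a canonical $\#P$-complete problem to the evaluation of this probability.

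\textbf{The gadget.} I would reduce from counting independent sets. Given a graph $G=(V,E)$, I build $\gG'$ with a target $\targetmol$, one leaf molecule $m_v$ for every $v\in V$, and one reaction $r_e$ producing $\targetmol$ for every edge $e=(u,v)\in E$, whose two reactants are $m_u$ and $m_v$. I take $\xi_b$ to make each $b(m_v)$ an \emph{independent} $\mathrm{Bernoulli}(1/2)$ bit while fixing $b(\targetmol)=0$, and $\xi_f$ to make every $f(r_e)=1$ deterministically. Since the $m_v$ have no producing reactions, $\rs(m_v)=b(m_v)$, so $\rs(r_e)=b(m_u)\,b(m_v)$ and
\[
\rs(\targetmol)=\max_{e=(u,v)\in E} b(m_u)\,b(m_v)=\bigvee_{(u,v)\in E}\bigl(b(m_u)\wedge b(m_v)\bigr).
\]
Writing $S=\{v: b(m_v)=1\}$ for the random vertex set, $\rs(\targetmol)=1$ exactly when $S$ contains both endpoints of some edge, i.e. when $S$ is \emph{not} independent in $G$.

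\textbf{Conclusion.} Hence $\mathrm{SSP}=\probability[S\text{ not independent}]=1-\#\mathrm{IS}(G)/2^{|V|}$, where $\#\mathrm{IS}(G)$ is the number of independent sets of $G$. The graph $\gG'$ has $1+|V|+|E|$ nodes, both $\xi_f,\xi_b$ admit descriptions of size polynomial in $G$, and $\mathrm{SSP}$ is a dyadic rational with denominator $2^{|V|}$ and therefore polynomially many bits. Thus a hypothetical algorithm computing SSP exactly in time polynomial in the number of nodes of $\gG'$ would output $\#\mathrm{IS}(G)=2^{|V|}(1-\mathrm{SSP})$ in polynomial time. Because counting independent sets is $\#P$-complete, this would put a $\#P$-complete function in $FP$, giving $P^{\#P}=P$ and hence $P=NP$ (since $NP\subseteq P^{\#P}$). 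This proves the claim.

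\textbf{Main obstacle.} I expect the delicate part to be engineering a gadget that realizes a $\#P$-hard Boolean function within the \emph{restricted} structure of retrosynthesis graphs --- strictly alternating molecule-OR and reaction-AND layers, with randomness entering only through leaf buyabilities and reaction feasibilities --- and then verifying, against the precise definitions of $\sigma$ and $\frontier(T)$ in (\ref{eqn:success defn for single synthesis plan}), that the $\rs$ recursion on this gadget really does compute the intended monotone formula. Keeping $\xi_f,\xi_b$ \emph{independent}, rather than hiding the hardness inside exotic correlations, is what makes the reduction convincing; the remaining bookkeeping (node counts, bit-lengths, and the step from $\#P$-hardness to $P=NP$) is routine.
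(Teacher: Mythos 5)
Your proof is correct, but it takes a genuinely different route from the paper's. The paper derives the theorem as a corollary of a decision-level statement: it proves that merely determining whether $\mathrm{SSP}(\enumplans_{\targetmol}(\gG');\xi_f,\xi_b)>0$ is NP-hard, via a reduction from 3-SAT in which all reactions are feasible and the hardness is encoded entirely in a \emph{correlated} buyability model ($\xi_b$ is uniform over subsets of literal-molecules containing exactly one of $\{x_i,\neg x_i\}$ per variable), using a root--clause--literal gadget with dummy single-reactant reactions. You instead reduce a $\#P$-complete counting problem (counting independent sets) to \emph{exact} evaluation of SSP, with a shallower edge gadget and, crucially, fully \emph{independent} $\mathrm{Bernoulli}(1/2)$ buyabilities and deterministic feasibilities. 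The two approaches buy different things. Yours shows the hardness is not an artifact of exotic correlations: even for the simplest product distributions, exact SSP computation is $\#P$-hard, which in fact better motivates the paper's own fallback to Monte Carlo estimation in precisely that independent setting. The paper's approach rules out more than exact computation: since deciding positivity is NP-hard under its correlated $\xi_b$, no polynomial-time multiplicative (or sufficiently coarse additive) approximation can exist for arbitrary $\xi_f,\xi_b$; your construction cannot yield this, because with independent marginals positivity is trivially decidable (set all bits to $1$ and run the $\rs$ recursion) and additive approximation is easy by sampling. Both arguments validly establish the theorem as stated, and the details you flagged as the ``main obstacle'' do check out against the paper's conventions: leaf molecules satisfy $\rs(m_v)=b(m_v)$, each reaction gives $\rs(r_e)=b(m_u)b(m_v)$, the singleton plan $\{\targetmol\}$ contributes nothing since $b(\targetmol)=0$, the output $\mathrm{SSP}=1-\#\mathrm{IS}(G)/2^{|V|}$ has polynomially many bits, and $\#\mathrm{IS}\in\mathrm{FP}$ does force $P=NP$ via $NP\subseteq P^{\#P}$.
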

The proof is given in Appendix~\ref{appendix:proofs:succ-prob-np-hard}.
We therefore conclude that estimating SSP using equation~\ref{eqn:estimating SSP from s}
is the best realistic option given limited computational resources.

\section{Retro-fallback: a greedy algorithm to maximize SSP}\label{sec:retro fallback}

\subsection{Ingredients for an informed, greedy search algorithm}\label{sec:retro fallback ingredients}

Intuitively, a greedy search algorithm would expand molecules in $\frontier(\gG')$
which are predicted to improve SSP.
Given that calculating SSP exactly is intractable,
calculating potential changes is likely to be intractable as well.
Therefore, we will estimate SSP changes by averaging over samples 
from $\xi_f$ and $\xi_b$,
and will consider how expansion might change $\rs(\targetmol;\gG',f,b)$
for fixed  samples $f,b$.

Specifically, we consider the effect of simultaneously expanding \emph{every} frontier
molecule on a fixed synthesis plan $T\in\enumplans_{*}(\gG')$.\footnote{
    We do not consider expanding just a single node because,
    for a reaction with multiple non-buyable reactant molecules in $\frontier(\gG')$,
    expanding just \emph{one} reactant will never produce a new successful synthesis plan.
}
We represent the hypothetical effect of such an expansion
with a random function $e_{T}:\gM\mapsto\{0,1\}$,
where $e_{T}(m)=1$ implies that expanding $m$ produces a new successful synthesis plan for $m$.
We assume the value of $e_{T}$ is independently distributed for every molecule,
with probabilities given by
a \emph{search heuristic} function
$h:\gM\mapsto[0,1]$
\begin{equation}\label{eqn:eT expansion function}
    \probability_{e_{T}}\left[e_{T}(m)=1\right] =
    \begin{cases}
        h(m) & m\in\frontier(\gG')\cap {T} \\
        0 & m\notin\frontier(\gG')\cap {T}
    \end{cases} \ .
\end{equation}

The effect of this expansion on the success of $T$ is given by
$\sigma':\enumplans_*(\gG')\mapsto\{0,1\}$,
defined as
\begin{equation}\label{eqn:expansion success single plan}
    {\color{blue} \sigma '} (T;f,b,{\color{blue}e_{T}}) =
    \begin{cases}
        1 & f(r)=1\ \forall r\in T\text{ and } \left(b(m)=1{\color{blue}\text{ or } e_{T}(m)=1}\right) \forall m\in\frontier(T) \\
        0 & \text{otherwise}
    \end{cases}\ .
\end{equation}

Equation~\ref{eqn:expansion success single plan} for $\sigma'$
is almost identical to equation~\ref{eqn:success defn for single synthesis plan} for $\sigma$.
The key difference (\textcolor{blue}{highlighted})
is that $T$ can be successful if a starting molecule $m$ is not buyable ($b(m)=0$)
but has instead had $e_{T}(m)=1$.
Recalling that $e_{T}$ is a random function,
we define
$\bar\sigma':\enumplans_*(\gG')\mapsto[0,1]$ as
\begin{equation}\label{eqn:sigma' bar}
    \bar\sigma'(T;f,b,h) =
    \mathbb E_{e_{T}} \left[
        \sigma'(T;f,b, e_{T}
    \right]\ ,
\end{equation}
namely the \emph{probability} that a synthesis plan $T$ will be successful upon expansion.%
\footnote{
    The dependence on $h$ is because it
    defines the distribution of $e_{T}$ in equation~\ref{eqn:expansion success single plan}.
}
A natural choice for a greedy algorithm could be to expand
frontier nodes on synthesis plans $T$ with high $\bar\sigma'(T;f,b,h)$.
However, not all synthesis plans contain frontier nodes
(e.g.\@ plan $T_1$ in Figure~\ref{fig:retrosynthesis-background-schematic}b)
or produce $\targetmol$.
To select frontier nodes for expansion,
we define the function $\tilde\rho:\gM\cup\gR\mapsto[0,1]$ by
\begin{equation}\label{eqn:rho tilde}
    \tilde\rho(n;\gG',f,b,h) = \max_{T\in\enumplans_{\targetmol}(\gG'):\ n\in T}
        \bar\sigma'(T;f,b,h)\ ,\qquad n\in\gG'\ .
\end{equation}
For $m\in\frontier(\gG')$,
$\tilde\rho(m)$ represents the highest estimated success probability
of all synthesis plans for $\targetmol$ which also contain $m$
(conditioned on a particular $f,b$).
Therefore, a greedy algorithm could sensibly expand frontier molecules $m$ with maximal $\tilde\rho(m)$.

Unfortunately, the combinatorially large number of synthesis plans
in a graph $\gG'$ makes evaluating $\tilde\rho$ potentially infeasible.
To circumvent this,
we assume that no synthesis plan in $\gG'$ uses the same molecule in
two separate reactions, making all synthesis plans trees
(we will revisit this assumption later).
This assumption guarantees that the outcomes from different branches
of a synthesis plan will always be independent.
Then, to help efficiently compute $\tilde\rho$,
we will define the function
\begin{equation}\label{eqn:psi tilde}
    \tilde\psi(n;\gG',f,b,h)=\max_{T\in\enumplans_*(\gG'):\ n\in T}\bar\sigma'(T;f,b,h)
\end{equation}
for every node $n\in\gG'$.
$\tilde\psi$ is essentially a less constrained version of $\tilde\rho$.
The key difference in their definitions is that $\tilde\psi$
maximizes over \emph{all} synthesis plans containing $n$,
including plans which do not produce $\targetmol$.
The independence assumption above
means that 
$\tilde\psi$ has a recursively-defined analytic solution
$\psi(\cdot;\gG',f,b,h):\gM\cup\gR\mapsto[0,1]$
given by the equations
\begin{align}
    \psi(m;\gG',f,b,h) &= \begin{cases}
    \max\left[b(m), h(m)\right] & m\in\frontier(\gG') \\
    \max\left[b(m), \max_{r\in\children_{\gG'}(m)}\psi(r;\gG',f,b,h) \right] & m\notin\frontier(\gG') \\
    \end{cases} \ ,\label{eqn:psi molecule} \\
    \psi(r;\gG',f,b,h) &= f(r) \prod_{m\in\children_{\gG'}(r)} \psi(m;\gG',f,b,h)\ . \label{eqn:psi reaction}
\end{align}
Details of this solution are presented in Appendix~\ref{appendix:rfbd:s,psi,rho}.
$\psi(n)$ can be roughly interpreted as ``the best expected success value for $n$ upon expansion.''
In fact,
the relationship between $\psi$ and $\bar\sigma'$ is exactly analogous
to the relationship between $\rs$ and $\sigma$ in equation~\ref{eqn:s and sigma relationship}.

To compute $\tilde\rho$, first note that $\tilde\rho(\targetmol)=\tilde\psi(\targetmol)$,
as for $\targetmol$ the constraints in equations~\ref{eqn:rho tilde}
and \ref{eqn:psi tilde} are equivalent.
Second, because of the independence assumption above,
the best synthesis plan containing \emph{both} a node $n$ and its parent $n'$
can be created by taking an optimal synthesis plan for $n'$ (which may or may not contain $n$),
removing the part ``below'' $n'$, and adding in an (unconstrained) optimal plan for $n$.
Letting $\parents_{\gG'}(\cdot)$ denote a node's parents,%
\footnote{
    Recall that because we 
    consider only single-product reactions,
    all reaction nodes will have exactly one parent,
    making equation~\ref{eqn:rho reaction} well-defined.
}
under this assumption $\tilde\rho$
has a recursively-defined analytic solution
$\rho(\cdot;\gG',f,b,h):\gM\cup\gR\mapsto[0,1]$ defined as
\begin{align}
     \rho(m;\gG',f,b,h) &= \begin{cases}
    \psi(m;\gG',f,b,h) & m\text{ is target molecule }\targetmol \\
    \max_{r\in\parents_{\gG'}(m)}\rho(r;\gG',f,b,h) & \text{all other }m \\
    \end{cases}\ , \label{eqn:rho molecule} \\
    \rho(r;\gG',f,b,h) &= \begin{cases}
        0 & \psi(r;\gG',f,b,h) = 0 \\
        \rho(m';\gG',f,b,h) \frac{\psi(r;\gG',f,b,h)}{\psi(m';\gG',f,b,h)} & \psi(r;\gG',f,b,h) > 0, m'\in\parents_{\gG'}(r)
    \end{cases}\ .
    \label{eqn:rho reaction}
\end{align}
Details of this solution are presented in Appendix~\ref{appendix:rfbd:s,psi,rho}.
Like $\rs(\cdot)$,
$\psi$ and $\rho$ have recursive definitions,
and can therefore be calculated with dynamic programming techniques.
Since $\psi$ depends on a node's children, it can generally be calculated
``bottom-up'',
while $\rho$ can be calculated ``top-down'' because it depends on a node's parents.
We discuss details of computing $\psi$ and $\rho$ in Appendix~\ref{appendix:rfbd:s,psi,rho},
and provide a full worked-through example in Appendix~\ref{appendix:psi-rho-example}.

However, in deriving $\psi$ and $\rho$
we assumed that all synthesis plans
$T\in\enumplans_{*}(\gG')$ were trees.
In practice, this assumption may not hold (see Figure~\ref{fig:psi-rho-example} for an example).
If this assumption is violated,
$\psi$ and $\rho$ can both still be calculated,
but will effectively \emph{double-count} molecules which occur multiple times in a synthesis plan,
and therefore not equal $\tilde\psi$ and $\tilde\rho$.
This is a well-known issue in AND/OR graphs:
for example,
\citet[page 102]{nilsson1982principles}
describes the essentially same issue when calculating minimum cost synthesis plans.
Ultimately we will simply accept this and use 
$\psi$/$\rho$ instead of
$\tilde\psi$/$\tilde\rho$
despite their less principled interpretation,
chiefly because the recursive definitions of $\psi$/$\rho$ are amenable to efficient computation.
Synthesis plans which use the same molecule twice
are unusual in chemistry;
therefore we do not expect this substitution to be problematic in practice.

\subsection{Retro-fallback: a full greedy algorithm}
\label{ssec:retro-fallback algorithm}

Recall our original goal at the start of section~\ref{sec:retro fallback ingredients}:
to estimate how expansion might affect SSP.
We considered \emph{a single sample} $f\sim\xi_f$ and $b\sim\xi_b$,
and developed the function $\rho$,
which for each frontier
molecule $m\in\frontier(\gG')$ gives the best estimated synthesis plan for $\targetmol$
if $m$ is expanded
(simultaneously along with other frontier molecules on an optimally chosen synthesis plan).
We will now use $\rho$ to construct a full algorithm.

Expanding a frontier molecule can improve SSP if, for samples $f$ and $b$
where $\rs(\targetmol;\gG',f,b)=0$,
the expansion changes this to 1.
In this scenario, expanding a frontier molecule
$m^*\in\argmax_{m\in\frontier(\gG')}\rho(m;\gG',f,b,h)$
is a prudent choice,
as it lies on a synthesis plan with the highest probability
of ``flipping'' $\rs(\targetmol;\gG',f,b)$ to $1$.
In contrast, because $\rs(\cdot)$ will never decrease as nodes are added,
if $\rs(\targetmol;\gG',f,b)=1$ then it does not matter which molecule is expanded.
Therefore, when aggregating over samples of $f$ and $b$ to decide which molecules to expand to improve SSP,
we will consider the value of $\rho$ \emph{only} in cases when $\rs(\targetmol;\gG',f,b)=0$.

For our greedy algorithm,
we propose to simply expand the molecule with the highest \emph{expected improvement}
of SSP.
Letting $\1_{(\cdot)}$ be the indicator function,
this is a molecule
$m\in\frontier(\gG')$ which maximizes
\begin{equation}\label{eqn:retro fallback alpha}
    \alpha(m;\gG',\xi_f,\xi_b, h) = \E_{f\sim \xi_f, b\sim \xi_b}\left[
        \1_{\rs(\targetmol;\gG',f,b)=0}\left[\rho(m;\gG',f,b,h)\right]
    \right]
\end{equation}
In practice, $\alpha$ would be estimated from a finite number of samples from $\xi_f$ and $\xi_b$.
Using $\rho$ to select a \emph{single} molecule may seem odd,
especially because $\rho$ is defined as a hypothetical outcome of simultaneously expanding multiple nodes.
However, note that in principle there is nothing problematic about expanding these nodes one at a time.

We call our entire algorithm \emph{retro-fallback} (from ``retrosynthesis with fallback plans'')
and state it explicitly in Algorithm~\ref{alg:retro-fallback-full}.
The sections are colour-coded for clarity.
After initializing $\gG'$, the algorithm performs
$L$ iterations of expansion
(although this termination condition could be changed as needed).
In each iteration, first \textcolor{blue}{the values of $\rs$, $\psi$, and $\rho$ are computed for each sample of $f$ and $b$}.\footnote{
This order is chosen because $\rs$ depends only on $f$ \& $b$, $\psi$ depends on $\rs$, and $\rho$ depends on $\psi$.
Because the optimal algorithm to compute $\rs,\psi,\rho$ may depend on $\gG'$,
we only specify this computation generically.
}
Next, \textcolor{orange}{the algorithm checks whether there are no frontier nodes or whether the estimated SSP is 100\%, and if so terminates}
(both of these conditions mean no further improvement is possible).
Finally, \textcolor{violet}{a frontier node maximizing $\alpha$ (\ref{eqn:retro fallback alpha}) is selected and expanded}.
Of course, a practical implementation of retro-fallback may look slightly different from Algorithm~\ref{alg:retro-fallback-full}.
We refer the reader to Appendix~\ref{appendix:retro-fallback-details}
for further discussion about the design and implementation of retro-fallback.

\begin{algorithm}[tb]
\caption{Retro-fallback algorithm (see~\ref{ssec:retro-fallback algorithm})}\label{alg:retro-fallback-full}
\begin{algorithmic}[1]
\REQUIRE target molecule $\targetmol$, max iterations $L$, backward reaction model $B$, search heuristic $h$
\REQUIRE samples $f_1,\ldots,f_k\sim\xi_f$, $b_1,\ldots,b_k\sim\xi_b$
\STATE $\gG'\gets \{\targetmol\}$
\FOR{$i$ in $1,\ldots,L$}
    {
    \color{blue} 
    \FOR{$j$ in $1,\ldots,k$}
        \STATE Compute $\rs(\cdot;\gG',f_j,b_j)$ for all nodes using equations~\ref{eqn:molecule success}--\ref{eqn:reaction success}
        \STATE Compute $\psi(\cdot;\gG',f_j,b_j,h)$ for all nodes using equations~\ref{eqn:psi molecule}--\ref{eqn:psi reaction}
        \STATE Compute $\rho(\cdot;\gG',f_j,b_j,h)$ for all nodes using equations~\ref{eqn:rho molecule}--\ref{eqn:rho reaction}
    \ENDFOR
    }

    {\color{orange}
    \STATE\label{early termination} Terminate early if $|\frontier(\gG')|=0$ OR $\rs(\targetmol;\gG',f_j,b_j)=1 \,\forall j$}

    {\color{violet}
    \STATE $m_{(i)}\gets \arg\max_{m\in \frontier(\gG')}\alpha(m;\gG',\xi_f,\xi_b, h)$ (equation~\ref{eqn:retro fallback alpha}, breaking ties arbitrarily) 
    \STATE Add all reactions and molecules from $B(m_{(i)})$ to $\gG'$
    }
\ENDFOR
    
\RETURN $\gG'$
\end{algorithmic}
\end{algorithm}

\section{Related Work}

Retro-fallback is most comparable with other retrosynthesis
search algorithms including
MCTS \citep{segler2018planning},
retro* \citep{chen2020retro},
and proof number search \citep{heifets2012construction,kishimoto2019depth}.
At a high level these algorithms are all similar:
they use a heuristic to guide the construction of an explicit search graph.
However, previous algorithms may struggle to maximize SSP
because their internal objectives consider only \emph{individual} synthesis plans,
while SSP depends on \emph{multiple} synthesis plans simultaneously.
In Appendix~\ref{appendix:configuring previous algorithms}
we argue that for most algorithms
the best proxy for SSP is the success probability of individual synthesis plans,
but illustrate in Appendix~\ref{appendix:individual plans not ssp}
that this objective does not always align with SSP.
In contrast, retro-fallback is specifically designed to maximize SSP.

Mechanistically, retro-fallback most closely resembles retro* \citep{chen2020retro},
which is a variant of the older AO* algorithm
\citep{chang1971admissible,martelli1978optimizing,nilsson1982principles,mahanti1985andor}.
Both retro* and retro-fallback
perform a bottom-up and top-down update
to determine the value of each potential action, then select actions greedily.
In fact, retro-fallback's updates have cost-minimization interpretation,
presented in Appendix~\ref{appendix:s,psi,rho min cost}.
The key difference between the algorithms is the node selection step:
retro* considers just a single cost for each node,
while retro-fallback aggregates over a vector of samples
to directly optimize SSP.

Lastly, we briefly comment on several research topics which are only tangentially related (deferring fuller coverage to Appendix~\ref{appendix:related work}).
Works proposing search heuristics for retrosynthesis search algorithms (\ref{appendix:related work:heuristics})
complement rather than compete with our work:
such heuristics could also be applied to retro-fallback.
Generative models to produce synthesis plans (\ref{appendix:related work:generative models})
effectively also function as heuristics.
Methods to predict individual chemical reactions are sometimes also referred to as ``retrosynthesis models'' (\ref{appendix:related work:single step}),
but solve a different problem than multi-step synthesis.
Finally, other works have considered generally planning in stochastic graphs
(\ref{appendix:related work:canadian traveller}),
but typically in a scenario where the agent is \emph{embedded}
in the graph.

\section{Experiments}\label{sec:experiments}

In this section we evaluate retro-fallback experimentally.
The key question we seek to answer is whether retro-fallback does indeed maximize SSP
more effectively than existing algorithms.
We present additional results and explain details of the setup experimental
in Appendix~\ref{appendix:experimental details}.

\subsection{Experiment Setup}

We have based our experiment design on the USPTO benchmark from \citet{chen2020retro},
which has been widely used to evaluate multi-step retrosynthesis algorithms.
However, because this benchmark does not include a feasibility or buyability model
we have made some adaptations to make it suitable for our problem setting.
Importantly, because we do not know what the ``best'' feasibility model is, we instead
test \emph{multiple} feasibility models in the hope that the conclusions of our experiments
could potentially generalize to future, more advanced feasibility models.
We summarize the setup below and refer the reader to Appendix~\ref{appendix:experimental setup details}
for further details.

We base all of our feasibility models on the pre-trained template classifier from \citet{chen2020retro}
restricted to the top-50 templates.
We vary our feasibility model across two axes: the \emph{marginal} feasibility assigned to each reaction
and the \emph{correlation} between feasibility outcomes.
Marginally, we consider a constant value of $0.5$,
and a value which starts at $0.75$ and decreases with the rank of the reaction
in the template classifier's output.
For correlations, we consider all outcomes being independent or determined by a latent GP model  which positively correlates similar reactions.
Details of these models are given in Appendix~\ref{appendix:feasibility model details}.
Analogous to \citet{chen2020retro},
we create a buyability model based on the \href{https://www.emolecules.com/}{eMolecules} library
which designates only chemicals shipped within 10 business
days
as 100\% buyable.
See Appendix~\ref{appendix:buyability model details} for details.

We compare retro-fallback to breadth-first search (an uninformed search algorithm)
and the heuristic-guided algorithms retro* \citep{chen2020retro} and MCTS \citep{segler2018planning, genheden2020aizynthfinder, coley2019robotic}.
All algorithms were implemented
using the \textsc{syntheseus} library \citep{maziarz2023re}
and run with a fixed budget of calls to $B$.
MCTS and retro* were configured to maximize SSP by 
replacing costs or rewards from the backward reaction model $B$
with quantities derived from $\xi_f$ and $\xi_b$ (see 
Appendices~\ref{appendix:configuring previous algorithms}
and \ref{appendix:expt-details:alg-config} for details).
However,
the presence of heuristics makes comparing algorithms difficult.
Because the choice of heuristic will strongly influence an algorithm's behaviour,
we tried to use similar heuristics for all algorithms
to ensure a meaningful comparison.
Specifically,
we tested an \emph{optimistic} heuristic
(which gives the best possible value for each frontier node)
and a heuristic based on the synthetic accessibility (SA) score \citep{ertl2009estimation},
which has been shown to be a good heuristic for retrosynthesis in practice
despite its simplicity \citep{skoraczynski2023critical}.
The SA score heuristic was minimally adapted for each algorithm
to roughly have the same interpretation
(see Appendix~\ref{appendix:expt-details:heuristic-config} for details).

We tested all algorithms on the set of 190 ``hard''
molecules from \citet{chen2020retro},
which do not have straightforward synthesis plans.
Our primary evaluation metric is the SSP values estimated with $k=10\,000$ samples,
averaged over all test molecules.

\subsection{How effective is retro-fallback at maximizing SSP?}\label{sec:experiment:how effective is retro fallback}

\begin{figure}
    \centering
    \includegraphics{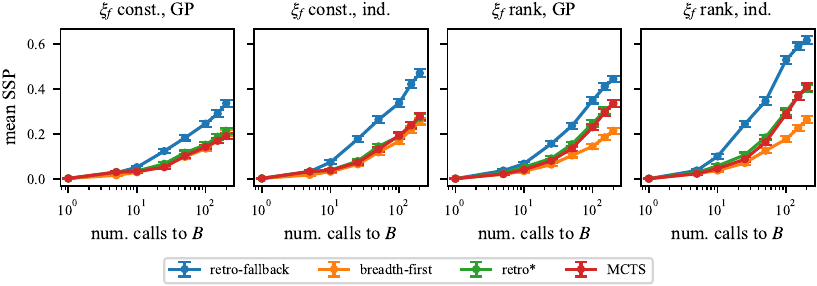}
    \caption{
        Mean SSP across all 190 test molecules vs.\@ time using the
        \emph{SA score} heuristic. 3 trials are done for each molecule.
        Solid lines are sample means (averaged across molecules),
        and error bars represent standard errors.
        ``ind.'' means ``independent''.
    }
    \label{fig:ssp-rs190-sascore}
\end{figure}

Figure~\ref{fig:ssp-rs190-sascore} plots the average SSP for all test
molecules as a function of the number of calls to the reaction model $B$
using the SA score heuristic.
Retro-fallback clearly outperforms the other algorithms in all scenarios
by a significant margin.
The difference is particularly large for the feasibility models
with no correlations between reactions (``ind.'').
We suspect this is because the reaction model $B$ tends to output
many similar reactions,
which can be used to form backup plans when feasibility outcomes are independent.
Retro-fallback will naturally be steered towards these plans.
However, when GP-induced correlations are introduced,
these backup plans disappear (or become less effective),
since similar reactions will likely both be feasible
or both be infeasible.
The same trends are visible when using the optimistic heuristic
(Figure~\ref{fig:ssp-rs190-optimistic})
and on a test set of easier molecules (Figure~\ref{fig:ssp-guacamol-all-heuristics})
Overall, this result shows us what we expect:
that retro-fallback maximizes the metric it was specifically designed to maximize
more effectively than baseline algorithms.

We investigate the origin of these performance differences in Appendix~\ref{appendix:expts:individual-mols}
by plotting SSP over time 
for a small selection of molecules (repeated over several trials).
It appears that, rather than retro-fallback being consistently a little bit better,
the performance gap is driven by a larger difference for a small number of molecules.
This is actually not surprising: the advantage of different approaches will vary depending on the graph,
and for some graphs finding individual feasible plans is probably a promising strategy.

A natural follow-up question is whether retro-fallback also performs well by metrics other than SSP.
In Figures~\ref{fig:most-feasible-route}--\ref{fig:fraction solved}
we plot the highest success probability of any \emph{individual} synthesis plan found,
plus two metrics frequently used by previous papers: the fraction of molecules with \emph{any} synthesis plan (called ``fraction solved'' in prior works) and
the length of the shortest synthesis plan found (a proxy for quality).
The SSP of the single best plan is generally similar for all algorithms.
This suggests that in general all algorithms find similar ``best'' plans,
and retro-fallback's extra success comes from finding more effective ``backup'' plans.
Retro-fallback seems slightly better than other algorithms in terms of fraction solved
and similar to other algorithms in terms of shortest plan length (although retro* is better in some cases).
Finally, Appendix~\ref{appendix:expts:fusionretro}
shows that retro-fallback is able to find
synthesis plans which use the same starting molecules as real-world syntheses:
a metric proposed by \citet{liu2023fusionretro}.
Overall, these results suggest that retro-fallback is also an effective search algorithm if metrics from past papers which do not account for uncertainty are used.

\subsection{Speed and variability of retro-fallback}
\label{sec:experiment:speed and variability}

First we consider the speed of retro-fallback.
Retro-fallback requires calculating $\rs$,
$\psi$, and $\rho$ for every node at every iteration.
The complexity of this calculation could scale linearly
with the number of nodes in the graph (which we denote $|\gG'|$),
or potentially sub-linearly if the $\rs$/$\psi$/$\rho$
values for many nodes do not change every iteration.
Therefore, from this step we would expect a time complexity
which is between linear and quadratic in $|\gG'|$.
However, retro-fallback also requires sampling $f$
and $b$ for all nodes created during an expansion:
a process which will scale as $\mathcal O(1)$
for independent models
and $\mathcal O (|\gG'|^2)$ for GP-correlated models.
This yields an overall  $\mathcal O (|\gG'|)$--$\mathcal O (|\gG'|^3)$
complexity from the sampling step.
Figure~\ref{fig:retro-fallback runtimes}
plots the empirical scaling for the experiments from the previous
section, and suggests an overall scaling between
$\mathcal O(|\gG'|^{1.1})$--$\mathcal O(|\gG'|^{1.8})$,
with considerable variation between different feasibility models
and heuristics.

To study the effect of the number of samples $k$ from $\xi_f$ and $\xi_b$,
we run retro-fallback 10 times on a sub-sample of  25 molecules with a variety of different sample sizes.
Figure~\ref{fig:retro-fallback variability} shows that as $k$ decreases, the mean SSP value
achieved by retro-fallback
decreases and the variance of SSP increases.
This is not surprising, since when the number of samples is small
the internal estimates of SSP used by retro-fallback
deviate more from their expected values,
enabling suboptimal decisions.
Empirically, $k>100$ seems sufficient (minimal further improvement is seen for higher $k$).

\section{Discussion, Limitations, and Future Work}

In this paper we reformulated retrosynthesis using stochastic processes,
presented a novel evaluation metric called ``successful synthesis probability'' (SSP),
and proposed a novel algorithm called retro-fallback which greedily maximizes SSP.
In our experiments, retro-fallback was more effective at maximizing SSP
than previously-proposed algorithms.

Our work has some important limitations.
Conceptually,
chemists may also care about the length or quality of synthesis plans,
and may only be willing to consider a limited number of backup plans.
These considerations do not fit into our formalism.
Practically,
retro-fallback is slower than other algorithms and may not scale as well.
We discuss these limitations further in Appendix~\ref{appendix:limitations}.

The most important direction for future work
is creating better models of reaction feasibility,
as without high-quality models the estimates of SSP are not meaningful.
We see collaborations with domain experts as the best route to achieve this.
Since retro-fallback uses a search heuristic,
learning this heuristic using the results of past searches (``self-play'')
would likely improve performance.
We elaborate on other potential directions for future work in Appendix~\ref{appendix:future-work}.

Overall, even though retro-fallback is far from perfect,
we believe that modelling uncertainty about reaction outcomes is at least
a step in the right direction,
and hope it inspires further work in this area.

\clearpage
\subsubsection*{Ethics}
Our work is foundational algorithm development and we do not see any direct ethical implications.
The most likely use case for our algorithm is to automate the production of synthesis plans in drug discovery,
which we hope can aid the development of new medicines.
We acknowledge the possibility that such algorithms could be used by bad actors to develop harmful chemicals,
but do not see this as a probable outcome:
countless harmful chemicals already exist and can be readily obtained.
It is therefore hard to imagine why bad actors would expend significant effort to develop new harmful chemicals with complicated syntheses.

\subsubsection*{Reproducibility}
We aim for a high standard of reproducibility in this work.
We explicitly state our proposed algorithm in the paper (Algorithm~\ref{alg:retro-fallback-full})
and dedicate Appendix~\ref{appendix:retro-fallback-details} to discussing its minor (but still important) details,
including guidance for future implementations (\ref{appendix:rfb-details:implementation}).
Proofs of all theorems are given in Appendix~\ref{appendix:proofs}.
The experimental setup is described in more detail in Appendix~\ref{appendix:experimental details} (including hyperparameters, etc).
Code to reproduce all experiments\footnote{
Note that because all algorithms in the paper use randomness, re-running the code is unlikely to reproduce our \emph{exact} results.
}
is available at: \\
\retrofallbackcodeurl{}.

Our code was thoroughly tested with unit tests and builds on libraries which are widely-used,
minimizing the chance that our results are corrupted by software errors.
We include the results generated by our code in \texttt{json} format,
and also include code to read the results and reproduce the plots\footnote{
Because we include the exact data, the reproduction of the plots will be exact.
We were inspired to include this by the thought-provoking
paper of \citet{burnell2023rethink}.
} from the paper.
The inclusion of raw data will freely allow future researchers to perform alternative analyses.

Note that this paper will be kept updated at
\rfbarXivURL{}.

\ificlrfinal
\subsubsection*{Author Contributions}
The original idea of SSP was proposed by Sarah and jointly developed by
Sarah, Austin, Krzysztof, and Marwin.
Sarah and Austin jointly developed an initial version of retro-fallback for AND/OR trees.
Sarah originally proposed an algorithm using samples in a different context.
Austin adapted these two algorithms to yield the version of retro-fallback
proposed in this paper.
Krzysztof proposed and proved Theorem~\ref{thm:success prob is NP hard}.
Writing was done collaboratively but mostly by Austin.
All code was written by Austin with helpful code review from Krzysztof.
Marwin and José Miguel advised the project.
Marwin in particular provided helpful feedback about MCTS estimated feasibility of chemical reactions from the model.
José Miguel provided extensive feedback on the algorithm details and the clarity of writing.

\subsubsection*{Acknowledgments}
Thanks to Katie Collins for proofreading the manuscript and providing helpful feedback.
Austin Tripp acknowledges funding via a C~T Taylor Cambridge International Scholarship and the  Canadian Centennial Scholarship Fund.
José Miguel Hernández-Lobato acknowledges support from a Turing AI Fellowship under grant EP/V023756/1.

Austin is grateful for the affordable meals (with generous portion sizes) from
Queens' College Cambridge which greatly expedited the creation of this manuscript.

\fi

\bibliography{iclr2024_conference}

\begin{thebibliography}{61}
\providecommand{\natexlab}[1]{#1}
\providecommand{\url}[1]{\texttt{#1}}
\expandafter\ifx\csname urlstyle\endcsname\relax
  \providecommand{\doi}[1]{doi: #1}\else
  \providecommand{\doi}{doi: \begingroup \urlstyle{rm}\Url}\fi

\bibitem[Bradshaw et~al.(2019)Bradshaw, Paige, Kusner, Segler, and
  Hern{\'a}ndez-Lobato]{bradshaw2019model}
John Bradshaw, Brooks Paige, Matt~J Kusner, Marwin Segler, and Jos{\'e}~Miguel
  Hern{\'a}ndez-Lobato.
\newblock A model to search for synthesizable molecules.
\newblock \emph{Advances in Neural Information Processing Systems}, 32, 2019.

\bibitem[Bradshaw et~al.(2020)Bradshaw, Paige, Kusner, Segler, and
  Hern{\'a}ndez-Lobato]{bradshaw2020barking}
John Bradshaw, Brooks Paige, Matt~J Kusner, Marwin Segler, and Jos{\'e}~Miguel
  Hern{\'a}ndez-Lobato.
\newblock Barking up the right tree: an approach to search over molecule
  synthesis dags.
\newblock \emph{Advances in neural information processing systems},
  33:\penalty0 6852--6866, 2020.

\bibitem[Brown et~al.(2019)Brown, Fiscato, Segler, and
  Vaucher]{brown2019guacamol}
Nathan Brown, Marco Fiscato, Marwin~HS Segler, and Alain~C Vaucher.
\newblock Guacamol: benchmarking models for de novo molecular design.
\newblock \emph{Journal of chemical information and modeling}, 59\penalty0
  (3):\penalty0 1096--1108, 2019.

\bibitem[Burnell et~al.(2023)Burnell, Schellaert, Burden, Ullman,
  Martinez-Plumed, Tenenbaum, Rutar, Cheke, Sohl-Dickstein, Mitchell,
  et~al.]{burnell2023rethink}
Ryan Burnell, Wout Schellaert, John Burden, Tomer~D Ullman, Fernando
  Martinez-Plumed, Joshua~B Tenenbaum, Danaja Rutar, Lucy~G Cheke, Jascha
  Sohl-Dickstein, Melanie Mitchell, et~al.
\newblock Rethink reporting of evaluation results in ai.
\newblock \emph{Science}, 380\penalty0 (6641):\penalty0 136--138, 2023.

\bibitem[Chakrabarti(1994)]{chakrabarti1994algorithms}
PP~Chakrabarti.
\newblock Algorithms for searching explicit and/or graphs and their
  applications to problem reduction search.
\newblock \emph{Artificial Intelligence}, 65\penalty0 (2):\penalty0 329--345,
  1994.

\bibitem[Chang \& Slagle(1971)Chang and Slagle]{chang1971admissible}
Chin-Liang Chang and James~R. Slagle.
\newblock An admissible and optimal algorithm for searching and/or graphs.
\newblock \emph{Artificial Intelligence}, 2\penalty0 (2):\penalty0 117--128,
  1971.

\bibitem[Chen et~al.(2020)Chen, Li, Dai, and Song]{chen2020retro}
Binghong Chen, Chengtao Li, Hanjun Dai, and Le~Song.
\newblock Retro*: learning retrosynthetic planning with neural guided a*
  search.
\newblock In \emph{International Conference on Machine Learning}, pp.\
  1608--1616. PMLR, 2020.

\bibitem[Chen \& Jung(2021)Chen and Jung]{chen2021deep}
Shuan Chen and Yousung Jung.
\newblock Deep retrosynthetic reaction prediction using local reactivity and
  global attention.
\newblock \emph{JACS Au}, 1\penalty0 (10):\penalty0 1612--1620, 2021.

\bibitem[Coley et~al.(2018)Coley, Rogers, Green, and Jensen]{coley2018scscore}
Connor~W Coley, Luke Rogers, William~H Green, and Klavs~F Jensen.
\newblock Scscore: synthetic complexity learned from a reaction corpus.
\newblock \emph{Journal of chemical information and modeling}, 58\penalty0
  (2):\penalty0 252--261, 2018.

\bibitem[Coley et~al.(2019{\natexlab{a}})Coley, Green, and
  Jensen]{coley2019rdchiral}
Connor~W Coley, William~H Green, and Klavs~F Jensen.
\newblock Rdchiral: An rdkit wrapper for handling stereochemistry in
  retrosynthetic template extraction and application.
\newblock \emph{Journal of chemical information and modeling}, 59\penalty0
  (6):\penalty0 2529--2537, 2019{\natexlab{a}}.

\bibitem[Coley et~al.(2019{\natexlab{b}})Coley, Thomas~III, Lummiss, Jaworski,
  Breen, Schultz, Hart, Fishman, Rogers, Gao, et~al.]{coley2019robotic}
Connor~W Coley, Dale~A Thomas~III, Justin~AM Lummiss, Jonathan~N Jaworski,
  Christopher~P Breen, Victor Schultz, Travis Hart, Joshua~S Fishman, Luke
  Rogers, Hanyu Gao, et~al.
\newblock A robotic platform for flow synthesis of organic compounds informed
  by ai planning.
\newblock \emph{Science}, 365\penalty0 (6453):\penalty0 eaax1566,
  2019{\natexlab{b}}.

\bibitem[Corey \& Wipke(1969)Corey and Wipke]{corey1969computer}
Elias~James Corey and W~Todd Wipke.
\newblock Computer-assisted design of complex organic syntheses: Pathways for
  molecular synthesis can be devised with a computer and equipment for
  graphical communication.
\newblock \emph{Science}, 166\penalty0 (3902):\penalty0 178--192, 1969.

\bibitem[Dai et~al.(2019)Dai, Li, Coley, Dai, and Song]{dai2019retrosynthesis}
Hanjun Dai, Chengtao Li, Connor Coley, Bo~Dai, and Le~Song.
\newblock Retrosynthesis prediction with conditional graph logic network.
\newblock \emph{Advances in Neural Information Processing Systems}, 32, 2019.

\bibitem[Ertl \& Schuffenhauer(2009)Ertl and Schuffenhauer]{ertl2009estimation}
Peter Ertl and Ansgar Schuffenhauer.
\newblock Estimation of synthetic accessibility score of drug-like molecules
  based on molecular complexity and fragment contributions.
\newblock \emph{Journal of cheminformatics}, 1:\penalty0 1--11, 2009.

\bibitem[Gao et~al.(2021)Gao, Mercado, and Coley]{gao2021amortized}
Wenhao Gao, Roc{\'\i}o Mercado, and Connor~W Coley.
\newblock Amortized tree generation for bottom-up synthesis planning and
  synthesizable molecular design.
\newblock In \emph{International Conference on Learning Representations}, 2021.

\bibitem[Genheden et~al.(2020)Genheden, Thakkar, Chadimov{\'a}, Reymond,
  Engkvist, and Bjerrum]{genheden2020aizynthfinder}
Samuel Genheden, Amol Thakkar, Veronika Chadimov{\'a}, Jean-Louis Reymond, Ola
  Engkvist, and Esben Bjerrum.
\newblock Aizynthfinder: a fast, robust and flexible open-source software for
  retrosynthetic planning.
\newblock \emph{Journal of cheminformatics}, 12\penalty0 (1):\penalty0 70,
  2020.

\bibitem[Gottipati et~al.(2020)Gottipati, Sattarov, Niu, Pathak, Wei, Liu,
  Blackburn, Thomas, Coley, Tang, et~al.]{gottipati2020learning}
Sai~Krishna Gottipati, Boris Sattarov, Sufeng Niu, Yashaswi Pathak, Haoran Wei,
  Shengchao Liu, Simon Blackburn, Karam Thomas, Connor Coley, Jian Tang, et~al.
\newblock Learning to navigate the synthetically accessible chemical space
  using reinforcement learning.
\newblock In \emph{International conference on machine learning}, pp.\
  3668--3679. PMLR, 2020.

\bibitem[Hagberg et~al.(2008)Hagberg, Swart, and S~Chult]{hagberg2008exploring}
Aric Hagberg, Pieter Swart, and Daniel S~Chult.
\newblock Exploring network structure, dynamics, and function using networkx.
\newblock Technical report, Los Alamos National Lab.(LANL), Los Alamos, NM
  (United States), 2008.

\bibitem[Harris et~al.(2020)Harris, Millman, Van Der~Walt, Gommers, Virtanen,
  Cournapeau, Wieser, Taylor, Berg, Smith, et~al.]{harris2020array}
Charles~R Harris, K~Jarrod Millman, St{\'e}fan~J Van Der~Walt, Ralf Gommers,
  Pauli Virtanen, David Cournapeau, Eric Wieser, Julian Taylor, Sebastian Berg,
  Nathaniel~J Smith, et~al.
\newblock Array programming with numpy.
\newblock \emph{Nature}, 585\penalty0 (7825):\penalty0 357--362, 2020.

\bibitem[Heifets \& Jurisica(2012)Heifets and
  Jurisica]{heifets2012construction}
Abraham Heifets and Igor Jurisica.
\newblock Construction of new medicines via game proof search.
\newblock In \emph{Proceedings of the AAAI Conference on Artificial
  Intelligence}, volume~26, pp.\  1564--1570, 2012.

\bibitem[Hvalica(1996)]{hvalica1996andorsearch}
Dušan Hvalica.
\newblock Best first search algorithm in and/or graphs with cycles.
\newblock \emph{Journal of Algorithms}, 21\penalty0 (1):\penalty0 102--110,
  1996.
\newblock ISSN 0196-6774.
\newblock \doi{https://doi.org/10.1006/jagm.1996.0039}.

\bibitem[Irwin et~al.(2022)Irwin, Dimitriadis, He, and
  Bjerrum]{irwin2022chemformer}
Ross Irwin, Spyridon Dimitriadis, Jiazhen He, and Esben~Jannik Bjerrum.
\newblock Chemformer: a pre-trained transformer for computational chemistry.
\newblock \emph{Machine Learning: Science and Technology}, 3\penalty0
  (1):\penalty0 015022, 2022.

\bibitem[Jim{\'e}nez \& Torras(2000)Jim{\'e}nez and
  Torras]{jimenez2000efficient}
Pablo Jim{\'e}nez and Carme Torras.
\newblock An efficient algorithm for searching implicit and/or graphs with
  cycles.
\newblock \emph{Artificial Intelligence}, 124\penalty0 (1):\penalty0 1--30,
  2000.

\bibitem[Karp(1972)]{Karp1972}
Richard~M. Karp.
\newblock \emph{Reducibility among Combinatorial Problems}, pp.\  85--103.
\newblock Springer US, Boston, MA, 1972.
\newblock ISBN 978-1-4684-2001-2.
\newblock \doi{10.1007/978-1-4684-2001-2_9}.

\bibitem[Kim et~al.(2021)Kim, Ahn, Lee, and Shin]{kim2021self}
Junsu Kim, Sungsoo Ahn, Hankook Lee, and Jinwoo Shin.
\newblock Self-improved retrosynthetic planning.
\newblock In \emph{International Conference on Machine Learning}, pp.\
  5486--5495. PMLR, 2021.

\bibitem[Kishimoto et~al.(2019)Kishimoto, Buesser, Chen, and
  Botea]{kishimoto2019depth}
Akihiro Kishimoto, Beat Buesser, Bei Chen, and Adi Botea.
\newblock Depth-first proof-number search with heuristic edge cost and
  application to chemical synthesis planning.
\newblock \emph{Advances in Neural Information Processing Systems}, 32, 2019.

\bibitem[Landrum et~al.(2023)Landrum, Tosco, Kelley, Ric, sriniker, gedeck,
  Vianello, Cosgrove, NadineSchneider, Kawashima, N, Dalke, Jones, Cole, Swain,
  Turk, AlexanderSavelyev, Vaucher, Wójcikowski, Take, Probst, Scalfani,
  Ujihara, guillaume godin, Pahl, Berenger, JLVarjo, jasondbiggs, strets123,
  and JP]{rdkit2022_09_04}
Greg Landrum, Paolo Tosco, Brian Kelley, Ric, sriniker, gedeck, Riccardo
  Vianello, David Cosgrove, NadineSchneider, Eisuke Kawashima, Dan N, Andrew
  Dalke, Gareth Jones, Brian Cole, Matt Swain, Samo Turk, AlexanderSavelyev,
  Alain Vaucher, Maciej Wójcikowski, Ichiru Take, Daniel Probst, Vincent~F.
  Scalfani, Kazuya Ujihara, guillaume godin, Axel Pahl, Francois Berenger,
  JLVarjo, jasondbiggs, strets123, and JP.
\newblock rdkit/rdkit: 2022\_09\_4 (q3 2022) release, January 2023.

\bibitem[Li \& Chen(2022)Li and Chen]{li2022prediction}
Baiqing Li and Hongming Chen.
\newblock Prediction of compound synthesis accessibility based on reaction
  knowledge graph.
\newblock \emph{Molecules}, 27\penalty0 (3):\penalty0 1039, 2022.

\bibitem[Liu et~al.(2022)Liu, Korablyov, Jastrzebski,
  W{\l}odarczyk-Pruszyn{\'s}ki, Bengio, and Segler]{liu2022retrognn}
Cheng-Hao Liu, Maksym Korablyov, Stanis{\l}aw Jastrzebski, Pawe{\l}
  W{\l}odarczyk-Pruszyn{\'s}ki, Yoshua Bengio, and Marwin Segler.
\newblock Retrognn: fast estimation of synthesizability for virtual screening
  and de novo design by learning from slow retrosynthesis software.
\newblock \emph{Journal of Chemical Information and Modeling}, 62\penalty0
  (10):\penalty0 2293--2300, 2022.

\bibitem[Liu et~al.(2023{\natexlab{a}})Liu, Xue, Xie, Xia, Tripp, Maziarz,
  Segler, Qin, Zhang, and Liu]{liu2023pdvn}
Guoqing Liu, Di~Xue, Shufang Xie, Yingce Xia, Austin Tripp, Krzysztof Maziarz,
  Marwin Segler, Tao Qin, Zongzhang Zhang, and Tie-Yan Liu.
\newblock Retrosynthetic planning with dual value networks.
\newblock In Andreas Krause, Emma Brunskill, Kyunghyun Cho, Barbara Engelhardt,
  Sivan Sabato, and Jonathan Scarlett (eds.), \emph{Proceedings of the 40th
  International Conference on Machine Learning}, volume 202 of
  \emph{Proceedings of Machine Learning Research}, pp.\  22266--22276. PMLR,
  23--29 Jul 2023{\natexlab{a}}.

\bibitem[Liu et~al.(2023{\natexlab{b}})Liu, Tu, Xu, Zhang, Lin, Ying, Tang,
  Zhao, and Wu]{liu2023fusionretro}
Songtao Liu, Zhengkai Tu, Minkai Xu, Zuobai Zhang, Lu~Lin, Rex Ying, Jian Tang,
  Peilin Zhao, and Dinghao Wu.
\newblock {F}usion{R}etro: Molecule representation fusion via in-context
  learning for retrosynthetic planning.
\newblock In Andreas Krause, Emma Brunskill, Kyunghyun Cho, Barbara Engelhardt,
  Sivan Sabato, and Jonathan Scarlett (eds.), \emph{Proceedings of the 40th
  International Conference on Machine Learning}, volume 202 of
  \emph{Proceedings of Machine Learning Research}, pp.\  22028--22041. PMLR,
  23--29 Jul 2023{\natexlab{b}}.
\newblock URL \url{https://proceedings.mlr.press/v202/liu23ah.html}.

\bibitem[MacKay(1992)]{mackay1992practical}
David~JC MacKay.
\newblock A practical bayesian framework for backpropagation networks.
\newblock \emph{Neural computation}, 4\penalty0 (3):\penalty0 448--472, 1992.

\bibitem[Mahanti \& Bagchi(1985)Mahanti and Bagchi]{mahanti1985andor}
A.~Mahanti and A.~Bagchi.
\newblock And/or graph heuristic search methods.
\newblock \emph{J. ACM}, 32\penalty0 (1):\penalty0 28–51, jan 1985.
\newblock ISSN 0004-5411.
\newblock \doi{10.1145/2455.2459}.
\newblock URL \url{https://doi.org/10.1145/2455.2459}.

\bibitem[Martelli \& Montanari(1978)Martelli and
  Montanari]{martelli1978optimizing}
Alberto Martelli and Ugo Montanari.
\newblock Optimizing decision trees through heuristically guided search.
\newblock \emph{Communications of the ACM}, 21\penalty0 (12):\penalty0
  1025--1039, 1978.

\bibitem[Maziarz et~al.(2023)Maziarz, Tripp, Liu, Stanley, Xie, Gai{\'n}ski,
  Seidl, and Segler]{maziarz2023re}
Krzysztof Maziarz, Austin Tripp, Guoqing Liu, Megan Stanley, Shufang Xie, Piotr
  Gai{\'n}ski, Philipp Seidl, and Marwin Segler.
\newblock Re-evaluating retrosynthesis algorithms with syntheseus.
\newblock \emph{arXiv preprint arXiv:2310.19796}, 2023.

\bibitem[Nilsson(1982)]{nilsson1982principles}
N.J. Nilsson.
\newblock \emph{Principles of Artificial Intelligence}.
\newblock Symbolic Computation. Springer Berlin Heidelberg, 1982.
\newblock ISBN 9783540113409.

\bibitem[Papadimitriou \& Yannakakis(1991)Papadimitriou and
  Yannakakis]{papadimitriou1991shortest}
Christos~H Papadimitriou and Mihalis Yannakakis.
\newblock Shortest paths without a map.
\newblock \emph{Theoretical Computer Science}, 84\penalty0 (1):\penalty0
  127--150, 1991.

\bibitem[Paszke et~al.(2019)Paszke, Gross, Massa, Lerer, Bradbury, Chanan,
  Killeen, Lin, Gimelshein, Antiga, et~al.]{paszke2019pytorch}
Adam Paszke, Sam Gross, Francisco Massa, Adam Lerer, James Bradbury, Gregory
  Chanan, Trevor Killeen, Zeming Lin, Natalia Gimelshein, Luca Antiga, et~al.
\newblock Pytorch: An imperative style, high-performance deep learning library.
\newblock \emph{Advances in neural information processing systems}, 32, 2019.

\bibitem[Pearl(1984)]{pearl1984heuristics}
Judea Pearl.
\newblock \emph{Heuristics: intelligent search strategies for computer problem
  solving}.
\newblock Addison-Wesley Longman Publishing Co., Inc., 1984.

\bibitem[Pedregosa et~al.(2011)Pedregosa, Varoquaux, Gramfort, Michel, Thirion,
  Grisel, Blondel, Prettenhofer, Weiss, Dubourg, Vanderplas, Passos,
  Cournapeau, Brucher, Perrot, and Duchesnay]{scikit-learn}
F.~Pedregosa, G.~Varoquaux, A.~Gramfort, V.~Michel, B.~Thirion, O.~Grisel,
  M.~Blondel, P.~Prettenhofer, R.~Weiss, V.~Dubourg, J.~Vanderplas, A.~Passos,
  D.~Cournapeau, M.~Brucher, M.~Perrot, and E.~Duchesnay.
\newblock Scikit-learn: Machine learning in {P}ython.
\newblock \emph{Journal of Machine Learning Research}, 12:\penalty0 2825--2830,
  2011.

\bibitem[Rogers \& Hahn(2010)Rogers and Hahn]{rogers2010extended}
David Rogers and Mathew Hahn.
\newblock Extended-connectivity fingerprints.
\newblock \emph{Journal of chemical information and modeling}, 50\penalty0
  (5):\penalty0 742--754, 2010.

\bibitem[Sacha et~al.(2021)Sacha, B{\l}az, Byrski, Dabrowski-Tumanski,
  Chrominski, Loska, W{\l}odarczyk-Pruszynski, and
  Jastrzebski]{sacha2021molecule}
Miko{\l}aj Sacha, Miko{\l}aj B{\l}az, Piotr Byrski, Pawe{\l}
  Dabrowski-Tumanski, Miko{\l}aj Chrominski, Rafa{\l} Loska, Pawe{\l}
  W{\l}odarczyk-Pruszynski, and Stanis{\l}aw Jastrzebski.
\newblock Molecule edit graph attention network: modeling chemical reactions as
  sequences of graph edits.
\newblock \emph{Journal of Chemical Information and Modeling}, 61\penalty0
  (7):\penalty0 3273--3284, 2021.

\bibitem[Schneider et~al.(2015)Schneider, Lowe, Sayle, and
  Landrum]{schneider2015development}
Nadine Schneider, Daniel~M Lowe, Roger~A Sayle, and Gregory~A Landrum.
\newblock Development of a novel fingerprint for chemical reactions and its
  application to large-scale reaction classification and similarity.
\newblock \emph{Journal of chemical information and modeling}, 55\penalty0
  (1):\penalty0 39--53, 2015.

\bibitem[Schwaller et~al.(2020)Schwaller, Petraglia, Zullo, Nair, Haeuselmann,
  Pisoni, Bekas, Iuliano, and Laino]{schwaller2020predicting}
Philippe Schwaller, Riccardo Petraglia, Valerio Zullo, Vishnu~H Nair,
  Rico~Andreas Haeuselmann, Riccardo Pisoni, Costas Bekas, Anna Iuliano, and
  Teodoro Laino.
\newblock Predicting retrosynthetic pathways using transformer-based models and
  a hyper-graph exploration strategy.
\newblock \emph{Chemical science}, 11\penalty0 (12):\penalty0 3316--3325, 2020.

\bibitem[Segler \& Waller(2017)Segler and Waller]{segler2017neural}
Marwin~HS Segler and Mark~P Waller.
\newblock Neural-symbolic machine learning for retrosynthesis and reaction
  prediction.
\newblock \emph{Chemistry--A European Journal}, 23\penalty0 (25):\penalty0
  5966--5971, 2017.

\bibitem[Segler et~al.(2018)Segler, Preuss, and Waller]{segler2018planning}
Marwin~HS Segler, Mike Preuss, and Mark~P Waller.
\newblock Planning chemical syntheses with deep neural networks and symbolic
  ai.
\newblock \emph{Nature}, 555\penalty0 (7698):\penalty0 604--610, 2018.

\bibitem[Seidl et~al.(2021)Seidl, Renz, Dyubankova, Neves, Verhoeven, Segler,
  Wegner, Hochreiter, and Klambauer]{seidl2021modern}
Philipp Seidl, Philipp Renz, Natalia Dyubankova, Paulo Neves, Jonas Verhoeven,
  Marwin Segler, J{\"o}rg~K Wegner, Sepp Hochreiter, and G{\"u}nter Klambauer.
\newblock Modern hopfield networks for few-and zero-shot reaction template
  prediction.
\newblock \emph{arXiv preprint arXiv:2104.03279}, 2021.

\bibitem[Shibukawa et~al.(2020)Shibukawa, Ishida, Yoshizoe, Wasa, Takasu,
  Okuno, Terayama, and Tsuda]{shibukawa2020compret}
Ryosuke Shibukawa, Shoichi Ishida, Kazuki Yoshizoe, Kunihiro Wasa, Kiyosei
  Takasu, Yasushi Okuno, Kei Terayama, and Koji Tsuda.
\newblock Compret: a comprehensive recommendation framework for chemical
  synthesis planning with algorithmic enumeration.
\newblock \emph{Journal of cheminformatics}, 12\penalty0 (1):\penalty0 1--14,
  2020.

\bibitem[Skoraczy{\'n}ski et~al.(2023)Skoraczy{\'n}ski, Kitlas, Miasojedow, and
  Gambin]{skoraczynski2023critical}
Grzegorz Skoraczy{\'n}ski, Mateusz Kitlas, B{\l}a{\.z}ej Miasojedow, and Anna
  Gambin.
\newblock Critical assessment of synthetic accessibility scores in
  computer-assisted synthesis planning.
\newblock \emph{Journal of Cheminformatics}, 15\penalty0 (1):\penalty0 6, 2023.

\bibitem[Stanley \& Segler(2023)Stanley and Segler]{stanley2023fake}
Megan Stanley and Marwin Segler.
\newblock Fake it until you make it? generative de novo design and virtual
  screening of synthesizable molecules.
\newblock \emph{Current Opinion in Structural Biology}, 82:\penalty0 102658,
  2023.

\bibitem[Strieth-Kalthoff et~al.(2020)Strieth-Kalthoff, Sandfort, Segler, and
  Glorius]{strieth2020machine}
Felix Strieth-Kalthoff, Frederik Sandfort, Marwin~HS Segler, and Frank Glorius.
\newblock Machine learning the ropes: principles, applications and directions
  in synthetic chemistry.
\newblock \emph{Chemical Society Reviews}, 49\penalty0 (17):\penalty0
  6154--6168, 2020.

\bibitem[Thakkar et~al.(2021)Thakkar, Chadimov{\'a}, Bjerrum, Engkvist, and
  Reymond]{thakkar2021retrosynthetic}
Amol Thakkar, Veronika Chadimov{\'a}, Esben~Jannik Bjerrum, Ola Engkvist, and
  Jean-Louis Reymond.
\newblock Retrosynthetic accessibility score (rascore)--rapid machine learned
  synthesizability classification from ai driven retrosynthetic planning.
\newblock \emph{Chemical Science}, 12\penalty0 (9):\penalty0 3339--3349, 2021.

\bibitem[Tripp et~al.(2023)Tripp, Bacallado, Singh, and
  Hern\'{a}ndez-Lobato]{tripp2023tanimoto}
Austin Tripp, Sergio Bacallado, Sukriti Singh, and Jos\'{e}~Miguel
  Hern\'{a}ndez-Lobato.
\newblock Tanimoto random features for scalable molecular machine learning.
\newblock In A.~Oh, T.~Neumann, A.~Globerson, K.~Saenko, M.~Hardt, and
  S.~Levine (eds.), \emph{Advances in Neural Information Processing Systems},
  volume~36, pp.\  33656--33686. Curran Associates, Inc., 2023.

\bibitem[Tu et~al.(2023)Tu, Stuyver, and Coley]{tu2023predictive}
Zhengkai Tu, Thijs Stuyver, and Connor~W Coley.
\newblock Predictive chemistry: machine learning for reaction deployment,
  reaction development, and reaction discovery.
\newblock \emph{Chemical Science}, 2023.

\bibitem[Virtanen et~al.(2020)Virtanen, Gommers, Oliphant, Haberland, Reddy,
  Cournapeau, Burovski, Peterson, Weckesser, Bright, et~al.]{virtanen2020scipy}
Pauli Virtanen, Ralf Gommers, Travis~E Oliphant, Matt Haberland, Tyler Reddy,
  David Cournapeau, Evgeni Burovski, Pearu Peterson, Warren Weckesser, Jonathan
  Bright, et~al.
\newblock Scipy 1.0: fundamental algorithms for scientific computing in python.
\newblock \emph{Nature methods}, 17\penalty0 (3):\penalty0 261--272, 2020.

\bibitem[Vleduts(1963)]{vleduts1963concerning}
GE~Vleduts.
\newblock Concerning one system of classification and codification of organic
  reactions.
\newblock \emph{Information Storage and Retrieval}, 1\penalty0 (2-3):\penalty0
  117--146, 1963.

\bibitem[Williams \& Rasmussen(2006)Williams and
  Rasmussen]{williams2006gaussian}
Christopher~KI Williams and Carl~Edward Rasmussen.
\newblock \emph{Gaussian processes for machine learning}.
\newblock MIT press Cambridge, MA, 2006.

\bibitem[Xie et~al.(2022)Xie, Yan, Han, Xia, Wu, Guo, Yang, and
  Qin]{xie2022retrograph}
Shufang Xie, Rui Yan, Peng Han, Yingce Xia, Lijun Wu, Chenjuan Guo, Bin Yang,
  and Tao Qin.
\newblock Retrograph: Retrosynthetic planning with graph search.
\newblock In \emph{Proceedings of the 28th ACM SIGKDD Conference on Knowledge
  Discovery and Data Mining}, pp.\  2120--2129, 2022.

\bibitem[Yu et~al.(2022)Yu, Wei, Kuang, Huang, Yao, and Wu]{yu2022grasp}
Yemin Yu, Ying Wei, Kun Kuang, Zhengxing Huang, Huaxiu Yao, and Fei Wu.
\newblock Grasp: Navigating retrosynthetic planning with goal-driven policy.
\newblock \emph{Advances in Neural Information Processing Systems},
  35:\penalty0 10257--10268, 2022.

\bibitem[Zhong et~al.(2023)Zhong, Yang, and Chen]{zhong2023retrosynthesis}
Weihe Zhong, Ziduo Yang, and Calvin Yu-Chian Chen.
\newblock Retrosynthesis prediction using an end-to-end graph generative
  architecture for molecular graph editing.
\newblock \emph{Nature Communications}, 14\penalty0 (1):\penalty0 3009, 2023.

\bibitem[Zhong et~al.(2022)Zhong, Song, Feng, Liu, Jia, Yao, Wu, Hou, and
  Song]{zhong2022root}
Zipeng Zhong, Jie Song, Zunlei Feng, Tiantao Liu, Lingxiang Jia, Shaolun Yao,
  Min Wu, Tingjun Hou, and Mingli Song.
\newblock Root-aligned smiles: a tight representation for chemical reaction
  prediction.
\newblock \emph{Chemical Science}, 13\penalty0 (31):\penalty0 9023--9034, 2022.

\end{thebibliography}
\bibliographystyle{iclr2024_conference}

\appendix
\newpage

\addcontentsline{toc}{section}{Appendix}
\part{Appendix} %
\parttoc %

\counterwithin{figure}{section}
\counterwithin{table}{section}
\newpage
\section{Summary of Notation}

Although we endeavoured to introduce all notation in the main text of the paper in the section where it is first used,
we re-state the notation here for clarity.
\vspace{0.25cm}

\centerline{\bf General Math}
\vspace{0.2cm}
\bgroup
\def\arraystretch{1.5}
\begin{tabular}{p{1in}p{3.25in}}
$2^S$ & Power set of set $S$ (set of all subsets of $S$) \\
$\probability[\text{event}]$ & Probability of an event \\
$\1_{\text{event}}$ & Indicator function: 1 if ``event'' is True, otherwise 0 \\
$\mathcal O(N^p)$ & Big-O notation (describing scaling of an algorithm) \\
$\tilde {\mathcal O}(N^p)$ & Big-O notation, omitting poly-logarithmic factors (e.g.\@ $\mathcal O(N\log{N})$ is equivalent to $\tilde {\mathcal O}(N)$) \\
\end{tabular}
\egroup
\vspace{0.25cm}

\centerline{\bf Molecules and reactions}
\bgroup
\def\arraystretch{1.5}
\begin{tabular}{p{1in}p{3.25in}}
$m$ & a molecule\\
$r$ & a (single-product) reaction \\
$\gM$ & space of molecules \\
$\gR$ & space of (single-product) reactions \\
$\gI$ & Inventory of buyable molecules \\
$B$ & backward reaction model $\gM\mapsto2^\gR$\\  %
$\rightarrow$ & Forward reaction arrow (e.g.\@ $A+B\rightarrow C$) \\
$\Rightarrow$ & Backward reaction arrow (e.g.\@ $C\Rightarrow A+B$)
\end{tabular}
\egroup
\vspace{0.25cm}

\centerline{\bf Search Graphs}
\bgroup
\def\arraystretch{1.5}
\begin{tabular}{p{1in}p{3.25in}}
$\targetmol$ & target molecule to be synthesized \\
$\gG$ & \emph{implicit} search graph with molecule (OR) nodes in $\gM$ and reaction (AND) nodes in $\gR$ \\
$\gG'$ & \emph{explicit} graph stored and expanded for search. $\gG' \subseteq \gG$\\
$\parents_{\gG'}(x)$ & The parents of molecule or reaction $x$ in $\gG'$ \\
$\children_{\gG'}(x)$ & The children of molecule or reaction $x$ in $\gG'$ \\
$T$ & A synthesis plan in $\gG$ or $\gG'$ (conditions in Appendix~\ref{appendix:synthesis plan details}) \\
$\enumplans_m(\gG')$ & set of all synthesis plans in $\gG'$ that produce molecule $m$ \\
$\enumplans_*(\gG')$ & set of all synthesis plans in $\gG'$ producing \emph{any} molecule \\
\end{tabular}
\egroup
\vspace{0.25cm}

\newpage
\centerline{\bf Feasibility and Buyability}
\bgroup
\def\arraystretch{1.5}
\begin{tabular}{p{1in}p{3.25in}}
$f$ & Feasible function (assigns whether a reaction is feasible) \\
$b$ & Buyable function (assigns whether a molecule is buyable) \\
$\xi_f$ & feasibility stochastic process (distribution over $f$) \\
$\xi_b$ & buyability stochastic process (distribution over $b$) \\
$\sigma(T;f,b)$ & success of an individual synthesis plan $T$ (equation~\ref{eqn:success defn for single synthesis plan}) \\
$\mathrm{SSP}(\mathcal T; \xi_f, \xi_b)$ & Successful synthesis probability: probability that any synthesis plan $T\in\mathcal T$ is successful (equation~\ref{eqn:SSP defn}) \\
$\rs(m;\gG',f,b)$ & Whether a molecule is synthesizable using reactions/starting molecules in $\gG'$,
with feasible/buyable outcomes given by $f,b$. Takes values in $\{0,1\}$. Defined in equations~\ref{eqn:molecule success}--\ref{eqn:reaction success} \\
$\rs(m)$ & Shorthand for $\rs(m;\gG',f,b)$ when $\gG',f,b$ are clear from context. \\
\end{tabular}
\egroup
\vspace{0.25cm}

\centerline{\bf Retro-fallback}
\bgroup
\def\arraystretch{1.5}
\begin{tabular}{p{1in}p{3.25in}}
$h$ & Search heuristic function $\gM\mapsto [0, 1]$ \\
$e_T$ & random expansion function (equation~\ref{eqn:eT expansion function}) \\
$\bar\sigma'(T;\xi_f,\xi_b,h)$ & estimated expected success of synthesis plan $T$ if all its frontier nodes are expanded (equation~\ref{eqn:expansion success single plan}) \\
$\tilde\rho(n;\gG',f,b,h)$ & $\bar\sigma'$ value of best synthesis plan \underline{for $\targetmol$} including node $n$ (equation~\ref{eqn:rho tilde}) \\
$\psi\rho(n;\gG',f,b,h)$ & $\bar\sigma'$ value of best synthesis plan including node $n$ (equation~\ref{eqn:psi tilde}) \\
$\psi(m;\gG',f,b,h)$ & Analytic solution for $\tilde\psi$ under independence assumption. Defined in equations~\ref{eqn:psi molecule}--\ref{eqn:psi reaction} \\
$\rho(m;\gG',f,b,h)$ & Analytic solution for $\tilde\rho$ under independence assumption. Defined in equations~\ref{eqn:rho molecule}--\ref{eqn:rho reaction} \\
$\alpha(m;\gG',\xi_f,\xi_b,h)$ & expected improvement in SSP for expanding a frontier node (equation~\ref{eqn:retro fallback alpha}).
\end{tabular}
\egroup
\vspace{0.25cm}

We also use the following mathematical conventions throughout the paper:
\begin{itemize}
    \item $\log{0}=-\infty$
    \item $\max_{x\in\emptyset}f(x)=-\infty$ (the maximum of an empty set is always $-\infty$)
\end{itemize}
\clearpage
\section{Details of search graphs in retrosynthesis}\label{appendix:search-graphs}

Here we provide details on the type of search graphs considered in this paper.
Section~\ref{sec:background} introduced \emph{AND/OR graphs}.
These are defined more formally in section~\ref{appendix:andor-graph-details}.
Section~\ref{sec:background} also informally introduced \emph{synthesis plans}
without giving a proper definition.
Section~\ref{appendix:synthesis plan details} provides a more precise introduction.
Finally, AND/OR graphs are not the only kind of graph in retrosynthesis:
MCTS typically uses an OR graph, which is introduced in 
section~\ref{appendix:or-graph-details}.

\subsection{AND/OR graphs}\label{appendix:andor-graph-details}

\paragraph{Definition}
An AND/OR graph is a directed graph containing two types of nodes:
AND nodes (corresponding to reactions)
and OR nodes (corresponding to molecules).
The name ``AND/OR'' originates from general literature on search.
``AND'' suggests that \emph{all} child nodes must be ``solved''
for the node to be considered ``solved'',
while ``OR'' suggests that \emph{at least}
one child node must be ``solved'' for the node to be considered ``solved.''
AND nodes correspond to reactions because a reaction requires
\emph{all} input molecules,
while OR nodes correspond to molecules because a molecule
can be made from \emph{any one} reaction.
The connectivity of AND/OR graphs satisfies the following properties:
\begin{enumerate}
    \item Edges only exist between molecules and reactions producing them,
    and between reactions and their reactant molecules.
    This makes the graph \emph{bipartite}.
    \item All reactions have at least one reactant (you cannot make something from nothing).
    \item All reaction nodes \emph{must} be connected to their product and reaction molecule nodes.
    \item No molecule appears as both the product and a reactant in a single reaction.
        This implies that at most a single directed edge will exist between any two nodes.
    \item All molecules are the product of a finite (and bounded) number of reactions,
        and all reactions have a finite (and bounded) number of molecules.
        This means all nodes in and AND/OR graph will always have a finite number of children.
    \item Reactions contain exactly \emph{one} product molecule.
        Although this may seem restrictive,
        the reaction $A+B\rightarrow C+D$ could simply be encoded as two
        reactions: $A+B\rightarrow C$ and $A+B\rightarrow D$.
\end{enumerate}

\paragraph{Reaction cycles: graph and tree formulations}
Reactions can form cycles
(e.g.\@ $A\Rightarrow B \Rightarrow A \Rightarrow\ldots$).
There are two general ways of handling such cycles,
illustrated in Figure~\ref{fig:graphs-vs-trees-schematic}.
One is to simply allow the graph to have cycles
(Figure~\ref{fig:graphs-vs-trees-schematic}a).
The second is to ``unroll'' all cycles into a tree
(Figure~\ref{fig:graphs-vs-trees-schematic}b).
Both of these ways are legitimate.
Importantly however,
in the tree version of an AND/OR graph \emph{molecules and reactions may not be unique}:
i.e.\@ a molecule or reaction may occur multiple times in the graph.
Because of this, in general, we assume that all functions in this paper
which act on the space of molecules or reactions
(e.g.\@ $\rs(\cdot)$, $\psi(\cdot)$)
really act on the space of molecule and reaction \emph{nodes}.
For example, this means that the same molecule may have different
$\psi$ values at different locations in the graph.

\paragraph{Implicit Search Graph}
In section~\ref{sec:background} we stated that
the implicit search graph $\gG$ is defined
by the backward reaction model.
This can be either a (possibly cyclic) graph
or a tree (as described above).
In either case, the nodes in $\gG$ are defined in the following way:
\begin{enumerate}
    \item $\gG$ contains the target molecule $\targetmol$.
    \item If a molecule $m\in\gG$, then all reactions in $B(m)$
        are also in $\gG$.
    \item If $\gG$ contains the reaction $r$ (with product $m$),
        then $r\in B(m)$ (i.e.\@ $r$ is the output of the backward model).
\end{enumerate}
$\gG$ can also be defined constructively by initializing $\gG_0\gets\{\targetmol\}$,
and $\gG_{i+1}$ is produced by using $B(m)$ to add reactions to add leaf nodes in $\gG_i$.
$\gG$ is the product of repeating this infinitely many times (essentially $\gG_\infty$).
Note that by this definition, $\gG$ could very well be an infinite graph,%
\footnote{
    Hence the limit $\lim_{i\to\infty}\gG_i$ is not well-defined.
}
especially if the tree construction is used
(the presence of a single cycle will make $\gG$ infinitely deep).
This procedure guarantees the following properties of $\gG$:
\begin{enumerate}
    \item $\gG$ is (weakly) connected.
        Every node in $\gG$ is reachable from the target molecule $\targetmol$
        by following directed edges.
    \item Only molecule nodes will have no children
        (reaction nodes always include their children).
\end{enumerate}

\begin{figure}[t]
    \centering
    \begin{minipage}{.48\textwidth}
    \centering
    
    \begin{tikzpicture}[node distance=0.5cm]
          \node[ornode] (target) at (0, 0) {$\targetmol$};

          \node[andnode, below left = of target] (r1) {$r_1$};
          \draw[->] (target) -- (r1);
          \node[ornode, below = of r1] (ma) {$m_a$};
          \draw[->] (r1) -- (ma);
          
          \node[andnode, below = of ma] (r2) {$r_2$};
          \draw[->] (ma) -- (r2);
          \node[ornode, below = of r2] (mb) {$m_b$};
          \draw[->] (r2) -- (mb);

          \node[andnode, below right = of target] (r3) {$r_3$};
          \draw[->] (target) -- (r3);
          \node[ornode, below right = of r3] (mc) {$m_c$};
          \draw[->] (r3) -- (mc);

          \node[andnode, above = of mc] (r4) {$r_4$};
          \draw[->] (mc) -- (r4);
          \draw[->] (r4) -- (target);
    \end{tikzpicture}

    \vfill
    a) Graph version
\end{minipage}
\begin{minipage}{.48\textwidth}
    \centering
    \begin{tikzpicture}[node distance=0.5cm]
          \node[ornode] (target) at (0, 0) {$\targetmol$};

          \node[andnode, below left = of target] (r1) {$r_1$};
          \draw[->] (target) -- (r1);
          \node[ornode, below = of r1] (ma) {$m_a$};
          \draw[->] (r1) -- (ma);
          
          \node[andnode, below = of ma] (r2) {$r_2$};
          \draw[->] (ma) -- (r2);
          \node[ornode, below = of r2] (mb) {$m_b$};
          \draw[->] (r2) -- (mb);

          \node[andnode, below right = of target] (r3) {$r_3$};
          \draw[->] (target) -- (r3);
          \node[ornode, below = of r3] (mc) {$m_c$};
          \draw[->] (r3) -- (mc);

          \node[andnode, below = of mc] (r4) {$r_4$};
          \draw[->] (mc) -- (r4);
          \node[ornode, below = of r4] (target2) {$\targetmol$};
          \draw[->] (r4) -- (target2);
          \node[andnode, below = of target2] (r3-2) {$r_3$};
          \draw[->] (target2) -- (r3-2);
          \node[labelnode, below = of r3-2] (dots) {$\vdots$};
          \draw[->] (r3-2) -- (dots);
    \end{tikzpicture}

    b) Tree version

\end{minipage}
    \caption{
        Graph and tree representation of the reaction set
        $\targetmol\Rightarrow m_a$ ($r_1$),
        $m_a\Rightarrow m_b$ ($r_2$),
        $\targetmol\Rightarrow m_c$ ($r_3$),
        and
        $m_c\Rightarrow \targetmol$ ($r_4$).
    }
    \label{fig:graphs-vs-trees-schematic}
\end{figure}
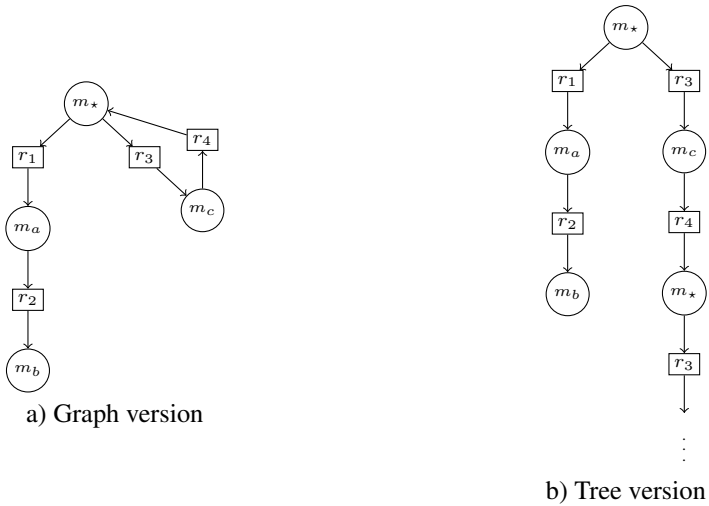

\paragraph{Explicit Search Graph}
The explicit search graph $\gG'\subseteq\gG$ is a
subgraph of $\gG$ which is stored explicitly during search.
$\gG'$ will satisfy the following properties: 
\begin{enumerate}
    \item $\gG'$ will always include the target molecule $\targetmol$.
    \item $\gG'$ always contains a finite number of nodes and edges
        (even if $\gG$ is infinite).
    \item If two nodes $n_1,n_2\in\gG'$, $\gG'$ will contain an edge
        between $n_1$ and $n_2$ \emph{if any only if}
        $\gG$ contains an edge between $n_1$ and $n_2$.
        This means $\gG'$ will not ``drop'' any edges originally present in $\gG$.
    \item For a node $n\in\gG'$,
        if $\children_{\gG'}(n)\neq\emptyset$,
        then no further children can be added to $n$.
        This means that nodes are only expanded once.
        Typically $\children_{\gG'}(n)=\children_\gG(n)$,
        although some children may potentially be excluded.
    \item If a reaction $r\in\gG'$,
        then $\children_{\gG'}(r)=\children_\gG(r)$.
        This means that reaction nodes will \emph{always} be expanded.
        Expansion of a molecule node $m$ therefore involves adding all reactions
        in $B(m)$ and also adding the reactant molecules for all those reactions.
        This also means that all leaf nodes (i.e.\@ childless nodes)
        in $\gG'$ will be molecule nodes.
    \item The conditions above imply that $\gG'$ will also be weakly connected,
        with every node reachable from $\targetmol$.
\end{enumerate}

\paragraph{Frontier}
The \emph{frontier} of the implicit graph $\gG'$
is the set of nodes whose children in $\gG$ have not been added
to $\gG'$:
i.e.\@ they are non-expanded.
The frontier is denoted by $\frontier(\gG')$.
Note that the frontier of the implicit search graph $\gG$
is empty by definition: the graph is ``maximally expanded''
(and therefore possibly infinite in size).
However, there is one ambiguous case which must be dealt with in practice:
molecules $m$ which are not the product of any reaction
(i.e.\@ $B(m)=\emptyset$).
These nodes have no children in $\gG$,
and therefore expanding them will not add any children.
In practice, because $B(m)$ is only calculated upon expansion,
we treat these molecules as on the frontier when they are first added
to the graph,
and then remove them from the frontier when they are expanded.
This has two practical implications:
\begin{enumerate}
    \item $\frontier(\gG')$ is not just the set of leaf nodes in $\gG'$:
        there may be some leaf nodes not on the frontier because they have no child reactions.
    \item The frontier therefore cannot be determined simply by looking at the connectivity of $\gG'$.
        Our code therefore explicitly stores whether each leaf node is on the frontier.
        However, $\frontier(\gG')$ will always be a \emph{subset}
        of all leaf nodes in $\gG'$.
\end{enumerate}

\subsection{Synthesis plans}\label{appendix:synthesis plan details}

Synthesis plans were defined informally in section~\ref{sec:background}.
More formally, a synthesis plan $T$ is a subgraph of $\gG$
(either as a graph or a tree) satisfying the following properties:
\begin{enumerate}
    \item $T$ is a weakly connected subgraph of $\gG$.
    \item If two nodes $n_1,n_2\in T$ share an edge in $\gG$,
        this edge will also be present in $T$.
    \item $T$ is acyclic. This means there will always be a meaningful order to ``execute'' $T$.
    \item Every molecule $m\in T$ has \emph{at most} one child reaction
        (which will also be in $\gG$).
        This means that there is a unique way to make each molecule in $T$.
        Molecules with no child reactions form the frontier $\frontier(T)$.\footnote{
            This is almost identical to the definition of the frontier
            for AND/OR graphs from section~\ref{appendix:andor-graph-details},
            except all leaf nodes are considered frontier nodes
            (i.e.\@ there no special treatment for leaf nodes
            with no children in $\gG$).
        }
    \item If a reaction $r\in T$, then all its children
        must all be in $T$ (i.e.\@ $\children_\gG(r)\subseteq T$).
    \item $T$ contains a special ``root molecule'' $m_\dagger$,
        which can be thought of as the ``product'' of the synthesis plan.
        $m_\dagger$ has no parents in $T$ (hence the plan ``terminates'' at $m_\dagger$).
\end{enumerate}
Note in particular that each molecule $m$ has a valid synthesis plan $\{m\}$,
i.e.\@ a singleton synthesis plan.
This is not a mistake; such synthesis plans would be a prudent choice
for buyable molecules (and would correspond to just buying the molecule).
\citet{jimenez2000efficient} contains a more thorough discussion of what constitutes a synthesis plan (although using different terminology).

$\enumplans_m(\gG')$ denotes the set of all synthesis plans in $\gG'$
whose product is $m$.
$\enumplans_m(\gG')$ can be enumerated recursively,
for example using Algorithm~\ref{alg:synthesis-plan-enumeration}.%
\footnote{
    However, this particular algorithm as stated may not terminate unless care is taken to avoid
    subgraphs which include $m_\dagger$
    when recursively computing synthesis plans for reactant molecules.
}
\citet{shibukawa2020compret} provides another method to do this.

\begin{algorithm}[h]
\caption{Simple algorithm to enumerate all synthesis plans.}
\label{alg:synthesis-plan-enumeration}
\begin{algorithmic}[1]
\REQUIRE Explicit graph $\gG'$, product molecule $m_\dagger$.
\STATE $S\gets\emptyset$\hfill\COMMENT{Initialize the set of plans to be returned}
\STATE $S\gets S\cup \{m_\dagger\}$\hfill\COMMENT{Include the ``singleton'' plan}
\FOR{$r\in\children_{\gG'}(m_\dagger)$}
    \STATE $m_1,\ldots,m_\ell\gets\children_{\gG'}(r)$\hfill\COMMENT{Extract reactant molecules}
    \STATE Recursively compute $\enumplans_{m_1}(\gG'),\ldots,\enumplans_{m_\ell}(\gG')$
    \FOR{
        every combination of $T_1\in\enumplans_{m_1}(\gG'),
        \ldots,T_\ell\in\enumplans_{m_\ell}(\gG')$
    }
        \STATE Form the subgraph $\tilde T\gets\{m_\dagger,r\}\cup\left[\bigcup_i T_i\right]$
        \IF{$\tilde T$ is acyclic}
            \STATE $S\gets S\cup\{\tilde T\}$\hfill\COMMENT{$\tilde T$ is a valid synthesis plan}
        \ENDIF
    \ENDFOR
\ENDFOR
\RETURN $S$
\end{algorithmic}
\end{algorithm}

\clearpage
\subsection{OR graphs}\label{appendix:or-graph-details}

AND/OR graphs are not the only type of graph in retrosynthesis.
Other works, notably MCTS \citep{segler2018planning},
use OR graphs:
a graph type where each node corresponds to a \emph{set}
of molecules.
Synthesis plans in such graphs are simple paths.
An example of an OR graph (specifically a tree) is given in 
Figure~\ref{fig:or-graph-schematic}.

\begin{figure}[h]
    \centering
    \begin{tikzpicture}[node distance=0.5cm]
      \node[ornode] (target) at (0, 0) {$\{\targetmol\}$};

      \node[ornode, below left = of target] (ma) {$\{m_a\}$};
      \draw[->] (target) -- (ma);
      
      \node[ornode, below = of ma] (mb) {$\{m_b\}$};
      \draw[->] (ma) -- (mb);

      \node[ornode, below right = of target] (mc) {$\{m_c\}$};
      \draw[->] (target) -- (mc);
      
      \node[ornode, below = of mc] (target2) {$\{\targetmol\}$};
      \draw[->] (mc) -- (target2);
      \node[ornode, below = of target2] (mc2) {$\{m_c\}$};
      \draw[->] (target2) -- (mc2);
      \node[labelnode, below = of mc2] (dots) {$\vdots$};
      \draw[->] (mc2) -- (dots);
\end{tikzpicture}
    \caption{OR graph for same set of reactions as Figure~\ref{fig:graphs-vs-trees-schematic}.}
    \label{fig:or-graph-schematic}
\end{figure}
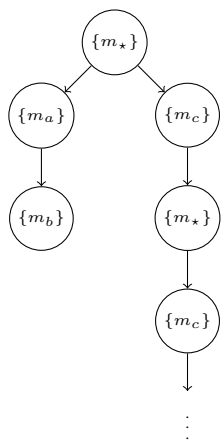

\clearpage
\section{Further details of retro-fallback}\label{appendix:retro-fallback-details}

\subsection{Details of \texorpdfstring{$\rs(\cdot)$, $\psi(\cdot)$ and $\rho(\cdot)$}{s, ψ and ρ}}
\label{appendix:rfbd:s,psi,rho}

\subsubsection{Mnemonics}

The names of $\rs$, $\psi$, and $\rho$
were chosen in an attempt to make them as intuitive as possible:
\begin{itemize}
    \item $\rs$ was chosen to represent ``success''.
    \item  $\psi$
        (\textbf{ps}i) was chosen from the phrase ``\underline{\textbf{p}}robability of \underline{\textbf{s}}uccess,''
        as it can more or less be interpreted as the probability that a node \emph{could} be successful upon expansion.
    \item $\rho$
        was chosen as the probability of success of the \underline{\textbf{ro}}ot molecule of the search graph,
        which is another term for the target molecule $\targetmol$.
\end{itemize}

We hope the reader finds these mnemonics useful.

\subsubsection{Relationships between \texorpdfstring{$\rs(\cdot)$, $\psi(\cdot)$,  and $\rho(\cdot)$}{s, ψ and ρ}}

Note that, for all $T\in\enumplans_*(\gG')$ and $e_T$
\begin{equation}
    \underbrace{\sigma(T;f,b)}_{\text{equation~\ref{eqn:success defn for single synthesis plan}}}
    \leq
    \underbrace{\sigma'(T;f,b,e_T)}_{\text{equation~\ref{eqn:eT expansion function}}}
    \ .
\end{equation}
Taking an expectation with respect to $e_T$ on both sides gives
\begin{equation}
    \sigma(T;f,b)\leq\bar\sigma'(T;f,b,h)\ .
\end{equation}
Noting the relationship between $\rs(\cdot)$ and $\psi(\cdot)$,
this implies for all nodes $n\in\gG'$
and heuristics $h$
\begin{equation}
    \rs(n;\gG',f,b)\leq\psi(n;\gG',f,b,h)\ .
\end{equation}
Finally, since the set of synthesis plans considered for $\tilde\rho$
is a subset of the plans considered for $\tilde\psi$,
it must be that
\begin{equation}
    \rho(n;\gG',f,b,h)\leq\psi(n;\gG',f,b,h)\ .
\end{equation}

\subsubsection{Analytic solutions for \texorpdfstring{$\psi(\cdot)$ and $\rho(\cdot)$}{ψ and ρ}}

First, we state more formally the assumptions under which these analytic solutions hold.
\begin{assumption}[No synthesis plan uses the same molecule twice]
\label{assumption:no synthesis plan uses same molecule twice}
    Assume for all 
    $T\in\enumplans_*(\gG')$,
    if $r_1,r_2\in T$ are distinct reactions in $T$
    (i.e.\@ $r_1\neq r_2$),
    then $\children_T(r_1)\cap \children_T(r_2)=\emptyset$
    (i.e.\@ $r_1$ and $r_2$ share no children in $T$).

    Furthermore, if $m_1,m_2\in T$ are distinct molecule nodes,
    then $m_1$ and $m_2$ are themselves distinct molecules.%
    \footnote{
        This additional statement is necessary for tree-structured search graphs,
        wherein the same molecule may occur in several different nodes.
    }
\end{assumption}
\begin{corollary}
    Under assumption~\ref{assumption:no synthesis plan uses same molecule twice},
    all synthesis plans
    $T\in\enumplans_*(\gG')$
    are \emph{trees}
\end{corollary}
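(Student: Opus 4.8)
The goal is to show that, under Assumption~\ref{assumption:no synthesis plan uses same molecule twice}, every plan $T\in\enumplans_*(\gG')$ is a rooted tree. The plan is to reduce the statement ``$T$ is a tree'' to a single local condition --- that every node of $T$ has at most one parent --- and then verify this condition separately for the two node types. First I would collect the structural facts that hold by the very definition of a synthesis plan (Appendix~\ref{appendix:synthesis plan details}): $T$ is weakly connected, acyclic, and has a distinguished root molecule $m_\dagger$ with no parents in $T$. For a weakly connected directed acyclic graph, being a rooted tree is equivalent to every node having in-degree at most $1$. The justification I would give is that any undirected cycle inside a DAG must contain a ``local sink'' whose two incident cycle-edges both point inward, forcing that node to have in-degree $\ge 2$; hence the in-degree bound rules out undirected cycles, and together with connectivity this yields a tree.

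Next I would dispatch the two node types. For reaction nodes, since the paper restricts attention to single-product reactions, each reaction $r$ has exactly one product molecule, and so $r$ has exactly one parent in $\gG'$ and therefore at most one in $T$. For molecule nodes, I would argue by contradiction: suppose a molecule node $m$ had two distinct parents $r_1\neq r_2$ in $T$. A parent of a molecule is, by the backward-edge convention, a reaction that consumes it as a reactant, so $m\in\children_T(r_1)$ and $m\in\children_T(r_2)$, giving $m\in\children_T(r_1)\cap\children_T(r_2)$. This directly contradicts the first clause of Assumption~\ref{assumption:no synthesis plan uses same molecule twice}, which requires distinct reactions to share no children. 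Hence molecule nodes also have at most one parent.

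The step I expect to require the most care is the tree-structured search-graph case, where a single chemical molecule may be realized by several distinct nodes. In the graph formulation the first clause of the assumption is doing the real work (a molecule could a priori have several parent reactions), whereas in the tree formulation a node already has a unique parent automatically, but a molecule could nonetheless reappear at two different nodes of $T$. This is precisely what the second clause of the assumption forbids, ensuring that the node-level tree we have constructed is also a genuine tree at the level of molecules, with no molecule reused across branches --- exactly the property later needed to treat different branches as independent. With both clauses in hand, every node of $T$ has in-degree at most $1$, and combining this with weak connectivity and acyclicity shows that $T$ is a rooted tree, completing the argument.
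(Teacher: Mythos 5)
Your proof is correct, and at its core it follows the same route as the paper's: acyclicity and weak connectivity come for free from the definition of a synthesis plan (Appendix~\ref{appendix:synthesis plan details}), tree-ness is reduced to a per-node bound on the number of parents, and the first clause of Assumption~\ref{assumption:no synthesis plan uses same molecule twice} rules out a molecule node with two reaction parents. However, your write-up is tighter than the paper's in two ways worth recording. First, the reduction itself: you state the correct criterion (in a weakly connected DAG, in-degree at most $1$ everywhere implies tree) and justify it with the local-sink argument, whereas the paper asserts it suffices that ``no two nodes share the same parent'' --- an out-degree condition that is not the right one (reaction nodes with several reactants violate it harmlessly) and is not what the paper's own case analysis then verifies. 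Second, you make the reaction-node case explicit via the single-product property, which the paper never mentions; this step is genuinely needed, since ruling out only molecules with two reaction children (the paper's first case, from the definition) and molecules with two reaction parents (the assumption) still admits an undirected cycle in which two molecules are joint products of one reaction and joint reactants of another, and only the single-product fact excludes it. Conversely, the ``at most one child reaction per molecule'' property that the paper invokes plays no role in your argument, so your proof uses strictly fewer ingredients. Your closing observation about the second clause of the assumption (distinct molecule nodes are distinct molecules, which is what keeps the node-level tree meaningful when the search graph itself is tree-structured and molecules can be duplicated) correctly identifies its purpose and matches the paper's own footnote on this point.
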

\begin{proof}
    Synthesis plans were explicitly defined to be acyclic in Appendix~\ref{appendix:synthesis plan details},
    so it is sufficient to show that no two nodes in an arbitrary $T$ share the same parent.
    Two reaction nodes cannot share the same parent (synthesis plans were defined as having at most
    one child for each molecule node),
    and if a molecule node has two reaction parents
    assumption~\ref{assumption:no synthesis plan uses same molecule twice}
    would be violated.
\end{proof}

Next, we state a useful lemma for the upcoming proofs about $\psi$ and $\rho$.
This lemma and the following propositions all assume a fixed (and arbitrary) graph $\gG'$
and functions $f$, $b$, $h$.
\begin{lemma}\label{thm:synthesis plan leaf subset inequality}
    If $T_1$ and $T_2$ are synthesis plans,
    and $\frontier(T_1)\subseteq\frontier(T_2)$
    (i.e.\@ all frontier nodes in $T_1$ are in $T_2$),
    then $\bar\sigma'(T_1;f,b,h)\leq\bar\sigma'(T_2;f,b,h)$.
\end{lemma}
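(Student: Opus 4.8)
The plan is to make the expectation defining $\bar\sigma'$ explicit as a product of independent per-node factors, each lying in $[0,1]$, and then observe that passing to a richer plan only multiplies in further factors bounded by $1$. This reduces the statement to the elementary fact that a product of numbers in $[0,1]$ can only shrink when extra such numbers are appended.

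First I would unfold equation~\ref{eqn:sigma' bar}. For fixed $f,b$ the quantity $\sigma'(T;f,b,e_T)$ is the product of the reaction indicator $\1_{f(r)=1\ \forall r\in T}$ with, over $m\in\frontier(T)$, the indicators $\1_{b(m)=1\ \text{or}\ e_T(m)=1}$. Since the coordinates of $e_T$ are mutually independent (equation~\ref{eqn:eT expansion function}), taking the expectation factorises:
\begin{equation}
  \bar\sigma'(T;f,b,h)=\Big(\prod_{r\in T}f(r)\Big)\prod_{m\in\frontier(T)}q(m),\qquad q(m)=\probability_{e_T}\!\left[b(m)=1\ \text{or}\ e_T(m)=1\right].
\end{equation}
A short case check gives $q(m)=1$ when $b(m)=1$, $q(m)=h(m)$ when $b(m)=0$ and $m\in\frontier(\gG')$, and $q(m)=0$ otherwise. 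The two facts to record are that every factor $f(r)$ and $q(m)$ lies in $[0,1]$, and that $q(m)$ is determined by $m$ alone (through $b$, $h$, and membership of $\frontier(\gG')$), so it is the \emph{same} number irrespective of which plan $m$ is viewed inside.

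The comparison step is where orientation matters. To obtain the stated inequality $\bar\sigma'(T_1)\le\bar\sigma'(T_2)$, the factor collection of $T_1$ must \emph{contain} that of $T_2$: every reaction of $T_2$ is a reaction of $T_1$, and every frontier molecule of $T_2$ lies in $\frontier(T_1)$ (the reading compatible with the conclusion, under which the inner plan $T_2$ sits as a subplan of $T_1$ and its frontier molecules are all frontier molecules of $T_1$). Granting this nesting, the plan-independence of $f(r)$ and $q(m)$ lets me split
\begin{equation}
  \bar\sigma'(T_1;f,b,h)=\bar\sigma'(T_2;f,b,h)\cdot\Big(\prod_{r\in T_1\setminus T_2}f(r)\Big)\prod_{m\in\frontier(T_1)\setminus\frontier(T_2)}q(m),
\end{equation}
and the trailing product of $[0,1]$-numbers is at most $1$, giving $\bar\sigma'(T_1)\le\bar\sigma'(T_2)$. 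An equivalent and perhaps cleaner route is a coupling: draw one shared Bernoulli$(h(m))$ variable per molecule of $\frontier(\gG')$, define each $e_T$ by restricting these to $\frontier(\gG')\cap T$, and check that $\{\sigma'(T_1)=1\}\subseteq\{\sigma'(T_2)=1\}$ pointwise (feasibility of $T_2$'s reactions and coverage of its frontier molecules both inherit from those of $T_1$); taking expectations yields the same bound.

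The main obstacle is not the algebra but aligning the nesting and the indices so the sign comes out as claimed: one must verify that $q(m)$ genuinely does not depend on the ambient plan (which uses that both plans are subgraphs of the \emph{same} $\gG'$, so ``$m\in\frontier(\gG')$'' is unambiguous), and that the compared reaction sets and frontier sets are truly nested, with the \emph{larger} plan $T_1$ carrying the extra factors. A naive reading that makes $T_1$ the \emph{smaller} plan would instead append the extra factors to $T_2$ and reverse the inequality, so the care lies entirely in fixing which plan is the one with additional reactions and the deeper frontier; once that is settled, the $[0,1]$-factor bound closes the argument at once.
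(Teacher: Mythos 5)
Your proof is correct, and your reinterpretation of the hypothesis is not a liberty but a necessary repair: as literally stated (frontier containment $\frontier(T_1)\subseteq\frontier(T_2)$, with no condition on reactions) the lemma is false. Concretely, let $m_a,m_b$ be frontier molecules of $\gG'$ with $b(m_a)=1$, $b(m_b)=0$, $h(m_b)=\tfrac12$, and take $T_1=\{m_1,r_1,m_a\}$ (reaction $m_1\Rightarrow m_a$) and $T_2=\{m_2,r_2,m_a,m_b\}$ (reaction $m_2\Rightarrow m_a+m_b$), both reactions feasible; then $\frontier(T_1)=\{m_a\}\subseteq\{m_a,m_b\}=\frontier(T_2)$, yet $\bar\sigma'(T_1;f,b,h)=1>\tfrac12=\bar\sigma'(T_2;f,b,h)$. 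A pair of plans with equal frontiers but one infeasible reaction likewise shows the reaction-nesting condition you add cannot be dropped. The reading you settle on ($T_2$'s reactions and frontier both contained in $T_1$'s) is exactly the configuration in which the paper invokes the lemma — the singleton $T_m=\{m\}$ against a larger plan $T'$ containing $m$ as a leaf, and the sub-plan of $T$ rooted at $\targetmol$ against $T$ — so the downstream propositions are unaffected by the correction.

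As for the argument itself: the paper's own proof is your second, coupling-style route stated very tersely — it asserts the pointwise bound $\sigma'(T_1;f,b,e_T)\leq\sigma'(T_2;f,b,e_T)$ for every $e_T$ (implicitly using one shared expansion function for both plans) and takes expectations; like the statement, it speaks only of frontier conditions, ignores the reaction factors, and silently assumes the corrected nesting. Your primary route — factorizing $\bar\sigma'$ into plan-independent per-node factors $f(r),q(m)\in[0,1]$ via independence of the coordinates of $e_T$, then peeling off the extra factors carried by the larger plan — is a genuinely different decomposition and is more informative: it isolates precisely why $q(m)$ does not depend on the ambient plan and exactly which nesting hypotheses the inequality consumes, which is what exposes the defects in the stated version. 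Either argument closes the lemma; yours is the more rigorous of the two.
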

\begin{proof}
    Equation~\ref{eqn:expansion success single plan} imposes a condition
    on \emph{all} frontier nodes for a synthesis plan,
    so clearly for all $e_T$,
    \begin{equation*}
        \sigma'(T_1;f,b,e_T)\leq\sigma'(T_2;f,b,e_T)\ .
    \end{equation*}
    Since this equality holds for all $e_T$,
    it will also hold in expectation over $e_T$,
    which is the definition of $\bar\sigma'$
    (equation~\ref{eqn:sigma' bar}).
\end{proof}

Now we prove the statements from the main text.
First, we consider $\tilde\psi(\cdot)$ (equation~\ref{eqn:psi tilde}).
\begin{proposition}
    Under assumption~\ref{assumption:no synthesis plan uses same molecule twice},
    $\tilde\psi(\cdot;\gG',f,b,h)$
    satisfies equations~\ref{eqn:psi molecule}--\ref{eqn:psi reaction}.
\end{proposition}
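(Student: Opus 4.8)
The plan is to prove the identity by working directly from the definition of $\tilde\psi$ in equation~\ref{eqn:psi tilde} as a maximum of $\bar\sigma'$ over synthesis plans, using two structural facts: a closed product form for $\bar\sigma'$, and the independence of subtrees guaranteed by Assumption~\ref{assumption:no synthesis plan uses same molecule twice} (under which every $T\in\enumplans_*(\gG')$ is a finite tree). I would establish the molecule equation~\ref{eqn:psi molecule} and reaction equation~\ref{eqn:psi reaction} simultaneously, by induction on synthesis-plan size with base case the frontier molecules of $\gG'$.

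First I would rewrite $\bar\sigma'$ in product form. Since $\sigma'$ (equation~\ref{eqn:expansion success single plan}) is the product of the reaction indicators $f(r)$ over $r\in T$ and of the molecule indicators $\1_{b(m)=1\,\vee\,e_T(m)=1}$ over $m\in\frontier(T)$, and since the coordinates of $e_T$ are mutually independent (equation~\ref{eqn:eT expansion function}), taking the expectation over $e_T$ gives
\[
\bar\sigma'(T;f,b,h)=\Big(\prod_{r\in T}f(r)\Big)\prod_{m\in\frontier(T)}g(m),\qquad g(m)=\begin{cases}\max[b(m),h(m)] & m\in\frontier(\gG')\\ b(m) & m\notin\frontier(\gG')\end{cases},
\]
using $b(m)\in\{0,1\}$ and that $\probability_{e_T}[e_T(m)=1]$ equals $h(m)$ only when $m\in\frontier(\gG')\cap T$ and $0$ otherwise. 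Assumption~\ref{assumption:no synthesis plan uses same molecule twice} is exactly what makes each molecule appear at most once among $\frontier(T)$, so this per-molecule product is free of double counting. The key consequence I will use repeatedly is that every factor lies in $[0,1]$, so enlarging a plan (adding reactions or frontier molecules) can only decrease $\bar\sigma'$.

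For the molecule equation I would first argue that the maximizer for $\tilde\psi(m)$ may be taken rooted at $m$: any plan $T\ni m$ rooted higher contains the subtree $T'\subseteq T$ rooted at $m$, and by the product form $\bar\sigma'(T)=\bar\sigma'(T')\cdot\prod_{\text{extra }r}f(r)\prod_{\text{extra }m'}g(m')\le\bar\sigma'(T')$, the extra factors coming from the reactions and frontier molecules of the pruned side-branches above $m$. A plan rooted at $m$ either stops at $m$, so $m\in\frontier(T)$ and $\bar\sigma'(T)=g(m)$, or selects exactly one child reaction $r\in\children_{\gG'}(m)$ (a molecule has at most one child reaction in a plan), giving $\bar\sigma'(T)=\tilde\psi(r)$ by the reaction case. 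Maximizing yields $\tilde\psi(m)=\max[\,g(m),\ \max_{r\in\children_{\gG'}(m)}\tilde\psi(r)\,]$; substituting the two forms of $g(m)$ and noting that $m\in\frontier(\gG')$ has no child reactions recovers the two branches of equation~\ref{eqn:psi molecule}.

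For the reaction equation, any plan $T\ni r$ contains all of $\children_{\gG'}(r)$ (a reaction drags in all its reactants), and by the $[0,1]$ monotonicity the optimum may be taken rooted at $r$'s unique parent molecule, which is internal and contributes no factor; the plan then splits into $r$ together with one subtree rooted at each child $m'\in\children_{\gG'}(r)$. I expect the main obstacle to be the factorization step here: justifying rigorously that $\max$ and $\prod$ commute over the child subtrees, i.e.\@ $\max\prod_{m'}\bar\sigma'(T_{m'})=\prod_{m'}\tilde\psi(m')$. This is precisely where Assumption~\ref{assumption:no synthesis plan uses same molecule twice} is indispensable: because distinct molecule nodes are distinct molecules and different branches share no nodes, the subtrees can be chosen independently, the feasible region is a product set, and $e_T$ factors cleanly. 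Multiplying the resulting $\prod_{m'}\tilde\psi(m')$ by $f(r)$ then gives equation~\ref{eqn:psi reaction}, completing the induction.
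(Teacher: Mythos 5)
Your proof is correct and follows essentially the same route as the paper's: a case analysis over frontier molecules, non-frontier molecules, and reaction nodes, driven by monotonicity of $\bar\sigma'$ under plan enlargement and by the across-branch independence that Assumption~\ref{assumption:no synthesis plan uses same molecule twice} provides. Your explicit product formula for $\bar\sigma'$ plays exactly the role of the paper's Lemma~\ref{thm:synthesis plan leaf subset inequality}, and your $\max$/$\prod$ factorization at reaction nodes is the same step the paper justifies by ``combining the optimal synthesis plans'' for all reactants (indeed, you flag and address its reliance on the assumption more explicitly than the paper does).
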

\begin{proof}
    Nodes in $\gG'$ can be partitioned into the following sets,
    for which the result is shown separately.
    \begin{enumerate}
        \item Frontier molecules $m\in\frontier(\gG')$:
            the synthesis plan $T_m=\{m\}$ has
            $\bar\sigma'(T_m;f,b,h)=\max\left[b(m), h(m)\right]$
            (from equation~\ref{eqn:expansion success single plan}
            and equation~\ref{eqn:eT expansion function}).
            All other synthesis plans $T'$ including $m$
            will include it as a leaf node,
            and by lemma~\ref{thm:synthesis plan leaf subset inequality}
            will have $\bar\sigma'(T';f,b,h)\leq\bar\sigma'(T_m;f,b,h)$.
            Therefore $T_m$ maximizes $\bar\sigma'$ (although it might not be unique),
            justifying the first case of equation~\ref{eqn:psi molecule}.
        \item Non-frontier molecules $m\notin\frontier(\gG')$:
            first, if $m$ is a non-frontier node with no children,
            then all synthesis plans including $m$ will terminate in $m$,
            giving $\psi(m)=b(m)$.
            Equation~\ref{eqn:psi molecule} matches this
            (because we use the convention $\max\emptyset=-\infty$).

            Otherwise, first consider the case where $b(m)=1$.
            Clearly $T_m=\{m\}$ maximizes equation~\ref{eqn:psi tilde},
            so $\psi(m;\gG',f,b,h)=b(m)=1$.
            
            Otherwise, consider $b(m)=0$.
            In this case, $\bar\sigma'(T_m;f,b,h)=0$,
            so all other synthesis plans will be equal to or better than $T_m$.
            Therefore we look to $m$'s child reactions.
            Synthesis plans containing one of $m$'s child reactions will necessarily contain $m$,
            making $\psi(m)=\max_{r\in\children_{\gG'}(m)}\psi(r)$:
            exactly what is in equation~\ref{eqn:psi molecule}.
        \item Reaction nodes $r$:
            Because of assumption~\ref{assumption:no synthesis plan uses same molecule twice}
            and the fact that $e_T$ (equation~\ref{eqn:eT expansion function})
            is independent for every node,
            the optimal synthesis plan including $r$ will be formed by combining the optimal synthesis
            plans for all reactions $m\in\children_{\gG'}(r)$.
            Due to independence, the probability of them all being successful is simply the product of each synthesis plan
            being successful,
            justifying equation~\ref{eqn:psi reaction}.
    \end{enumerate}
\end{proof}

Finally, we consider $\tilde\rho$ (equation~\ref{eqn:rho tilde}).
\begin{proposition}
    Under assumption~\ref{assumption:no synthesis plan uses same molecule twice},
    $\tilde\rho(\cdot;\gG',f,b,h)$
    satisfies equations~\ref{eqn:rho molecule}--\ref{eqn:rho reaction}.
\end{proposition}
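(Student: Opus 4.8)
The plan is to mirror the structure of the preceding proof for $\tilde\psi$, but to exploit a \emph{multiplicative factorization} of $\bar\sigma'$ rather than building it up from the leaves. The starting observation is that, because $e_T$ is independent across molecules (equation~\ref{eqn:eT expansion function}) and because Assumption~\ref{assumption:no synthesis plan uses same molecule twice} makes every $T\in\enumplans_*(\gG')$ a tree, $\bar\sigma'(T;f,b,h)$ factors as a product over the nodes of $T$: each reaction $r\in T$ contributes a factor $f(r)$, each frontier molecule $m\in\frontier(T)$ contributes $\max[b(m),h(m)]$ when $m\in\frontier(\gG')$ and $b(m)$ otherwise, and every internal molecule contributes $1$. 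I would first record this factorization as a short lemma, since the whole argument rests on it.

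With the factorization in hand I would treat the three cases of equations~\ref{eqn:rho molecule}--\ref{eqn:rho reaction} separately. For the base case $m=\targetmol$, I would argue that the constraint sets defining $\tilde\rho(\targetmol)$ (plans for $\targetmol$ containing $\targetmol$) and $\tilde\psi(\targetmol)$ (all plans containing $\targetmol$) coincide: since $\targetmol$ has no parents in $\gG'$, any plan in $\enumplans_*(\gG')$ containing $\targetmol$ must have it as its root and hence lies in $\enumplans_{\targetmol}(\gG')$, so $\tilde\rho(\targetmol)=\tilde\psi(\targetmol)=\psi(\targetmol)$. For a non-target molecule $m$, I would use that every plan for $\targetmol$ containing $m$ must also contain exactly one parent reaction $r\in\parents_{\gG'}(m)$, and conversely that any plan containing such an $r$ automatically contains $m$ (a reaction forces all of its children into the plan). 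Splitting the maximization over the choice of parent reaction then yields $\tilde\rho(m)=\max_{r\in\parents_{\gG'}(m)}\tilde\rho(r)$.

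The crux is the reaction case. For a reaction $r$ with its unique product $m'\in\parents_{\gG'}(r)$, every plan $T\in\enumplans_{\targetmol}(\gG')$ containing $r$ decomposes uniquely into an upper ``context'' (all nodes of $T$ not strictly below $m'$) and the subtree $T^{\downarrow}$ rooted at $m'$, which is itself a plan for $m'$ using $r$ as $m'$'s child reaction. Because $m'$ is internal in $T$ it contributes a factor of $1$, so the node-factorization splits cleanly as $\bar\sigma'(T)=P_{\mathrm{ctx}}\cdot\bar\sigma'(T^{\downarrow})$. The decisive point---and where the independence granted by Assumption~\ref{assumption:no synthesis plan uses same molecule twice} is essential---is that the set of admissible contexts above $m'$ does not depend on what is chosen below $m'$. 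Hence the maximization over $T$ separates into an independent maximization of $P_{\mathrm{ctx}}$ and of the subtree, the latter restricted to plans through $r$, which attains exactly $\psi(r)=f(r)\prod_{m\in\children_{\gG'}(r)}\psi(m)$. Writing $P^*_{\mathrm{ctx}}$ for the optimal context value, this gives $\tilde\rho(r)=P^*_{\mathrm{ctx}}\,\psi(r)$, and applying the identical argument to $m'$ (whose subtree now ranges over \emph{all} plans for $m'$ and attains $\psi(m')$) gives $\tilde\rho(m')=P^*_{\mathrm{ctx}}\,\psi(m')$. Dividing yields the second branch of equation~\ref{eqn:rho reaction}, where $\psi(m')\ge\psi(r)$ (from equation~\ref{eqn:psi molecule}, as $r\in\children_{\gG'}(m')$) makes the ratio well defined whenever $\psi(r)>0$; when $\psi(r)=0$ the factorization forces $\tilde\rho(r)=0$, giving the first branch.

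The main obstacle I anticipate is making the context/subtree decomposition fully rigorous: precisely defining the context as a partial plan, verifying that the node-factorization of $\bar\sigma'$ respects the split at $m'$ (in particular that $m'$ contributes no factor because it is never a frontier node of $T$), and---most importantly---justifying that the feasible set of contexts forms a genuine product with the feasible set of subtrees, so the two maximizations truly decouple. This independence step is exactly the payoff of the tree assumption together with the single-product restriction on reactions (which guarantees $m'$ is unique); I would be careful to confirm that the optimal context is realized by the \emph{same} partial plan in both the $\tilde\rho(r)$ and $\tilde\rho(m')$ computations, since that is what makes the ratio identity valid.
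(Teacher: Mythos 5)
Your proof follows essentially the same route as the paper's: the same three-way case split (target molecule, other molecules, reactions), the same union-over-parent-reactions identity for non-target molecules, and, for the reaction case, the same context/subtree decomposition in which the two maximizations decouple, giving $\tilde\rho(r)=P^*_{\mathrm{ctx}}\,\psi(r)$ and $\tilde\rho(m')=P^*_{\mathrm{ctx}}\,\psi(m')$ and hence equation~\ref{eqn:rho reaction}. Your explicit node-wise factorization of $\bar\sigma'$ is a cleaner packaging of what the paper invokes only implicitly (``independence of different branches''), and your remark that $\psi(m')\ge\psi(r)$ keeps the ratio well defined whenever $\psi(r)>0$ is a detail the paper omits; conversely, the decoupling step you flag as the main obstacle is treated no more rigorously in the paper's proof than in your sketch, so you are not missing any idea the paper actually supplies there.

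The one step that is wrong as stated is the base case. You claim $\targetmol$ has no parents in $\gG'$, so that every plan containing $\targetmol$ is rooted at $\targetmol$ and the constraint sets defining $\tilde\rho(\targetmol)$ and $\tilde\psi(\targetmol)$ coincide. In the paper's setting $\gG'$ may contain cycles (Appendix~\ref{appendix:andor-graph-details}, Figure~\ref{fig:graphs-vs-trees-schematic}): a reaction produced by expanding a descendant of $\targetmol$ can have $\targetmol$ as a reactant, e.g.\@ $r_4$ ($m_c\Rightarrow\targetmol$) in that figure, which is then a parent of $\targetmol$ in $\gG'$. The plan $\{m_c,r_4,\targetmol\}$ rooted at $m_c$ contains $\targetmol$, satisfies Assumption~\ref{assumption:no synthesis plan uses same molecule twice}, and lies outside $\enumplans_{\targetmol}(\gG')$, so the two constraint sets genuinely differ; this is exactly why the paper proves the target case by a domination argument (plans containing $\targetmol$ as a non-root node are dominated by plans rooted at $\targetmol$) rather than by a set-equality argument. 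The repair is immediate from your own factorization lemma: every factor lies in $[0,1]$, so truncating any plan containing $\targetmol$ to the subtree rooted at $\targetmol$ (a valid member of $\enumplans_{\targetmol}(\gG')$) can only increase $\bar\sigma'$, whence $\tilde\rho(\targetmol)=\tilde\psi(\targetmol)=\psi(\targetmol)$ even when $\targetmol$ has parents.
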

\begin{proof}
    First, note that from our definition of AND/OR graphs
    in section~\ref{appendix:andor-graph-details}
    all nodes in $\gG'$ will be included on at least one synthesis plan
    which produces $\targetmol$.
    
    We partition nodes in $\gG'$ into the following sets
    and provide a separate proof for each set.
    \begin{enumerate}
        \item The target molecule $\targetmol$:
            first, any synthesis plan $T$ including $\targetmol$
            as an intermediate molecule (i.e.\@ not the product of the synthesis plan)
            will have $\bar\sigma'(T;f,b,h)$ bounded by
            $\bar\sigma'(T';f,b,h)$ for some synthesis plan $T'$
            whose final product is $\targetmol$ (by lemma~\ref{thm:synthesis plan leaf subset inequality}).
            Therefore, and synthesis plan achieving $\psi(\targetmol;\gG',f,b,h)$
            will also maximize $\tilde\rho$.
            So, $\rho(\targetmol;\gG',f,b,h)=\psi(\targetmol;\gG',f,b,h)$.
        \item A non-target molecule $m$:
            $m$ will have at least one parent $r$.
            Any synthesis plan for $\targetmol$ which includes $r$
            must necessarily include $m$ (because synthesis plans
            will always contain all reactants for every reaction).
            Therefore, the set of synthesis plans for $\targetmol$
            which also contain $m$ is given by
            \begin{equation*}
                \cup_{r\in\parents_{\gG'}(m)}\left\{T\in\enumplans_*(\gG'): r\in T\right\}\ .
            \end{equation*}
            $\rho(m)$ will simply be the maximum of these values.

            Together with the previous case, this justifies equation~\ref{eqn:rho molecule}.

        \item A reaction $r$.
            First, if $\psi(r;\gG',f,b,h)=0$ then clearly $\rho(r;\gG',f,b,h)=0$
            as $\psi$ maximizes over  a larger set of synthesis plans than $\rho$.

            Otherwise, let $m$ be the single parent of $r$.
            The synthesis plan $T$ justifying $\rho(m;\gG,f,b,h)$
            will be composed of a synthesis plan $T_m$ for $m$ (i.e.\@ ``below'' $m$)
            and a partial synthesis plan $T\setminus T_m$ which uses $m$ as an intermediate molecule
            (or potentially the product molecule if $m=\targetmol$).
            Because of the independence of different branches
            resulting from equation~\ref{eqn:eT expansion function}
            and assumption~\ref{assumption:no synthesis plan uses same molecule twice},
            $\bar\sigma'(T;f,b,h)$ will be the product of
            $\bar\sigma'(T_m;\gG',f,b,h)=\psi(m;\gG',f,b,h)$
            and a term for $T\setminus T_m$.
            The optimal synthesis plan which \emph{includes} $r$
            can therefore be constructed by removing $T_m$ (and dividing by its probability $\psi(m;\gG',f,b,h)$)
            and adding the optimal synthesis plan for $m$ which includes $r$
            (with probability $\psi(r;\gG',f,b,h)$).
            
            Together, these cases justify equation~\ref{eqn:rho reaction}.
    \end{enumerate}
\end{proof}

\subsubsection{
    Cost-minimization interpretation
}
\label{appendix:s,psi,rho min cost}

The functions $\rs(\cdot)$, $\psi(\cdot)$, and $\rho(\cdot)$
have a cost minimization interpretation
when transformed by
mapping $f(\cdot)\mapsto-\log{f(\cdot)}$
(using the convention that $\log{0}=-\infty$).
To see, this, we write the transformed equations explicitly.
To simplify notation, we write:
\begin{align}
    \dot b (m) &= -\log{b(m)} \\
    \dot f (r) &= -\log{f(r)} \\
    \dot h (m) &= -\log{h(m)}  \\
    \dot \rs (\cdot) &= -\log{\rs(\cdot)}  \\
    \dot \psi (\cdot) &= -\log{\psi(\cdot)}  \\
    \dot \rho (\cdot) &= -\log{\rho(\cdot)} \ . 
\end{align}

For $\rs(\cdot)$ (equations~\ref{eqn:molecule success}--\ref{eqn:reaction success}),
we have
\begin{align}
    \dot\rs(m;\gG',f,b) &= 
        \min\left[
            \dot b(m),
            \min_{r\in\children_{\gG'}(m)}\dot\rs(r;\gG',f,b)
        \right]\ , \\
    \dot\rs(r;\gG',f,b) &=
        \dot f(r)
        + \sum_{m\in\children_{\gG'}(r)}
            \dot\rs(m;\gG',f,b)
    \ . 
\end{align}
Since $\rs(\cdot)\in\{0, 1\}$,
this corresponds to a pseudo-cost of $0$ if a molecule is synthesizable
and a cost of $+\infty$ otherwise.

For $\psi(\cdot)$ (equations~\ref{eqn:psi molecule}--\ref{eqn:psi reaction})
and $\rho(\cdot)$ (equations~\ref{eqn:rho molecule}--\ref{eqn:rho reaction})
we have:
\begin{align}
    \dot\psi(m;\gG',f,b,h) &= \begin{cases}
        \min\left[
            \dot b(m),
            \dot h(m)
        \right] & m\in\frontier(\gG') \\
        \min\left[
            \dot b(m),
            \min_{r\in\children_{\gG'}(m)}
                \dot\psi(r;\gG',f,b,h)
        \right] & m\notin\frontier(\gG') \\
    \end{cases} \ , \\
    \dot\psi(r;\gG',f,b,h) &= \dot f(r) + \sum_{m\in\children_{\gG'}(r)} \dot\psi(m;\gG',f,b,h)\ . 
\end{align}
\begin{align}
    \dot\rho(m;\gG',f,b,h) &= \begin{cases}
        \dot\psi(m;\gG',f,b,h) & m\text{ is target molecule }\targetmol \\
        \min_{r\in\parents_{\gG'}(m)}\dot\rho(r;\gG',f,b,h) & \text{all other }m \\
    \end{cases}\ , \\
    \dot\rho(r;\gG',f,b,h) &= \begin{cases}
        +\infty & \dot\psi(r;\gG',f,b,h) = +\infty \\
        \dot\rho(m';\gG',f,b,h) \\ \qquad- \dot\psi(m';\gG',f,b,h) \\ \qquad + \dot\psi(r;\gG',f,b,h) & \dot\psi(r;\gG',f,b,h) < \infty, m'\in\parents_{\gG'}(r)
    \end{cases}\ .
\end{align}
Since $\psi(\cdot)$ and $\rho(\cdot)$ have ranges $[0,1]$,
$\dot\psi(\cdot)$ and $\dot\rho(\cdot)$ have ranges $[0,+\infty]$
(i.e.\@ explicitly including infinity).

\subsubsection{
    Clarification of
    \texorpdfstring{$rs$, $\psi$, and $\rho$}{s, ψ and ρ}
    for cyclic graphs
}

If $\gG'$ is an acyclic graph,
the recursive definitions
of $\rs(\cdot)$, $\psi(\cdot)$, and $\rho(\cdot)$
will always have a unique solution obtained by direct recursion
(from children to parents for $\rs(\cdot)$ and $\psi(\cdot)$,
and from parents to children for $\rho(\cdot)$).
However, for cyclic graphs, such recursion is not possible
and therefore these equations do not (necessarily) uniquely define
the functions  $\rs(\cdot)$, $\psi(\cdot)$, and $\rho(\cdot)$.

The minimum cost interpretation from section~\ref{appendix:s,psi,rho min cost}
provides a resolution.
Minimum cost solutions in AND/OR graphs
with cycles have been extensively
studied
\citep{hvalica1996andorsearch,jimenez2000efficient}
and do indeed exist.
\citet[Lemma 1]{hvalica1996andorsearch} in particular suggests
that the presence of multiple solutions will happen only
because of the presence of zero-cost cycles
(which could be caused by a cycle of feasible reactions).
To produce a unique solution,
we re-define all costs to 0 to be a constant $\epsilon>0$
(for which there will be a unique solution for $\dot\rs$, $\dot\psi$, and $\dot\rho$),
and take the limit as $\epsilon\to0$ to produce a unique solution which includes 0 costs.

\subsubsection{
    Computing
    \texorpdfstring{$\rs$, $\psi$, and $\rho$}{s, ψ and ρ}
}

The best method of computation will depend on $\gG'$.
If $\gG'$ is acyclic (for example, as will always be the case for tree-structured search graphs)
then direct recursion is clearly the best option.
This approach also has the advantage that the computation can stop early if
the value of a node does not change.

In graphs with cycles, any number of algorithms to find minimum
\emph{costs} in AND/OR graphs
can be applied to the minimum-cost formulations
of $\rs(\cdot)$, $\psi(\cdot)$, and $\rho(\cdot)$
from section~\ref{appendix:s,psi,rho min cost}
\citep{chakrabarti1994algorithms,hvalica1996andorsearch,jimenez2000efficient}.

\clearpage
\subsection{Example of calculating s, \texorpdfstring{$\psi$ and $\rho$}{ψ and ρ}}
\label{appendix:psi-rho-example}

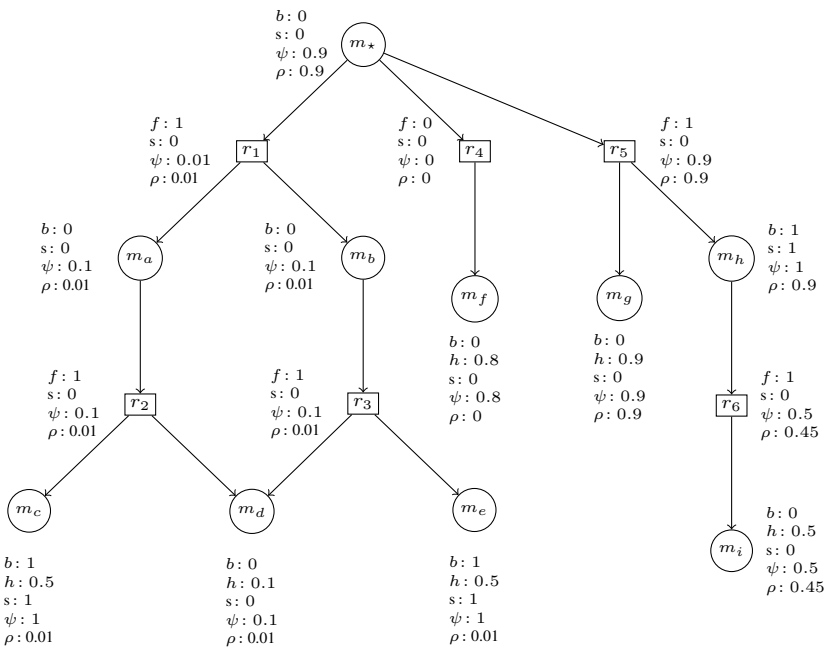
\begin{figure}[t]
    \centering
    \begin{tikzpicture}[
    node distance=1.5cm,every text node part/.style={align=left}
]
      \node[ornode] (target) at (0, 0) {$\targetmol$};
      \node[labelnode, left=0.05cm of target] (target-label) {$b\!: 0$ \\ $\rs\!: 0$ \\ $\psi\!: 0.9$ \\ $\rho\!: 0.9$};

      \node[andnode, below left = of target] (r1) {$r_1$};
      \draw[->] (target) -- (r1);
      \node[ornode, below left = of r1] (ma) {$m_a$};
      \draw[->] (r1) -- (ma);
      \node[ornode, below right = of r1] (mb) {$m_b$};
      \draw[->] (r1) -- (mb);
      \node[andnode, below = of ma] (r2) {$r_2$};
      \draw[->] (ma) -- (r2);
      \node[andnode, below = of mb] (r3) {$r_3$};
      \draw[->] (mb) -- (r3);
      \node[ornode, below left = of r2] (mc) {$m_c$};
      \draw[->] (r2) -- (mc);
      \node[ornode, below right = of r2] (md) {$m_d$};
      \draw[->] (r2) -- (md);
      \draw[->] (r3) -- (md);
      \node[ornode, below right = of r3] (me) {$m_e$};
      \draw[->] (r3) -- (me);

      \node[labelnode, below=0.2cm of me] (me-label) {$b\!: 1$ \\ $h\!: 0.5$ \\ $\rs\!: 1$ \\ $\psi\!: 1$ \\ $\rho\!:$ 0.01};
      \node[labelnode, below=0.2cm of md] (md-label) {$b\!: 0$ \\ $h\!: 0.1$ \\ $\rs\!: 0$ \\ $\psi\!: 0.1$ \\ $\rho\!:$ 0.01};
      \node[labelnode, below=0.2cm of mc] (mc-label) {$b\!: 1$ \\ $h\!: 0.5$ \\ $\rs\!: 1$ \\ $\psi\!: 1$ \\ $\rho\!:$ 0.01};
      \node[labelnode, left=0.2cm of r2] (r2-label) {$f\!: 1$ \\ $\rs\!: 0$ \\ $\psi\!: 0.1$ \\ $\rho\!:$ 0.01};
      \node[labelnode, left=0.2cm of r3] (r3-label) {$f\!: 1$ \\ $\rs\!: 0$ \\ $\psi\!: 0.1$ \\ $\rho\!:$ 0.01};
      \node[labelnode, left=0.2cm of ma] (ma-label) {$b\!: 0$ \\ $\rs\!: 0$ \\ $\psi\!: 0.1$ \\ $\rho\!:$ 0.01};
      \node[labelnode, left=0.2cm of mb] (ma-label) {$b\!: 0$ \\ $\rs\!: 0$ \\ $\psi\!: 0.1$ \\ $\rho\!:$ 0.01};
      \node[labelnode, left=0.2cm of r1] (r1-label) {$f\!: 1$ \\ $\rs\!: 0$ \\ $\psi\!: 0.01$ \\ $\rho\!:$ 0.01};

      \node[andnode, below right = of target] (r4) {$r_4$};
      \draw[->] (target) -- (r4);
      \node[ornode, below = of r4] (mf) {$m_f$};
      \draw[->] (r4) -- (mf);
      \node[labelnode, left=0.2cm of r4] (r4-label) {$f\!: 0$ \\ $\rs\!: 0$ \\ $\psi\!: 0$ \\ $\rho\!: 0$};
      \node[labelnode, below=0.05cm of mf] (mf-label) {$b\!: 0$ \\ $h\!: 0.8$ \\ $\rs\!: 0$ \\ $\psi\!: 0.8$ \\ $\rho\!: 0$};

      \node[andnode, right = of r4] (r5) {$r_5$};
      \draw[->] (target) -- (r5);
      \node[ornode, below = of r5] (mg) {$m_g$};
      \draw[->] (r5) -- (mg);
      \node[ornode, below right = of r5] (mh) {$m_h$};
      \draw[->] (r5) -- (mh);
      \node[andnode, below = of mh] (r6) {$r_6$};
      \draw[->] (mh) -- (r6);
      \node[ornode, below = of r6] (mi) {$m_i$};
      \draw[->] (r6) -- (mi);
      \node[labelnode, right=0.05cm of mi] (mi-label) {$b\!: 0$ \\ $h\!: 0.5$ \\ $\rs\!: 0$ \\ $\psi\!: 0.5$ \\ $\rho\!: 0.45$};
      \node[labelnode, right=0.05cm of r6] (r6-label) {$f\!: 1$ \\ $\rs\!: 0$ \\ $\psi\!: 0.5$ \\ $\rho\!: 0.45$};
      \node[labelnode, right=0.05cm of mh] (mh-label) {$b\!: 1$ \\ $\rs\!: 1$ \\ $\psi\!: 1$ \\ $\rho\!: 0.9$};
      \node[labelnode, right=0.2cm of r5] (r5-label) {$f\!: 1$ \\ $\rs\!: 0$ \\ $\psi\!: 0.9$ \\ $\rho\!: 0.9$};
      \node[labelnode, below=0.05cm of mg] (mg-label) {$b\!: 0$ \\ $h\!: 0.9$ \\ $\rs\!: 0$ \\ $\psi\!: 0.9$ \\ $\rho\!: 0.9$};
        
\end{tikzpicture}
    \caption{
        A search graph $\gG'$ with values for $\rs$, $\psi$, and $\rho$ worked out.
        A detailed explanation is given in Appendix~\ref{appendix:psi-rho-example}.
    }
    \label{fig:psi-rho-example}
\end{figure}

Figure~\ref{fig:psi-rho-example} presents a search graph $\gG'$ with
labels from one sample of $f,b$, and heuristic values $h$.
The graph contains three branches stemming from $\targetmol$,
starting with $r_1$, $r_4$, and $r_5$ respectively.
It is designed to highlight several interesting corner cases,
which will be emphasized below.
We will work through the calculations of $\rs$, $\psi$, and $\rho$ for all nodes below.
To simplify notation, we will omit the dependence of these quantities on $\gG',f,b,h$,
as this is implicit.

\subsubsection{Calculation of \texorpdfstring{$\rs$}{s}}

Note that only molecules $m_c$, $m_e$, and $m_h$ are buyable ($b(\cdot)=1$);
the rest are non-buyable.
All reactions are feasible ($f(\cdot)=1$), except for $r_4$.
This means that $\rs(\cdot)=1$ only for $m_c$, $m_e$, and $m_h$.
Their parent reactions all have one other reactant with $\rs(\cdot)=0$;
therefore all other nodes in the graph have $\rs(\cdot)=0$.

\subsubsection{Calculation of \texorpdfstring{$\psi$}{ψ}}

Since $\psi$ depends only on a node's children, we will show the calculation bottom-up for each branch.

\textbf{$r_1$ branch:}
\begin{itemize}
    \item \textbf{Leaf nodes:}
        \emph{$m_c$ and $m_e$ are buyable molecules whose heuristic values are less than 1.}
        Therefore, 
        $h(m_c)=h(m_d)=0.5$ is dominated by $b(\cdot)=1$,
        in the $\max$ of equation~\ref{eqn:psi molecule},
        yielding $\psi(\cdot)=1$.
        For $m_d$ which is not buyable, $\psi(m_d)=h(m_d)=0.1$.
    \item \textbf{$r_2,r_3$:}
        Both these reactions are feasible ($f(\cdot)=1$),
        so $\psi(\cdot)$ will just be the product of $\psi$
        values of its children,
        which in this case yields $\psi(r_2)=\psi(r_3)=0.1$.
    \item \textbf{$m_a,m_b$:}
        Both molecules inherit $\psi$ values from their children,
        so $\psi(m_a)=\psi(m_b)=0.1$.
    \item \textbf{$r_1$:}
        $f(r_1)=1$, so by equation~\ref{eqn:psi reaction} $\psi(r_1)=\psi(m_a)\psi(m_b)=0.01$.
        \emph{
        Note however that this is an instance of double counting, because $m_d$ occurs twice in this synthesis plan.}
        $\psi(r_1)$ calculates a success probability for $r_1$ \emph{as if} $m_d$ needs
        to independently succeed twice, yielding a value of $0.01$ instead of $0.1$.
        Synthesis plans like this are unusual however, and as explained previously
        we simply accept that to calculate $\psi$ efficiently it may occasionally double-count certain nodes.
\end{itemize}

\textbf{$r_4$ branch:}
\begin{itemize}
    \item \textbf{$m_f$:}
        Here, $\psi(m_f)=h(m_f)=0.8$ (since $b(m_f)=0$).
    \item \textbf{$r_4$:}
        $\psi(r_4)=f(r_4)\psi(m_f)=0(0.8)=0$.
        \emph{Essentially, because this reaction is infeasible, $\psi$ will be 0},
        since no amount of expansion below $r_4$ will make it feasible.
\end{itemize}

\textbf{$r_5$ branch:}
\begin{itemize}
    \item \textbf{$m_i$:}
        Here, $\psi(m_i)=h(m_i)=0.5$.
    \item \textbf{$r_6$:}
        $\psi(r_6)=f(r_6)\psi(m_i)=(1)(0.5)=0.5$.
    \item \textbf{$m_h$:}
        This is another edge case: \emph{a buyable molecule with child nodes}.
        Here, $\psi(m_h)=\max\left[b(m_h), \psi(r_6)\right]=1$.
        Essentially, it inherits its $\psi$ value from $b$, not from $\psi$
        values of its children.
    \item \textbf{$m_g$:}
        Here, $\psi(m_g)=h(m_g)=0.9$.
    \item \textbf{$r_5$:}
        $\psi(r_5)=f(r_5)\psi(m_g)\psi(m_h)=(1)(0.9)(1)=0.9$.
\end{itemize}

\textbf{$\targetmol$:}
\begin{align*}
    \psi(\targetmol) &= \max\left[b(\targetmol), \psi(r_1), \psi(r_4), \psi(r_5)\right] \\
    &= \max\left[0, 0.01, 0, 0.9\right] \\
    &= 0.9
\end{align*}
The optimal synthesis plan achieving this success probability is 
\begin{equation*}
 \{\targetmol, r_5, m_g\text{ (expanded)}, m_h\text{ (bought)}\}\ .   
\end{equation*}

\subsubsection{Calculation of \texorpdfstring{$\rho$}{ρ}}

Since $\rho$ depends only on a node's parents, we will show the calculation top-down.

\textbf{$\targetmol$:}
By definition in equation~\ref{eqn:rho molecule}, $\rho(\targetmol)=\psi(\targetmol)=0.9$.

\textbf{$r_5$ branch:} Note that the optimal synthesis plan which justifies $\psi(\targetmol)$
comes from this branch, so we should expect many nodes to have $\rho(\cdot)=\psi(\targetmol)$.
\begin{itemize}
    \item \textbf{$r_5$:}
        From equation~\ref{eqn:rho reaction},
        $\rho(r_5)=\rho(\targetmol)\frac{\psi(r_5)}{\psi(\targetmol)}=0.9\frac{0.9}{0.9}=0.9$.
        As expected, this equals $\psi(\targetmol)$.
    \item \textbf{$m_g,m_h$:}
        From equation~\ref{eqn:rho molecule}
        $\rho(m_g)=\rho(m_h)=\rho(r_5)=0.9$.
    \item \textbf{$r_6$:}
        \emph{This is an interesting edge case because $r_6$'s parent, $m_h$, is buyable}.
        From the definition of $\rho$, $\rho(r_6)$ should be a quantity which only considers
        synthesis plans which include $r_6$, and therefore must not buy $m_h$.
        We can work out:
        \begin{align*}
            \rho(r_6) &= \rho(m_h)\frac{\psi(r_6)}{\psi(m_h)} \\
            &= 0.9 \frac{0.5}{1} \\
            &= 0.45
        \end{align*}
        Intuitively this lower value makes sense: using $r_6$,
        which is only predicted to succeed with 50\% probability,
        reduces the success probability of the best synthesis route by half.
    \item \textbf{$m_i$:}
        $\rho(m_i)=\rho(r_6)=0.5$
\end{itemize}

\textbf{$r_4$ branch:}
This branch uses an infeasible reaction, so any synthesis plan forced
to contain nodes in this branch should have a success probability of 0.
Indeed we see this below:
\begin{itemize}
    \item \textbf{$r_4$:}
        $\rho(r_4)=\rho(\targetmol)\frac{\psi(r_4)}{\psi(\targetmol)}=0.9\frac{0}{0.9}=0$.
    \item \textbf{$m_f$:}
        $\rho(m_f)=\rho(r_4)=0$.
\end{itemize}

\textbf{$r_1$ branch:}
This branch contains just one synthesis plan
with predicted success of $0.01$ (due to the double counting).
Calculating $\rho$ should therefore propagate this value
down to the leaf nodes.
Indeed, this is what we see below:
\begin{itemize}
    \item \textbf{$r_1$:}
        $\rho(r_1)=\rho(\targetmol)\frac{\psi(r_1)}{\psi(\targetmol)}=0.9\frac{0.01}{0.9}=0.01$.
    \item \textbf{$m_a,m_b$:}
        $\rho(m_a)=\rho(m_b)=\rho(r_1)=0.01$.
    \item \textbf{$r_2,r_3$:}
        $\rho(r_2)=\rho(m_a)\frac{\psi(r_2)}{\psi(m_a)}=0.01\frac{0.1}{0.1}=0.01$
        (and same for $r_3$).
    \item \textbf{$m_c,m_d,m_e$}:
        Note that $\rho(m_c)=\rho(r_2)=0.01$
        and $\rho(m_e)=\rho(r_3)=0.01$.
        Node $m_d$ has multiple parents,
        but all with the same $\rho$ value,
        so $\rho(m_d)=\min\left[0.01, 0.01\right]=0.01$.
\end{itemize}

\subsection{Justification of node selection}

Selecting nodes for expansion based on $\rho$ values
for samples $f,b$ where $\rs(\targetmol;\gG'f,b)=0$
will always have the interpretation of improving SSP.
Taking the expected value in this scenario is a natural choice.
However, it is not the only choice.
The formulas
\begin{align}
    \alpha_{\text{Mode}}(m;\gG',\xi_f,\xi_b,h) &=
        \textrm{Mode}_{f\sim\xi_f,b\sim\xi_b}\left[
           \1_{\rs(\targetmol;\gG',f,b)=0}\left[\rho(m;\gG',f,b,h)\right] 
        \right] \\
    \alpha_{q}(m;\gG',\xi_f,\xi_b,h) &=
        \textrm{Quantile}^q_{f\sim\xi_f,b\sim\xi_b}\left[
           \1_{\rs(\targetmol;\gG',f,b)=0}\left[\rho(m;\gG',f,b,h)\right] 
        \right] 
\end{align}
are also sensible choices.
We choose the expected value mainly out of simplicity.

\subsection{Rejected alternative algorithms}\label{appendix:rfbd:rejected alternatives}

The first iteration of retro-fallback (\emph{proto retro-fallback}) used a search tree instead of a search graph,
and assumed that the feasibility/buyability of all reactions/molecules was \emph{independent}.
In this special case, the values of $\rs,\psi$ and $\rho$ can all be computed analytically using dynamic programming.
However, a major weakness of this algorithm is that forcing $\gG'$ to be a tree
required duplicating some molecules and reactions in the graph (e.g.\@ if both the reactions $A+B\Rightarrow C$
and $A+D\Rightarrow C$ are possible then the molecule $A$ and any reactions under it would be duplicated).
The assumption of independence meant that the feasibility of the \emph{same reactions} would be sampled multiple times independently,
leading to ``backup plans'' that actually used the same reaction.
In practice this was often not an issue, but it did mean that the internal estimates of $\rs$ used by 
proto retro-fallback did not have a clear relationship to true SSP.
Hence we decided to proceed using samples,
which provided a natural avenue to remove the independence assumption.
More details about proto retro-fallback can be given upon request to the authors.
An implementation of this version of retro-fallback is also available in our public code.

When designing the version of retro-fallback presented in this paper,
we first considered \emph{sampling} outcomes for frontier nodes using the heuristic function,
and updating using the standard equations for $\rs$.
This would effectively be a random heuristic, and although other algorithms use random heuristics
(e.g.\@ rollouts in MCTS)
we decided that it would be an extra source of variance,
and upon realizing that the expected value of such a heuristic can be computed analytically if the outcomes
are assumed to be independent we proceeded to develop the equation for $\psi$.
However, if in the future other researchers wish to remove the independence assumption in the heuristic
then probably its outcomes would also need to be sampled.

\subsection{Practical implementation details}\label{appendix:rfb-details:implementation}

Here we give further details of how retro-fallback can be implemented.
This section is \emph{not} a description of our specific software implementation used in this paper:
that is in Appendix~\ref{appendix:details:code}.
Instead, we try to give general guidance that would be applicable for alternative implementations
and alleviate potential sources of confusing or ambiguities.

\paragraph{Graph initialization}
In Algorithm~\ref{alg:retro-fallback-full} we start by initializing $\gG'$ to contain just the target molecule.
Strictly speaking this is not required: any starting graph can be used
(as long as it satisfies the assumptions about AND/OR graphs in Appendix~\ref{appendix:andor-graph-details},
e.g.\@ not having reactions as frontier nodes or edges between molecules and reactions that should not be connected).
All that is needed is to properly initialize the variables ($\rs,\psi,\rho$) before entering the main loop.

\paragraph{Samples from stochastic processes}
A full sampled function from 
$\xi_f$ or $\xi_b$ contains outcomes for every possible molecule or reaction.
In reality this could not be calculated or stored.
Therefore,
in practice one would only sample these functions for nodes in $\gG'$,
and sample from the \emph{posterior} stochastic processes when new nodes are added.
For processes where this is inexpensive (e.g.\@ processes with independent outcomes)
this is fast and the implementation is not important.
However, when it is slow it is likely very important to use caching.
For example, drawing samples from a GP posterior scales with $O(N^3)$:
if $\xi_f$ or $\xi_b$ use a GP,
this $O(N^3)$ operation at every step will result in an overall algorithm speed of $O(N^4)$!
To avoid this, we cached a Cholesky decomposition of the GP covariance matrix at every step
and used incremental updating of the Cholesky decomposition to bring the overall complexity
to at most $O(N^3)$.
For other stochastic processes different techniques may be applicable,
but in general for non-independent stochastic processes we anticipate some form of caching may be necessary.

\paragraph{Vectorized computation}
Vectorized computation could also be used instead of explicit \texttt{for} loops
over the sampled functions in $1,\ldots,k$,
wherein
values for $\rs$, $\psi$, and $\rho$ would be simultaneously updated for all samples.

\paragraph{Priority Queues}
In each iteration of algorithm~\ref{alg:retro-fallback-full}
a frontier node maximizing $\alpha$ is found.
In practice, this could be accelerated using a priority queue to avoid
explicitly considering every frontier node in every iteration
(which has a cost linear in the number of frontier nodes).

\paragraph{Backward reaction model}
Algorithm~\ref{alg:retro-fallback-full} requires a backward reaction model $B$.
This is also not necessary: all that is needed is some way to decide what reactions to add to the graph.
For example, if it was possible to obtain a list of reactions from $\xi_f$ whose marginal feasibility is non-zero,
this could be used as a replacement for $B$.

\paragraph{Tie-breaking}
Algorithm~\ref{alg:retro-fallback-full} suggests when choosing nodes,
ties should be broken arbitrarily.
However, some frontier nodes may be buyable, and will only have a high
$\rho$ value because they belong to the synthesis plans with other expandable nodes.
Therefore, in general it would be beneficial to break ties in favour of \emph{non-buyable}
nodes, since expanding a buyable node will never produce a new synthesis plan.

\paragraph{Termination conditions}
Algorithm~\ref{alg:retro-fallback-full} uses several termination conditions, some of which may not be necessary or could be modified:
\begin{enumerate}
    \item No nodes are left to expand. We believe this one is necessary.
    \item $L$ iterations of expansion are done. This is not necessary: the algorithm alternatively terminate after a fixed wall-clock time, or simply never terminate until it is killed by the user.
    \item All $\rs_i(\targetmol)$ are 1: this is a sensible point to terminate because it means that $\alpha(m)=0$ for all frontier nodes $m$.
        However, the algorithm could easily keep running past this point;
        it would just expand nodes arbitrarily because all nodes would have an equivalent value of $\alpha(m)$.
        This condition could also be loosened:
        for example the algorithm could terminate when $\hat{s}(\targetmol)>1-\epsilon$ for some small $\epsilon>0$.
        This is sensible if one believes that improvement beyond a certain point is redundant.
\end{enumerate}

\clearpage
\section{Proofs and Theoretical Results}\label{appendix:proofs}

This appendix contains proofs of theoretical results from the paper.

\subsection{Proof of Theorem~\ref{thm:success prob is NP hard}}\label{appendix:proofs:succ-prob-np-hard}

Theorem~\ref{thm:success prob is NP hard} is a corollary of the following theorem,
which we prove below.

\begin{theorem}\label{thm:success prob non-zero NP hard}
    Unless $P=NP$, there does not exist an algorithm to determine
    {\color{blue}whether $\mathrm{SSP}(\enumplans_{\targetmol}(\gG') ; \xi_f, \xi_b) > 0$}
    for arbitrary $\xi_f,\xi_b$ whose time complexity grows polynomially with the number of nodes
    in $\gG'$.
\end{theorem}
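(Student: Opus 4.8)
The plan is to establish NP-hardness via a polynomial-time many-one reduction from CNF-SAT (equivalently 3-SAT), from which Theorem~\ref{thm:success prob non-zero NP hard} follows: if a polynomial-in-$|\gG'|$ algorithm for deciding $\mathrm{SSP}>0$ existed, it would decide satisfiability in polynomial time, forcing $P=NP$. The guiding observation is that for any \emph{fixed} pair $(f,b)$ the recursion in equations~\ref{eqn:molecule success}--\ref{eqn:reaction success} computes $\rs(\targetmol;\gG',f,b)$ in time linear in $|\gG'|$; consequently all of the difficulty must be engineered to live in the \emph{correlations} that an arbitrary feasibility process $\xi_f$ is allowed to carry. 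I would therefore design $\xi_f$ so that drawing one sample $f\sim\xi_f$ is equivalent to drawing a uniformly random truth assignment to the SAT variables, and design $\gG'$ so that $\rs(\targetmol;\gG',f,b)=1$ holds exactly when the sampled assignment satisfies the formula. By equality~(A) of equation~\ref{eqn:estimating SSP from s}, $\mathrm{SSP}$ would then equal the fraction of satisfying assignments, so $\mathrm{SSP}>0$ iff the formula is satisfiable.

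For the construction, given $\phi=C_1\wedge\cdots\wedge C_m$ over variables $x_1,\dots,x_n$, I would build a layered AND/OR graph. The target $\targetmol$ has a single child reaction $r_\phi$ (always feasible) whose reactant molecules are one ``clause molecule'' $m_{C_j}$ per clause, so that $\targetmol$ succeeds iff every clause succeeds. Each non-buyable $m_{C_j}$ has one child reaction $r_{j,\ell}$ for each literal $\ell$ occurring in $C_j$, so $m_{C_j}$ succeeds iff at least one of its literal-reactions succeeds. Each $r_{j,\ell}$ has a single always-buyable reactant molecule, so $r_{j,\ell}$ succeeds iff it is itself feasible. Buyability is deterministic (leaf molecules always buyable, all others never), so $\xi_b$ is trivial and all randomness sits in $\xi_f$. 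I would let $\xi_f$ sample $(a_1,\dots,a_n)\in\{0,1\}^n$ uniformly and set $f(r_{j,\ell})=1$ iff the literal $\ell$ is true under $a$ (i.e.\ $a_i=1$ for $\ell=x_i$ and $a_i=0$ for $\ell=\neg x_i$), with $f(r_\phi)=1$ deterministically. This is a legitimate binary stochastic process over $\gR$ with a description of size polynomial in $\phi$, and $|\gG'|=O(n+\sum_j|C_j|)$ is polynomial in the instance.

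With this setup, for a sample corresponding to assignment $a$, the $\rs$-recursion gives $\rs(m_{C_j})=1$ iff $C_j$ is satisfied by $a$, hence $\rs(\targetmol)=1$ iff $a\models\phi$; thus $\mathrm{SSP}=\#\{a:a\models\phi\}/2^n$ and the reduction is correct. I expect the main subtlety to be the correlation requirement rather than the graph topology: a single variable $x_i$ may occur in many clauses, and the feasibility of \emph{all} occurrences of $x_i$ (feasible) together with \emph{all} occurrences of $\neg x_i$ (infeasible) must be tied \emph{jointly} to the one sampled bit $a_i$. This global consistency is exactly what an independent feasibility model cannot capture, and is precisely why the hardness is genuine and why the freedom of ``arbitrary $\xi_f$'' is essential. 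Finally I would note that the problem also lies in NP (guess an outcome in the support, verify in linear time via the $\rs$-recursion), so $\mathrm{SSP}>0$ is in fact NP-complete; Theorem~\ref{thm:success prob is NP hard} then follows as the stated corollary, since an exact polynomial-time computation of $\mathrm{SSP}$ would in particular decide whether $\mathrm{SSP}>0$.
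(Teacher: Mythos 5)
Your proof is correct and takes essentially the same approach as the paper: a polynomial-time reduction from 3-SAT in which a two-layer AND/OR graph (one AND node over clause molecules, each clause an OR over its literal reactions) encodes the formula, and the stochastic process encodes a uniformly random truth assignment, so that $\mathrm{SSP}>0$ if and only if the formula is satisfiable. The only difference is cosmetic and dual: you place the correlated randomness in the feasibility model $\xi_f$ (with deterministic buyability), whereas the paper sets $\xi_f\equiv 1$ and encodes the assignment in the buyability model $\xi_b$, making exactly one of the literal molecules $x_i,\neg x_i$ buyable in each sample.
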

Note that Theorem~\ref{thm:success prob non-zero NP hard}
is similar to but distinct from Theorem~\ref{thm:success prob is NP hard}.
The difference ({\color{blue}highlighted})
is that Theorem~\ref{thm:success prob is NP hard}
is about the difficulty of computing SSP,
while Theorem~\ref{thm:success prob non-zero NP hard}
is only about the difficulty of determining whether or not SSP is 0.
We now state a proof of Theorem~\ref{thm:success prob non-zero NP hard}:
\begin{proof}
We will show a reduction from the Boolean 3-Satisfiability Problem (3-SAT) to the problem of determining whether SSP is non-zero. As 3-SAT is known to be NP-hard \citep{Karp1972}, this will imply the latter is also NP-hard, completing the proof.

To construct the reduction, assume an instance $I$ of 3-SAT with $n$ variables $x_1$, \ldots, $x_n$, and $m$ clauses $c_1$, \ldots, $c_m$, each $c_j$ consisting of three literals (where a literal is either a variable or its negation).
We will construct an AND-OR graph $\gG(I)$ with size $\mathcal O(n + m)$, alongside with distributions $\xi_f(I)$~and $\xi_b(I)$,
such that the SSP in the constructed instance is non-zero if and only if $I$ is satisfiable.

In our construction we first set $\xi_f \equiv 1$, i.e.\@ assume all reactions described below are always feasible.

We then construct a set $P$ of $2n$ potentially buyable molecules,
corresponding to variables $x_i$ as well as their negations $\neg x_i$;
to simplify notation, we will refer to these molecules as $x_i$ or $\neg x_i$.
We then set $\xi_b(I)$ to a uniform distribution over all subsets $S \subseteq P$ such that $|S \cap \{x_i, \neg x_i\}| = 1$
for all $i$;
in other words, either $x_i$ or $\neg x_i$ can be bought, but never both at the same time.
Note that with this construction it is easy to support all necessary operations on $\xi_b$, such as (conditional) sampling or computing marginals.

It remains to translate $I$ to $\gG(I)$ in a way that encodes the clauses $c_j$. 
We start by creating a root OR-node $r$, with a single AND-node child $r'$.
Under $r'$ we build $m$ OR-node children, corresponding to clauses $c_j$; again, we refer to these nodes as $c_j$ for simplicity.
Finally, for each $c_j$, we attach $3$ children, corresponding to the literals in $c_j$.
Intuitively these $3$ children would map to three molecules from the potentially buyable set $P$, but formally the children of $c_j$ should be AND-nodes (while $P$ contains molecules, i.e.\@ OR-nodes); however, this can be resolved by adding dummy single-reactant reaction nodes.

To see that the reduction is valid, first note that $r$ is synthesizable only if all $c_j$ are, which reflects the fact that $I$ is a binary AND of clauses $c_j$. Moreover, each $c_j$ is synthesizable if at least one of its $3$ children is, which translates to at least one of the literals being satisfied. Our construction of $\xi_b$ allows any setting of variables $x_i$ as long as it's consistent with negations $\neg x_i$. Taken together, this means the SSP for $\gG(I)$ is non-zero if and only if there exists an assignment of variables $x_i$ that satisfies $I$, and thus the reduction is sound.
\end{proof}

\begin{corollary}
If a polynomial time algorithm did exist to compute the exact value of
$\mathrm{SSP}(\enumplans_{\targetmol}(\gG');\gG',\xi_f,\xi_b)$,
this algorithm would clearly also determine whether
$\mathrm{SSP}(\enumplans_{\targetmol}(\gG');\gG',\xi_f,\xi_b)>0$
in polynomial time,
violating Theorem~\ref{thm:success prob non-zero NP hard}.
This proves Theorem~\ref{thm:success prob is NP hard}.
\end{corollary}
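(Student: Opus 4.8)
The plan is to argue by contraposition, treating Theorem~\ref{thm:success prob non-zero NP hard} as a black box. That theorem already establishes that, unless $P=NP$, no algorithm whose running time is polynomial in the number of nodes of $\gG'$ can \emph{decide} whether $\mathrm{SSP}(\enumplans_{\targetmol}(\gG');\xi_f,\xi_b) > 0$. Consequently, to obtain Theorem~\ref{thm:success prob is NP hard} it suffices to exhibit a (trivial) reduction showing that a polynomial-time algorithm for \emph{computing} the exact value of SSP would yield a polynomial-time algorithm for this decision problem, directly contradicting Theorem~\ref{thm:success prob non-zero NP hard}.

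Concretely, I would suppose for contradiction that some algorithm $A$ outputs the exact value $v = \mathrm{SSP}(\enumplans_{\targetmol}(\gG');\xi_f,\xi_b)$ in time polynomial in $|\gG'|$. I then build a decision procedure for the ``$\mathrm{SSP} > 0$'' problem by running $A$ on the same instance $(\gG', \xi_f, \xi_b)$ and answering ``yes'' if and only if $v > 0$. Because both problems take \emph{identical} inputs, no re-encoding of the instance is required: the composed procedure is just one call to $A$ followed by a single sign comparison, so the reduction itself is immediate.

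The only point that warrants a line of care is that the appended comparison step is itself polynomial. Since $A$ runs in time polynomial in $|\gG'|$, its output $v$ necessarily has bit-length polynomial in the input size, and testing $v > 0$ therefore costs time at most linear in that bit-length. Hence the overall decision procedure remains polynomial-time, contradicting Theorem~\ref{thm:success prob non-zero NP hard} unless $P=NP$. This forces the conclusion that no polynomial-time algorithm for computing SSP exactly can exist unless $P=NP$, which is precisely the statement of Theorem~\ref{thm:success prob is NP hard}. I anticipate no genuine obstacle here: all the real difficulty lives in Theorem~\ref{thm:success prob non-zero NP hard}, and the entire content of this corollary is the observation that exact computation subsumes the corresponding threshold decision.
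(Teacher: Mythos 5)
Your proposal is correct and is essentially identical to the paper's own argument: the paper's proof of this corollary is precisely the observation that an exact polynomial-time SSP algorithm would decide the ``$\mathrm{SSP}>0$'' question in polynomial time, contradicting Theorem~\ref{thm:success prob non-zero NP hard}. Your added remark about the polynomial bit-length of the output is a harmless refinement that the paper leaves implicit.
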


\subsection{
    Computing
    \texorpdfstring{$\rs(\cdot)$, $\psi(\cdot)$ and $\rho(\cdot)$}{s, ψ and ρ}
    polynomial time
}
\label{appendix:proofs:polynomial-dp-updates}

Here, we state provide proofs that these quantities can be computed in polynomial time.
These proofs require the following assumption:
\begin{assumption}[Bound on children size]
\label{assumption:bound on graph edges}
    Assume $|\children_\gG(m)|\leq K$,
    $|\children_\gG(m)|\leq L$,
    $|\parents_\gG(m)|\leq J$
    for all molecules $m$ and reactions $r$,
    where $J,K,L\in\mathbb N$ are fixed constants.
\end{assumption}
This assumption essentially ensures that even as the retrosynthesis graph grows,
the number of neighbours for each node does not grow.

Now we state our main results.
\begin{lemma}[$\rs$ and $\psi$ in polynomial time]
\label{thm:s and psi in poly time}
    There exists an algorithm to calculate
    $\rs(n;\gG',f,b)$ and $\psi(n;\gG',f,b,h)$
    for every node $n\in\gG'$ whose time
    complexity is polynomial in the number of nodes in $\gG'$.
\end{lemma}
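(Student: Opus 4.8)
The plan is to reduce the claim to the cost-minimization formulations of $\rs(\cdot)$ and $\psi(\cdot)$ from Appendix~\ref{appendix:s,psi,rho min cost}, and then argue that those are instances of shortest-path problems on AND/OR graphs, which are solvable in polynomial time. Applying the bijection $x\mapsto-\log x$ (with the convention $-\log 0=+\infty$) to the recursions for $\rs$ (equations~\ref{eqn:molecule success}--\ref{eqn:reaction success}) and $\psi$ (equations~\ref{eqn:psi molecule}--\ref{eqn:psi reaction}) turns every $\max$ into a $\min$ and every product into a sum, so it suffices to compute $\dot\rs$ and $\dot\psi$ in polynomial time and then invert the transform. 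I would first observe that under Assumption~\ref{assumption:bound on graph edges} the number of edges of $\gG'$ is $\mathcal O(|\gG'|)$, so ``polynomial in the number of nodes'' coincides with ``polynomial in the size of $\gG'$'', and each $\min$/$\sum$ over a node's children costs $\mathcal O(1)$.

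I would handle $\dot\rs$ first, since it is the easier quantity. Because $\dot\rs$ takes values in $\{0,+\infty\}$, computing it is exactly the Boolean AND/OR reachability problem of marking which molecules are synthesizable: initialize every buyable molecule as solved, then repeatedly mark a reaction solved once all of its reactant children are solved and a molecule solved once any of its child reactions is solved. Using a queue that processes each node when its relevant children become solved, this reaches its unique least fixed point after touching each edge a bounded number of times, giving an $\mathcal O(|\gG'|)$ algorithm that is correct for both acyclic and cyclic $\gG'$.

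For $\dot\psi$ the acyclic case is immediate: process nodes in reverse topological order, so that each node is evaluated only after all its children, giving a single bottom-up sweep of cost $\mathcal O(|\gG'|)$. The cyclic case is the main obstacle, and here I would exploit that the transformed costs $\dot b,\dot f,\dot h$ are all non-negative. The OR-recursion for molecules is a $\min$ over children and the AND-recursion for reactions is a $\sum$ over children; both are monotone non-decreasing in the children's values, and non-negativity guarantees a node's value is never smaller than the values of the children determining it. This is precisely the structure for which Dijkstra's algorithm generalizes to AND/OR graphs: maintain tentative labels, repeatedly \emph{finalize} the unfinalized node of smallest tentative label, and relax its parents (a molecule parent via $\min$, a reaction parent by adding the newly finalized child cost to a running sum and finalizing once all of its children are finalized). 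Each node is finalized exactly once, yielding an $\mathcal O(|\gG'|\log|\gG'|)$ running time. Rather than re-derive this from scratch, I would instead invoke the existing polynomial-time minimum-cost AND/OR graph algorithms cited in the paper \citep{chakrabarti1994algorithms,hvalica1996andorsearch,jimenez2000efficient}, which apply verbatim to the cost formulation of Appendix~\ref{appendix:s,psi,rho min cost}.

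The crux, and the step I expect to require the most care, is confirming that the generalized-Dijkstra finalization is valid in the presence of cycles of zero-cost (feasible) reactions, where ties in tentative labels can otherwise make the order of finalization ambiguous. Following the resolution sketched in the excerpt, I would replace every zero cost by a constant $\epsilon>0$, run the label-setting algorithm on the strictly-positive instance (where finalization is unambiguous and correctness follows from the standard Dijkstra exchange argument, extended to AND nodes using that a reaction's cost is the sum of its children's already-minimal costs), and take $\epsilon\to 0$; Lemma~1 of \citet{hvalica1996andorsearch} ensures this limit is well-defined and recovers the intended unique solution. Combining the $\mathcal O(|\gG'|)$ computation of $\dot\rs$ with the polynomial-time computation of $\dot\psi$, and inverting the $-\log$ transform, establishes the lemma.
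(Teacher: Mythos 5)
Your proposal is correct and takes essentially the same route as the paper's proof: apply the $-\log$ transform to recast $\rs$ and $\psi$ as minimum-cost AND/OR graph equations (Appendix~\ref{appendix:s,psi,rho min cost}), note that Assumption~\ref{assumption:bound on graph edges} keeps the number of edges linear in the number of nodes, and then invoke the polynomial-time algorithms of \citet{chakrabarti1994algorithms}. Your additional elaborations --- the queue-based Boolean reachability computation for $\dot\rs$, the Knuth-style generalized Dijkstra sketch for $\dot\psi$, and the $\epsilon$-perturbation handling of zero-cost cycles --- are consistent refinements of that same argument rather than a different approach.
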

\begin{proof}
    Section~\ref{appendix:s,psi,rho min cost}
    showed how $-\log{\rs(\cdot)}$
    and $-\log{\psi(\cdot)}$
    correspond exactly cost equations for minimum-cost synthesis plans,
    as studied for the AO* algorithm.
    \citep{chakrabarti1994algorithms} provides two algorithms,
    \texttt{Iterative\_revise}
    and \texttt{REV*}
    whose runtime is both worst-case polynomial in the number of nodes
    (assuming a number of edges which does not grow more than linearly
    with the number of nodes,
    which assumption~\ref{assumption:bound on graph edges} guarantees).
\end{proof}

The following lemma shows that,
with a bit of algebraic manipulation,
the same strategy can be applied to $\rho$.
\begin{lemma}
    There exists an algorithm to calculate
    $\rho(n;\gG',f,b)$
    for every node $n\in\gG'$ whose time
    complexity is polynomial in the number of nodes in $\gG'$.
\end{lemma}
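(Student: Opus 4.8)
The plan is to reuse Lemma~\ref{thm:s and psi in poly time} to obtain all $\psi$ values, and then reduce the computation of $\rho$ to a single-source shortest-path problem under the $-\log$ transform of Section~\ref{appendix:s,psi,rho min cost}. First I would compute $\psi(n;\gG',f,b,h)$ for every node in polynomial time via Lemma~\ref{thm:s and psi in poly time} and pass to the transformed quantities $\dot\psi(\cdot)=-\log\psi(\cdot)$ and $\dot\rho(\cdot)=-\log\rho(\cdot)$, which obey the equations displayed in Section~\ref{appendix:s,psi,rho min cost}. Recall that every reaction has a unique parent (single-product reactions), so the update for $\dot\rho(r)$ carries no minimization and reads $\dot\rho(r)=\dot\rho(m')-\dot\psi(m')+\dot\psi(r)$ with $m'$ the parent of $r$, while for a non-target molecule the update is $\dot\rho(m)=\min_{r\in\parents_{\gG'}(m)}\dot\rho(r)$.

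Next I would exhibit these as the Bellman optimality equations of a shortest-path instance. Build a directed graph $H$ on the nodes of $\gG'$, with edges pointing from each node to its children, assigning weight $w(m'\to r)=\dot\psi(r)-\dot\psi(m')$ to each edge from a molecule $m'$ to a child reaction $r$ (so $m'=\parents_{\gG'}(r)$), and weight $w(r\to m)=0$ to each edge from a reaction $r$ to a child molecule $m$. Writing $d(n)=\dot\rho(n)-\dot\psi(\targetmol)$, the three cases of equations~\ref{eqn:rho molecule}--\ref{eqn:rho reaction} become exactly $d(\targetmol)=0$, $d(r)=d(m')+w(m'\to r)$ (the single incoming edge to $r$), and $d(m)=\min_{r\in\parents_{\gG'}(m)}\left[d(r)+w(r\to m)\right]$; that is, $d(\cdot)$ is the shortest-path distance from $\targetmol$ in $H$, and $\rho(n)=\exp\!\left(-\dot\psi(\targetmol)-d(n)\right)$.

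The main obstacle, and the ``bit of algebraic manipulation'' alluded to in the lemma, is showing that the edge weights are non-negative, which is precisely what allows me to invoke an efficient shortest-path algorithm rather than a general (and potentially ill-defined) negative-weight solver. The $r\to m$ edges have weight $0$, and for an $m'\to r$ edge equation~\ref{eqn:psi molecule} gives $\dot\psi(m')=\min[\dot b(m'),\min_{r}\dot\psi(r)]\leq\dot\psi(r)$, so $w(m'\to r)=\dot\psi(r)-\dot\psi(m')\geq 0$. With non-negative weights I would run Dijkstra from $\targetmol$; under Assumption~\ref{assumption:bound on graph edges} the number of edges is linear in $|\gG'|$, giving an $\mathcal O(|\gG'|\log|\gG'|)$ bound, which together with the polynomial cost of computing $\psi$ is polynomial overall. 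Infeasible reactions contribute $\dot\psi(r)=+\infty$, hence an infinite-weight edge that the shortest-path search never traverses unless forced, recovering the $\rho(r)=0$ case of equation~\ref{eqn:rho reaction}; and because all weights are non-negative the distances $d(\cdot)$ remain well-defined even when $\gG'$ has cycles, with any zero-weight-cycle ambiguity resolved exactly by the limiting $\epsilon\to 0$ argument already used for $\rs$, $\psi$, and $\rho$.
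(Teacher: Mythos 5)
Your proposal is correct and follows essentially the same route as the paper: compute $\psi$ first via Lemma~\ref{thm:s and psi in poly time}, apply the log-ratio transform (your $d(n)=\log\frac{\psi(\targetmol;\gG',f,b,h)}{\rho(n;\gG',f,b,h)}$ is exactly the paper's $\eta$), recognize the resulting recursions as single-source shortest-path Bellman equations over $\gG'$, and solve with Dijkstra in log-linear time. The one point where you go beyond the paper's write-up is the explicit verification that all edge weights are non-negative (via $\psi(m')\geq\psi(r)$ for child reactions $r$), which is a genuine prerequisite for invoking Dijkstra that the paper leaves implicit.
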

\begin{proof}
    To do this, first compute $\psi(\cdot)$
    for every node
    (which lemma~\ref{thm:s and psi in poly time} states can be done in polynomial time).
    Next, define the function
    $\eta(\cdot)=\log{\frac{\psi(\targetmol;\gG',f,b,h)}{\rho(\cdot;\gG',f,b,h)}}$.
    This is essentially a constant transformation of $\rho(\cdot)$
    at each point (since $\psi(\targetmol;\gG',f,b,h)$ is just a constant).
    Therefore $\eta$ has the analytical solution:
    \begin{align}
        \eta(m;\gG',f,b,h) &= \begin{cases}
        0 & m=\targetmol \\ 
        \min_{r\in\parents_{\gG'}(m)}\eta(r;\gG',f,b,h) & \text{all other }m \\
        \end{cases}  \\
        \eta(r;\gG',f,b,h) &= \begin{cases}
            \infty & \psi(r;\gG',f,b,h) = 0 \\
            \eta(m';\gG',f,b,h) + \log{\frac{\psi(m';\gG',f,b,h)}{\psi(r;\gG',f,b,h)}} & \psi(r;\gG',f,b,h) < \infty, m'\in\parents_{\gG'}(r)\ .
        \end{cases}
    \end{align}

    Now, define the graph $\tilde\gG'$ by flipping the direction of all edges
    in $\gG'$ (so all parents become children, and vice versa).
    $\eta$ can be re-written as:
    \begin{align}
        \eta(m;\tilde\gG',f,b,h) &= \begin{cases}
        0 & m=\targetmol \\ 
        \min_{r\in\children_{\tilde\gG'}(m)}\eta(r;\tilde\gG',f,b,h) & \text{all other }m \\
        \end{cases}  \\
        \eta(r;\tilde\gG',f,b,h) &= \begin{cases}
            \infty & \psi(r;\tilde\gG',f,b,h) = 0 \\
            \sum_{m'\in\children_{\tilde\gG'}(r)}
                \eta(m';\tilde\gG',f,b,h) + \log{\frac{\psi(m';\gG',f,b,h)}{\psi(r;\gG',f,b,h)}} & \psi(r;\gG',f,b,h) < \infty \ , \label{eqn:eta tilde for reactions}
        \end{cases}
    \end{align}
    where the sum in equation~\ref{eqn:eta tilde for reactions} was introduced because reactions in $\gG'$
    have only one parent.
    These equations correspond precisely to minimum cost equations from AO*,
    allowing algorithms from \citet{chakrabarti1994algorithms} to be applied.
    $\rho$ can then be straightforwardly recovered from $\eta$.

    However, since the sum in equation~\ref{eqn:eta tilde for reactions}
    has only one element, this is in fact equivalent to path-finding in an ordinary graph,
    so Dijkstra's algorithm could be used to solve for $\eta$ in log-linear time
    (once $\psi$ is solved).
\end{proof}

\clearpage
\section{
    Discussion of previous search algorithms and why they might struggle to optimize SSP
}
\label{appendix:search-algorithms}

This section aims to provide a more detailed discussion of previously-proposed algorithms.
It is structured into 3 parts.
In \ref{appendix:relative performance caveats},
we qualify the content of this section by explaining how the performance of algorithms will depend on a lot of factors,
and therefore we cannot really say that any algorithm will be incapable of maximizing SSP.
In \ref{appendix:configuring previous algorithms}
we review previous algorithms and state how they might be configured to maximize SSP.
For most algorithms, the closest configuration to maximizing SSP
is to reward individual synthesis plans with high success probability.
Section~\ref{appendix:individual plans not ssp} provides an argument of why
this may not always maximize SSP.

\subsection{Can we say anything definitive about the ability of different algorithms to maximize SSP?}
\label{appendix:relative performance caveats}

The behaviour of a search algorithm can depend on many configurable
parameters (including function-valued parameters):
\begin{enumerate}
    \item Reward functions (or similar): algorithms will behave differently based on the kinds of synthesis plans which are rewarded.
    \item Heuristics: most heuristic-guided search algorithms will, to some degree, follow the guidance of the search heuristic.
    \item Other hyperparameters (e.g.\@ the exploration constant in MCTS).
\end{enumerate}

How can these parameters be adjusted?
One option is to set these parameters
independently of $\xi_f$ or $\xi_b$ (i.e.\@ completely ignoring them).
Even under these conditions, an algorithm could possibly return a set of synthesis plans with high SSP.
This could happen due to random chance (e.g.\@ lucky choice of nodes to expand),
or due to some kind of ``alignment'' between the algorithm's internals,
$\xi_f$, and $\xi_b$
(for example, reactions with high feasibility having low cost).
However, such outcomes are clearly not attainable \emph{systematically}
without accessing $\xi_f$ and $\xi_b$,
since random chance is not repeatable,
and ``alignment'' for one feasibility/buyability model
necessarily means misalignment for another one.
Therefore, we argue it only makes sense to compare algorithms'
ability to maximize SSP when they are configured to use information
from $\xi_f$ and $\xi_b$ to maximize SSP.

Even under these conditions however,
given a particular $\targetmol$, it is likely possible to design a custom heuristic
and setting of the algorithm's parameters
which will lead an algorithm to maximize SSP very effectively for that particular $\targetmol$.
This makes it difficult [perhaps impossible] to
prove statements like ``algorithm A is fundamentally incapable of effectively maximizing SSP.''
At the same time, given any configured version of an algorithm
(including a search heuristic, reward, etc.),
it is likely possible to find a particular $\targetmol$
and feasibility model $\xi_f$, $\xi_b$ where the algorithm fails catastrophically
(a little bit like the no free lunch theorem in machine learning).
This makes it difficult [perhaps impossible] to prove statements like
``algorithm A is generally better than algorithm B at maximizing SSP.''

Therefore, when discussing previous algorithms in this section,
we merely adopt the goal of showing how the algorithm cannot be
configured to directly and straightforwardly maximize SSP
in the same manner as retro-fallback.

\subsection{Existing algorithms and their configurability}
\label{appendix:configuring previous algorithms}

\subsubsection{Breadth-first search}

\paragraph{Description of algorithm}
A very basic search algorithm: expand frontier nodes in the order they were added.

\paragraph{Configurable inputs/parameters}
None.

\paragraph{How to configure to maximize SSP?}
N/A

\paragraph{Potential modifications to the algorithm that could help?}
We see no obvious changes.

\subsubsection{Monte-Carlo Tree Search (MCTS)}

\paragraph{Description of algorithm}
Used in \citep{segler2018planning, coley2019robotic, genheden2020aizynthfinder}.
MCTS creates an MDP where ``states'' are set of molecules and ``actions''
are reactions which react one molecule in a state and replace it with other molecules.
This corresponds to the ``OR'' tree introduced in
section~\ref{appendix:or-graph-details}.
At each step, it descends the tree choosing nodes which maximize
\begin{equation}
    \frac{W(s_t,a)}{N(s_t,a)} + cP(s_t,a)\frac{\sqrt{N(s_{t-1},a_{t-1})}}{1+N(s_t,a)}\ ,
\end{equation}
where $W(s_t,a)$ is the total reward accumulated while taking action $a$ to reach state $s_t$, $N(s_t,a)$ is the number of times when the algorithm has performed action $a$ to reach state $s_t$ and $P(s_t,a)$ is some sort of prior probability of performing action $a$ to reach $s_t$,
and $c$ is a constant.
The algorithm is designed to eventually converge to the action sequence which maximizes reward.

\paragraph{Configurable inputs/parameters}
The reward function (mapping states to scalar rewards) $R$,
the search heuristic $V$ (an estimate of the value function, e.g.\@ based on rollouts),
the policy / prior $P$,
and the UCB exploration constant $c$.

\paragraph{How to configure to maximize SSP?}
Rewards and value functions in MCTS depend on \emph{individual}
synthesis plans,
so a reward of $\sigma(T;\xi_b;\xi_f)$ would reward individually
successful synthesis plans.
In practice, this reward could be estimated with samples.

\paragraph{Potential modifications to the algorithm that could help?}
One option is to make the reward and policy change over time.
For example, one could have the reward be the \emph{additional} SSP
gained from discovering a new plan.
However, it is possible that MCTS will not behave well in this scenario:
the principle of MCTS is to narrow down on the best sequence of actions by slowly
tightening a confidence interval around their expected return.
However, if the rewards change over time then these interval estimates will likely become inaccurate.
Although the intervals could be widened (e.g.\@ by increasing $c$)
this will result in MCTS behaving more randomly
and not searching as efficiently. Also, we note that there are many possible design choices here, and further probing of alternative options might yield improved results. 

\subsubsection{Depth-first proof number search (with heuristic edge initialization)}\label{appendix:algs:pns}

\paragraph{Description of algorithm}
Proposed in \citet{heifets2012construction} and augmented by \citet{kishimoto2019depth}.
Basic proof number search assigns ``proof numbers'' and ``disproof numbers''
which represent the number of nodes needed to prove or disprove whether there exist a synthesis plan to synthesize a given molecule and selects nodes using this information.
The heuristic edge initialization \citet{kishimoto2019depth} uses a heuristic to initialize the proof and disproof numbers for each reaction.

\paragraph{Configurable inputs/parameters}
The edge initialization heuristic $h$
and threshold functions to control depth-first vs breadth-first search behaviour.

\paragraph{How to configure to maximize SSP?}
This algorithm inherently takes a very binary view of retrosynthesis
(seeing nodes as either proven or disproven),
and therefore is not very amenable to maximizing SSP.
At best one could change the heuristic values to reflect feasibilities.

\paragraph{Potential modifications to the algorithm that could help?}
Aside from changing the definitions of proof/disproof numbers, we do not see any options here.

\subsubsection{Retro*}

\paragraph{Description of algorithm}
An algorithm for minimum cost search in AND/OR trees \citep{chen2020retro},
essentially identical to the established AO* algorithm
\citep{chang1971admissible,martelli1978optimizing,nilsson1982principles,mahanti1985andor}.
At each step, the algorithm selects a potential synthesis plan $T$
with minimal estimated cost,
where the cost of a synthesis plan
is defined as
\begin{equation*}
    c_\text{total}(T)=\sum_{r\in T}c_R(r)+\sum_{m\in T} c_M(m)
\end{equation*}
(i.e.\@ a sum of individual costs for each reaction and molecule),
then expands one frontier node from $T$.
A heuristic is used to set the costs $c_M$ for frontier molecules.

\paragraph{Configurable inputs/parameters}
The parameters of retro* are the costs of each molecule
and reaction, and the heuristic functions.

\paragraph{How to configure to maximize SSP?}
SSP does not [in general] decompose into a straightforward sum or product of values
for every node.
Most likely, the closest proxy to SSP
would be to set 
a cost for each molecule and reaction with $\xi_f$ and $\xi_b$.
One option is to set the cost of each reaction/molecule
to the negative log of its marginal feasibility/buyability:
\begin{align*}
    c_M(m) &= -\log{\E_{b\sim\xi_b}\left[b(m)\right]} \\
    c_R(r) &= -\log{\E_{f\sim\xi_f}\left[f(r)\right]}\ .
\end{align*}
This option does have a somewhat principled interpretation:
if all feasibility/buyability outcomes are independent
then the cost of a plan is just the negative log of its joint SSP.
Of course, this relationship does not hold in the general non-independent case.
However, we do not see a way to adjust this formula to account for non-independence in general,
so we suggest using this setting in all cases.

\paragraph{Potential modifications to the algorithm that could help?}
One could potentially change the reaction and molecule
costs with time to account for changes elsewhere in the search graph.
For example, reactions/molecules which are already part of a feasible plan
could be re-assigned higher costs to make the algorithm search for non-overlapping plans.
However, this strategy seems unlikely to work in general:
for some search graphs it is possible that the best backup plans will share some reactions
with established synthesis plans.
We were unable to come up with a strategy that did not have obvious and foreseeable failure modes,
so we decided not to pursue this direction.

\subsubsection{RetroGraph}

\paragraph{Description of algorithm}
Proposed in \citet{xie2022retrograph},
this algorithm functions like a modified version of retro*
but on a minimal AND/OR graph instead of a tree.
A graph neural network is used to prioritize nodes.

\paragraph{Configurable inputs/parameters}
Like retro*, this method uses additive costs across reactions
and a heuristic functions,
but also uses a graph neural network to select promising nodes.

\paragraph{How to configure to maximize SSP?}
Similar to retro*, we believe the closest proxy is to set costs
equal to the negative log marginal feasibility/buyability
and minimize this.

\paragraph{Potential modifications to the algorithm that could help?}
We could not think of anything.

\subsection{Finding individually successful synthesis plans may not be enough to maximize SSP}
\label{appendix:individual plans not ssp}

Since the best configuration for most existing algorithms
to maximize SSP appears to be to reward finding individual synthesis
plans with high success probability,
here we examine to what degree this objective overlaps with
optimizing SSP.
Clearly the objectives are \emph{somewhat} related:
SSP will always be at least as high as the success probability
of any individual synthesis plan considered.
However,
it is not difficult to imagine cases where these objectives diverge.

Figure~\ref{fig:existing-algorithms-fail} illustrates such a case,
wherein a synthesis plan with reactions $r_1,r_2$
has been found and the algorithm must choose between expanding $m_3$
or $m_4$.
When considering whether to expand $m_3$ however,
it must be noted that any new synthesis route proceeding via $r_3$ will also use $r_1$,
and therefore will only increase SSP for samples $f,b$
where $f(r_1)=1$ \emph{and} $f(r_3)=1$ \emph{and} $f(r_2)=0$ (or $b(m_2)=0$).
For example, if $\probability_f[f(r_2)=0]$ is very small,
or the feasibility of $r_2$ and $r_3$ are highly correlated,
then it is very unlikely that expanding $r_3$ will increase SSP,
even if the $\{r_1,r_3\}$ synthesis plan may have an individually high success probability.
In these cases expanding $m_4$ would be a better choice.
In other cases, expanding $m_3$ would be the better choice.

This exactly illustrates that to maximize SSP \emph{beyond} finding an individually successful synthesis plan,
an algorithm would clearly need to account for the statistical dependencies between
existing and prospective synthesis plans in its decision-making,
which simply is not possible by reasoning about individual synthesis plans in isolation.
This provides compelling motivation to develop algorithms which can reason directly about sets of synthesis plans.

\begin{figure}[tb]
    \centering
    \begin{tikzpicture}

    \node[ornode, fill=green!20] (mt) at (0, 0) {$\targetmol$};

    \node[andnode, fill=green!20] (r1) at (1.5, -0.6) {$r_1$};
    \draw[->] (mt) -- (r1);
    \node[ornode, fill=green!20] (m1) at (3.0, -0.6) {$m_1$};
    \draw[->] (r1) -- (m1);
    \node[andnode, fill=green!20] (r2) at (4.5, -.6) {$r_2$};
    \draw[->] (m1) -- (r2);
    \node[ornode, fill=green!20] (m2) at (6.0, -.6) {$m_2$};
    \draw[->] (r2) -- (m2);
    \node[andnode, fill=red!20] (r3) at (4.5, 0.5) {$r_3$};
    \draw[->] (m1) -- (r3);
    \node[ornode, fill=red!20] (m3) at (6.0, 0.5) {$m_3$};
    \draw[->] (r3) -- (m3);

    \node[andnode, fill=red!20] (r4) at (1.5, 0.5) {$r_4$};
    \draw[->] (mt) -- (r4);
    \node[ornode, fill=red!20] (m4) at (3.0, 0.5) {$m_4$};
    \draw[->] (r4) -- (m4);

\end{tikzpicture}
    \caption{
    AND/OR graph illustrating how maximizing SSP
    can be different from finding individually successful synthesis plans (cf. \ref{appendix:individual plans not ssp}). 
    Green nodes are part of an existing synthesis plan
    with non-zero success probability, red nodes are not.
    }
    \label{fig:existing-algorithms-fail}
\end{figure}
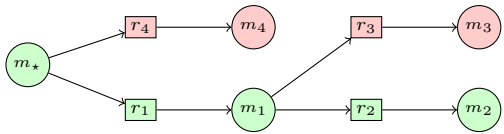

\clearpage
\section{Extended related work (excluding previous search algorithms)}\label{appendix:related work}

Here we cite and discuss papers which are relevant to retrosynthesis,
but do not propose multi-step search algorithms 
(which are discussed in Appendix~\ref{appendix:search-algorithms}).

\subsection{Search heuristics for retrosynthesis}\label{appendix:related work:heuristics}

Many papers propose search heuristics for retrosynthesis algorithms,
including rollouts \citep{segler2018planning},
parametric models \citep{chen2020retro},
and a variety of heuristics informed by chemical knowledge~\citep{schwaller2020predicting,ertl2009estimation,thakkar2021retrosynthetic,li2022prediction}.
Many papers also propose to learn heuristics using techniques from machine or reinforcement learning,
where a heuristic is learned based on previous searches or data \citep{coley2018scscore,liu2022retrognn,kim2021self,yu2022grasp,liu2023pdvn}.
A potential point of confusion is that some of these works describe their contribution as a ``retrosynthesis algorithm''
or ``retrosynthetic planning algorithm.''
Given that the end product of these papers is a value function (or cost estimate)
which is plugged into a previously proposed algorithm
(typically MCTS or retro*),
we think these papers should be more accurately viewed as proposing \emph{heuristics}.
The heuristic is orthogonal to the underlying search algorithm,
so we view these works as complementary rather than competitive.
We hope in the future to investigate learning heuristics for retro-fallback using similar principles.

\subsection{Generative models}\label{appendix:related work:generative models}
Several works propose parametric generative models of synthesis plans
\citep{bradshaw2019model,bradshaw2020barking,gottipati2020learning,gao2021amortized}.
Although this resembles the goal of explicit search algorithms,
such generative models are fundamentally limited by their parametrization:
they have no guarantee to find a synthesis plan if it exists,
and are often observed to fail to produce a valid synthesis plan in practice \citep{gao2021amortized}.
We think such models are best viewed as trying to \emph{amortize} the output of an explicit planning algorithms,
making them more similar in spirit to search heuristics (\ref{appendix:related work:heuristics}).

\subsection{Single-step retrosynthesis}\label{appendix:related work:single step}
Many models have been proposed to predict possible chemical reactions,
including template classifiers \citep{segler2017neural,seidl2021modern},
graph editing methods \citep{dai2019retrosynthesis,sacha2021molecule,chen2021deep},
and transformers \citep{irwin2022chemformer,zhong2022root,liu2023fusionretro}.
Such models are a useful component of a retrosynthesis algorithm,
but do not themselves perform \emph{multi-step} retrosynthesis.

\subsection{FusionRetro}

One work which does not fit nicely into any of the previous subsections is FusionRetro
\citep{liu2023fusionretro}.
On one level, the paper describes a reaction prediction model based on a transformer,
which is essentially a single-step reaction prediction model (\ref{appendix:related work:single step}).
However, unlike other models which just condition on a single input molecule,
in FusionRetro the predictions are conditioned on all predecessor molecules in a multi-step search graph.
The paper describes an inference procedure to make predictions from the model autoregressively,
which resembles both a generative model for synthesis plans (\ref{appendix:related work:generative models})
or a pruning heuristic for breadth-first search (\ref{appendix:related work:heuristics}).
A significant portion of the paper also describes benchmarking and evaluation of plan quality.

We think that FusionRetro and retro-fallback can both be viewed as responses
to unrealistically lenient evaluation metrics used in prior works on retrosynthesis
(chiefly reporting success if a ``solution'' is found without any regard to whether the solution is realistic).
\citet{liu2023fusionretro}'s general response is to evaluate the quality of entire plans rather than individual steps,
and perform this evaluation using entire synthesis plans from the literature.
The advantage of this approach is that it is close to ground-truth data,
but has the disadvantage that high-quality ground truth data is fairly scarce,
especially for long plans involving rare reactions.
In contrast, our response is to model uncertainty about reactions and use this uncertainty in evaluation (to define SSP).
The advantage of our approach is that it does not [necessarily] require any data,
while the disadvantage is that it requires a good model of reaction uncertainty,
which we currently do not have (and creating such a model is likely to be difficult).

Critically, the approaches described in these papers are \emph{not} mutually exclusive:
a backward reaction model which depends on the entire search graph $\gG'$ (such as FusionRetro)
could be used in retro-fallback,
while the quality of synthesis plans proposed by a method like FusionRetro could be evaluated using SSP.
We leave combining and building upon these works in more realistic retrosynthesis programs to future work.

\subsection{Planning in stochastic graphs}\label{appendix:related work:canadian traveller}

Prior works have also considered planning in stochastic graphs,
albeit in other contexts.
For example, the ``Canadian Traveller Problem'' and its variants \citep{papadimitriou1991shortest}
study search on a graph where edges can be randomly deleted.%
\footnote{
    The original problem statement was to find a path between two cities in Canada,
    where roads may be randomly blocked by snow and force the driver to turn back
    and find an alternative path
    (in reality however, Canadians simply drive through the snow).
}
However, this is an \emph{online} problem,
meaning that the planning algorithm learns about edge deletions during the planning process.
In contrast, our algorithm assumes \emph{offline} planning
because chemists desire complete synthesis plans \emph{before} performing any lab experiments.
Moreover, works in this area seem to assume explicit OR graphs with independent probabilities,
while our problem uses implicit AND/OR graphs with non-independent probabilities.

\clearpage
\section{Extended experiment section}\label{appendix:experimental details}

\subsection{Details of experimental setup}\label{appendix:experimental setup details}

\subsubsection{Software Implementation}\label{appendix:details:code}

Our code is available at \retrofallbackcodeurl{}.
We built our code around the open-source library \textsc{syntheseus}\footnote{https://github.com/microsoft/syntheseus/} \citep{maziarz2023re}
and used its implementations of retro* and MCTS in our experiments.
The exact template classifier from \citet{chen2020retro} was used
by copying their code and using their model weights.
Our code benefitted from the following libraries:
\begin{itemize}
    \item \texttt{pytorch} \citep{paszke2019pytorch}, \texttt{rdkit}\footnote{
    Specifically version 2022.09.4 \citep{rdkit2022_09_04}.
    }
    and \texttt{rdchiral} \citep{coley2019rdchiral}. Used in the template classifier.
    \item \texttt{networkx} \citep{hagberg2008exploring}. Used to store search graphs and for analysis.
    \item \texttt{numpy} \citep{harris2020array}, \texttt{scipy} \citep{virtanen2020scipy}, and \texttt{scikit-learn} \citep{scikit-learn}. Used for array programming and linear algebra (e.g.\@ in the feasibility models).
\end{itemize}

\subsubsection{Feasibility models}\label{appendix:feasibility model details}

As stated in section~\ref{sec:experiments},
we examined four feasibility models for this work,
which assign different marginal feasibility values and different correlations between feasibility outcomes.
The starting point for our feasibility models was the opinion of a trained organic chemist
that around 25\% of the reactions outputted by the pre-trained template classification model
from \citet{chen2020retro} were ``obviously wrong''.
From this, we proposed the following two marginal values for feasibility:
\begin{enumerate}
    \item (C) A constant value of $1/2$ for all reactions.
        This is an attempt to account for the 25\% of reactions which were ``obviously wrong'',
        plus an additional unknown fraction of reactions which seemed plausible but may not work in practice.
        Ultimately anything in the interval $[0.2,0.6]$ seemed sensible to use, and we chose $1/2$ as a nice number.
    \item (R) Based on previous work with template classifiers suggesting that the quality
        of the proposed reaction decreases with the softmax value \citep{segler2017neural,segler2018planning},
        we decided to assign higher feasibility values to reactions with high softmax values.
        To avoid overly high or low feasibility values,
        we decided to assign values based on the \emph{rank} of the outputted reaction,
        designed the following function
        which outputs a high feasibility ($\approx$75\%) for the top reaction
        and decreases to ($\approx$10\%) for lower-ranked reactions:
        \begin{equation}
            p(\mathrm{rank}) = \frac{0.75}{\mathrm{rank}/10}\ .
        \end{equation}
        Note that ``rank'' in the above equation starts from 1.
\end{enumerate}

We then added correlations on top of these marginal feasibility values.
The independent model is simple: reaction outcomes are sampled independently using the marginal feasibility values described above.
To introduce some correlations without changing the marginal probabilities,
we created the following probabilistic model which assigns feasibility outcomes
by applying a threshold to the value of a latent Gaussian process \citep{williams2006gaussian}:
\begin{align}
    \mathrm{outcome}(z) &= \1_{z > 0} \\
    z(r) &\sim \mathcal{GP}\left(\mu(\cdot), K(\cdot, \cdot)\right) \\
    \mu(r) &= \Phi^{-1}\left(\probability[f(r)=1]\right) \\
    K(r,r) &= 1 \quad\forall r \label{eqn:gplvm:variance}
\end{align}
Here, $\Phi$ represents the CDF of the standard normal distribution.
Because of equation~\ref{eqn:gplvm:variance},
the marginal distribution of each reaction's $z$ value is $\mathcal{N}(\Phi^{-1}(p(r)), 1)$
which will be positive with probability $\probability[f(r)=1]$
(i.e.\@ it preserves arbitrary marginal distributions).
If $K$ is the identity kernel (i.e. $K(r_1,r_2)=\1_{r_1=r_2}$) then this model implies all outcomes are independent.
However, non-zero off-diagonal values of $K$ will induce correlations (positive or negative).

We aimed to design a model which assigns correlations very conservatively:
only reactions involving similar molecules \emph{and} which induce similar changes in the reactant molecules
will be given a high positive correlation; all other correlations will be near zero.
We therefore chose a kernel as a product of two simpler kernels:
\begin{equation*}
    K_\text{total}(r_1,r_2)=K_\text{mol}(r_1,r_2)K_\text{mech}(r_1,r_2)\ .
\end{equation*}
We chose $K_\text{mol}(r_1,r_2)$ to be the Jaccard kernel 
\begin{equation*}
    k(x,x') = \frac{\sum_i \min(x_i,x'_i)}{\sum_i \min(x_i,x'_i)}
\end{equation*}
between the Morgan fingerprints \citep{rogers2010extended} with radius 1 of the entire set of product and reactant molecules.\footnote{
This is the same as adding the fingerprint vectors for all component molecules.
}
We chose $K_\text{mech}(r_1,r_2)$ to be the Jaccard kernel of the absolute value of the \emph{difference} between the product and reactant fingerprints individually.
The difference vector between two molecular fingerprints will essentially yield the set of subgraphs
which are added/removed as part of the reaction.
For this reason, it has been used to create representations of chemical reactions
in previous work \citep{schneider2015development}.

We illustrate some outputs of this kernel in Figures~\ref{fig:rxn illustration K0.8}--\ref{fig:rxn illustration K0.05}.
Figure~\ref{fig:rxn illustration K0.8} shows that reactions with a high kernel value ($>0.8$) are generally quite similar,
both in product and in mechanism.
Figure~\ref{fig:rxn illustration K0.4} shows that reactions with modest similarity values in $[0.4,0.6]$
have some similarities but are clearly less related.
Figure~\ref{fig:rxn illustration K0.05} shows that reactions with low similarity values in $[0.05,0.1]$
are generally quite different.
After a modest amount of exploratory analysis we were satisfied that this kernel behaved as we intended,
and therefore used it in our experiments without considering further alternatives.
However, we imagine there is room for improvement of the kernel in future
work to better align with the beliefs of chemists.

\begin{figure}
    \centering
    \includegraphics[width=0.49\textwidth]{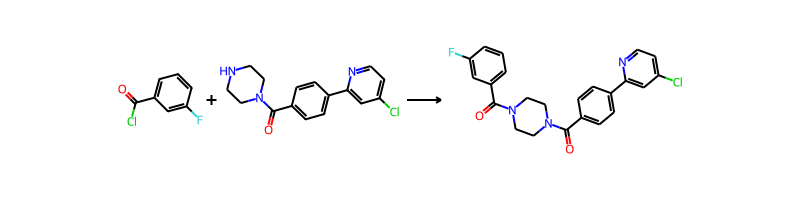}
    \includegraphics[width=0.49\textwidth]{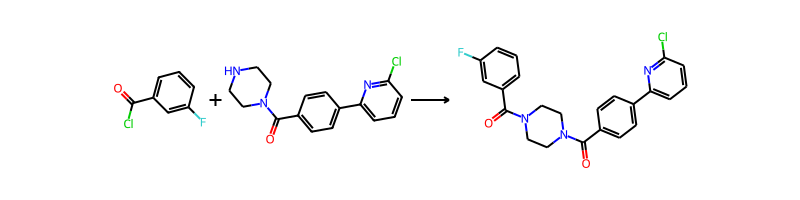}
    \rule{\textwidth}{0.2pt}
    \includegraphics[width=0.49\textwidth]{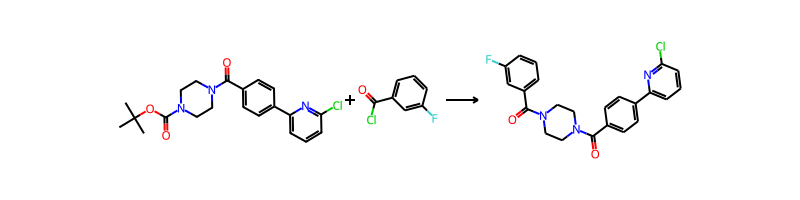}
    \includegraphics[width=0.49\textwidth]{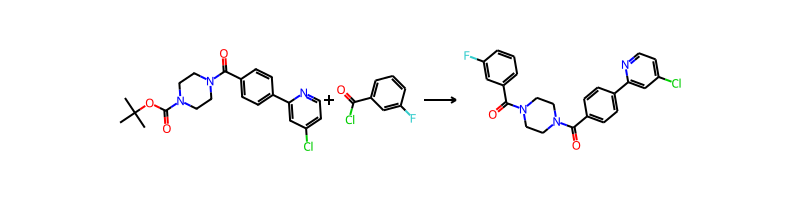}
    \rule{\textwidth}{0.2pt}
    \includegraphics[width=0.49\textwidth]{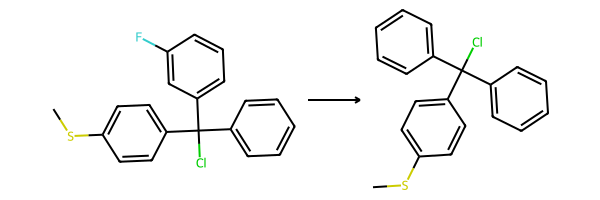}
    \includegraphics[width=0.49\textwidth]{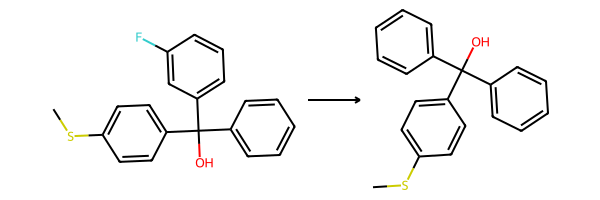}
    \caption{
    Sample pairs of reactions where $K_\text{total}>0.8$.
    \textbf{Top:}
    both reactions join a COCl group to an NH group in a ring to form molecules
    which differ only by the location of the Cl atom on the right side ring (far away from the reaction site).
    \textbf{Middle:}
    two reactions transforming a tert-butyl ester into a ketone with a fluorine-containing ring
    (difference between reactions is the location of the Cl atom on the ring far away from the reaction site).
    \textbf{Bottom:}
    two reactions removing a fluorine atom from an aromatic ring on similar molecules
    (difference is between the Cl and OH groups).
    \textbf{Summary:}
    these pairs of reactions are all very similar.
    }
    \label{fig:rxn illustration K0.8}
\end{figure}

\begin{figure}
    \centering
    \includegraphics[width=0.49\textwidth]{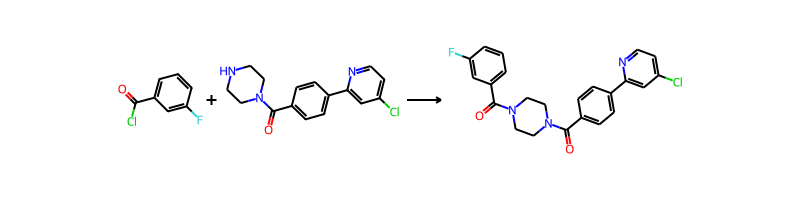}
    \includegraphics[width=0.49\textwidth]{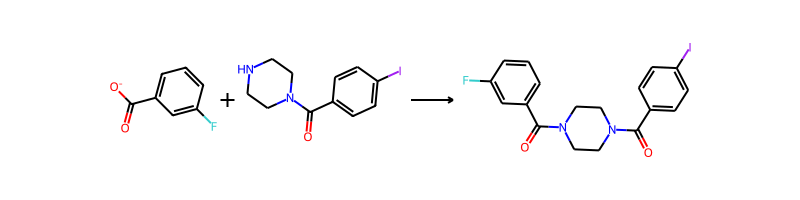}
    \rule{\textwidth}{0.2pt}
    \includegraphics[width=0.49\textwidth]{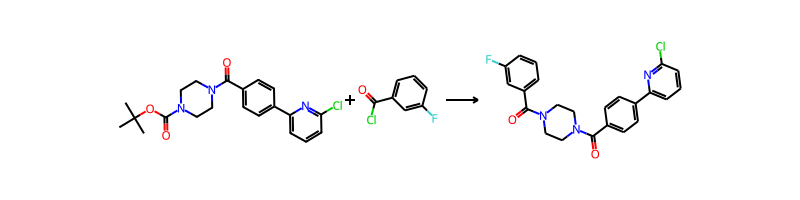}
    \includegraphics[width=0.49\textwidth]{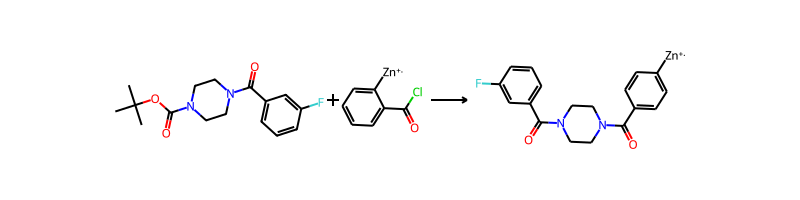}
    \rule{\textwidth}{0.2pt}
    \includegraphics[width=0.49\textwidth]{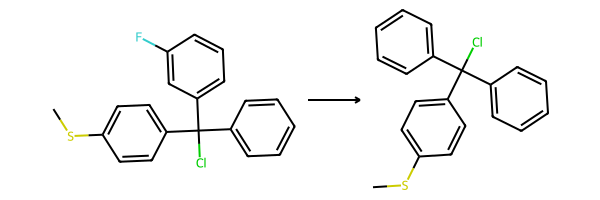}
    \includegraphics[width=0.49\textwidth]{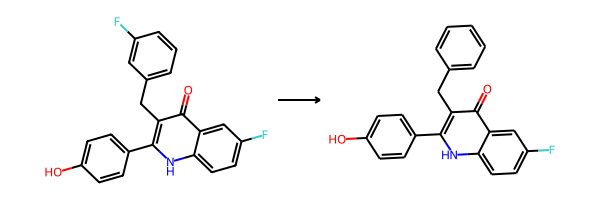}
    \caption{
    Examples of reactions where $0.4 \leq K_\text{total}\leq 0.6$.
    \textbf{Top:}
    similar conjugation reactions, but the reactant on the right side is now a COO$^-$ anion
    instead of a COCl group.
    \textbf{Middle:}
    similar reaction, although on the right reaction has a Zn$^+$ on the ring instead of F.
    \textbf{Bottom:}
    two reactions which remove a fluorine atom from an aromatic ring but on molecules
    which are much less similar than Figure~\ref{fig:rxn illustration K0.8}.
    \textbf{Summary:}
    these pairs of reactions have similarities but are less similar than the reactions in Figure~\ref{fig:rxn illustration K0.8}.
    }
    \label{fig:rxn illustration K0.4}
\end{figure}

For feasibility models based on Gaussian processes,
drawing $k$ independent samples
for a set of $N$ reactions
will generally scale as $\mathcal O(kN^3)$.
This makes the feasibility model expensive for longer searches.
Other feasibility models which induce correlations are likely to have similar scaling.
However, for this particular kernel,
approximate samples could be drawn in $\mathcal O(kN)$
time by using a random features approximation for the Jaccard kernel
\citep{tripp2023tanimoto}.

\begin{figure}
    \centering
    \includegraphics[width=0.49\textwidth]{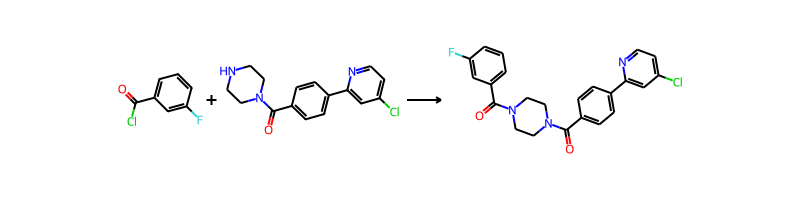}
    \includegraphics[width=0.49\textwidth]{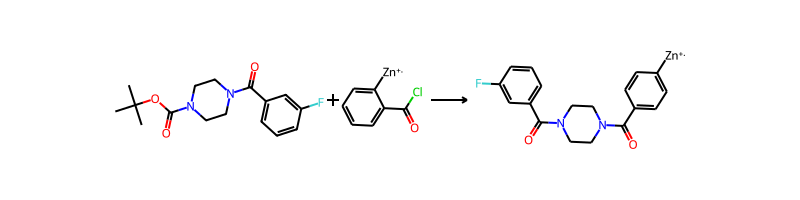}
    \rule{\textwidth}{0.2pt}
    \includegraphics[width=0.49\textwidth]{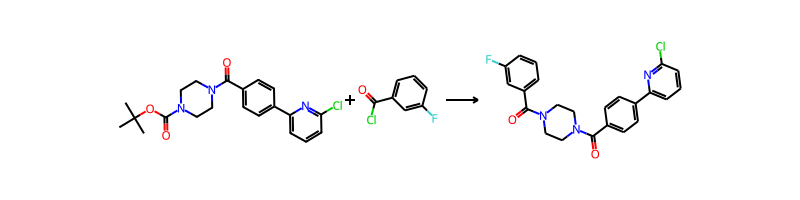}
    \includegraphics[width=0.49\textwidth]{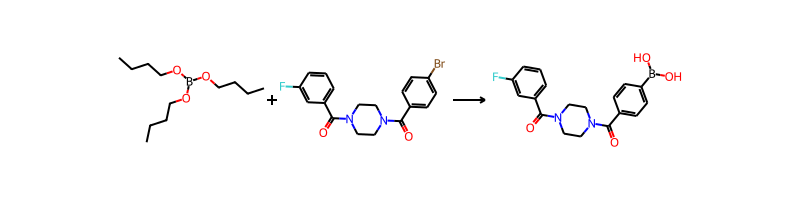}
    \rule{\textwidth}{0.2pt}
    \includegraphics[width=0.49\textwidth]{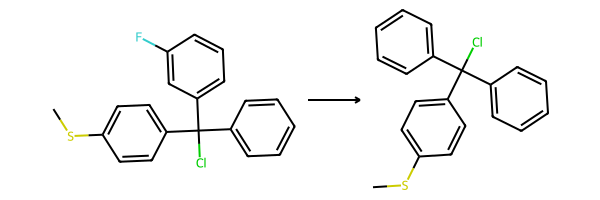}
    \includegraphics[width=0.49\textwidth]{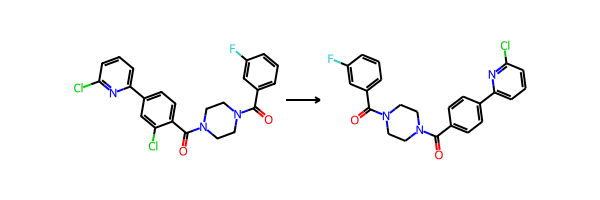}
    \caption{
    Examples of reactions where $0.05 \leq K_\text{total}\leq 0.1$.
    The pairs of reactions are generally quite different.
    }
    \label{fig:rxn illustration K0.05}
\end{figure}

\clearpage  %
\subsubsection{Buyability Models}\label{appendix:buyability model details}

Following \citet{chen2020retro} we based our buyability models on the inventory of eMolecules:
a chemical supplier which acts as a middleman between more specialized suppliers and consumers.
According to eMolecule's promotional material\footnote{
At the time of publication, this was available at:
\url{https://21266482.fs1.hubspotusercontent-na1.net/hubfs/21266482/GUCHBBXX-E-02.01-0322_eMolecules\%20Tier\%20Guide.pdf}
},
they offer 6 ``tiers'' of molecules:
\begin{enumerate}
    \setcounter{enumi}{-1}  %
    \item (\emph{Accelerated Tier}). ``Delivered in 2 days
or less, guaranteed. Most reliable delivery
service. Compound price is
inclusive of a small service
fee, credited back if not
delivered on time. Available
in the US only.''
    \item ``Shipped within 1- 5
business days. Compounds from suppliers
proven to ship from their
location in $<5$ days.''
    \item ``Shipped within 10
business days. Compounds from suppliers
proven to ship from across
the globe in $<10$ days''
    \item ``Shipped within 4
weeks. Shipped from suppliers
further from your site and
often with more complex
logistics. Synthesis may be
required using proven
reactions.''
    \item ``Shipped within 12
weeks. Usually requires custom
synthesis on demand.''
    \item ``Varied ship times. Requires custom synthesis
for which a quote can be
provided on request.''
\end{enumerate}

Much like machine learning researchers,
chemists usually want to complete experiments as quickly as possible
and probably would prefer not to wait 12 weeks for a rare molecule to be shipped to them.
Such molecules could arguably be considered less ``buyable'' on this subjective basis alone,
so we decided to create buyability models based on the tier of molecule.
Unfortunately, the public repository for retro* does not contain any information on the tier of each molecule,
and because their inventory was downloaded in 2019 this information is no longer available on eMolecules' website.
Therefore we decided to re-make the inventory using the latest data.

We downloaded data from eMolecules \href{https://downloads.emolecules.com/free-extended/2023-09-01/}{downloads page}\footnote{Downloaded 2023-09-08.},
specifically their ``orderable'' molecules and ``building blocks'' with quotes. %
After filtering out a small number of molecules (31407) whose SMILES were not correctly parsed by rdkit
we were left with 14903392 molecules with their associated purchase tiers.
Based on this we created 2 buyability models:
\begin{itemize}
    \item \textbf{Binary:}
        all molecules in tiers 0-2 are purchasable with 100\% probability.
        Corresponds to realistic scenario where chemists want to do a synthesis and promptly.
    \item \textbf{Stochastic:} molecules are independently purchasable with probability that depends on the tier (100\% for tiers 0-2, 50\% for tier 3, 20\% for tier 4, 5\% for tier 5).
        These numbers were chosen as subjective probabilities that the compounds would be delivered within just 2 weeks
        (shorter than the longer times advertised).
        This still corresponds to a chemist wanting to do the synthesis within 2 weeks,
        but being willing to risk ordering a molecule whose stated delivery time is longer.
\end{itemize}

All of the experiments in this text (except for
\ref{appendix:expts:non binary buyability})
were run using the \emph{binary} buyability model.
In the future,
we believe that better buyability models could be formed by introducing correlations between molecules coming from the same supplier,
but we do not investigate that here (chiefly because the eMolecules data we downloaded does not contain information about suppliers).

\subsubsection{Test molecules}\label{appendix:expt-details:test-mols}

The 190 test molecules were accessed using the \texttt{syntheseus}
wrapper package for this benchmark.\footnote{
    Available at: \url{https://pypi.org/project/syntheseus-retro-star-benchmark/}
}

To include molecules with a wider range of synthetic difficulties,
we also performed experiments on a set of 1000 randomly selected molecules
from the 
GuacaMol test set \citep{brown2019guacamol}.
These test molecules were generated with the following procedure:
\begin{enumerate}
    \item Download the publicly available test set from \citet{brown2019guacamol}
    \item Filter our all molecules available in the eMolecules inventory (\ref{appendix:buyability model details})
    \item Shuffle all molecules and take the first 1000
\end{enumerate}
Code to reproduce this process, and the entire test set in shuffled order 
is included in our code.

Finally, we also ran an experiment with the test data
from FusionRetro \citep{liu2023fusionretro},
using the test dataset found on their GitHub repository.

\subsubsection{Algorithm configuration}\label{appendix:expt-details:alg-config}

Retro-fallback was run with $k=256$ samples from $\xi_f,\xi_b$.
A graph-structured AND/OR search graph was used (which may contain cycles).
$\rs(\cdot)$, $\psi(\cdot)$, and $\rho(\cdot)$
were solved by iterating the recursive equations (including around cycles)
until convergence
(if this did not occur we reset all values to 0 and resumed iteration).

All other algorithms were configured to maximize SSP,
as described in Appendix~\ref{appendix:configuring previous algorithms}.
In particular, this means:
\begin{itemize}
    \item Breadth-first search was run with no modifications,
        using the implementation from \texttt{syntheseus}.
    \item retro* was run using $-\log{\E_f[f(r)]}$ as the reaction cost
and $-\log{\E_b[b(m)]}$ 
    \item MCTS was run using $\sigma(T;\xi_f,\xi_b)$ as the reward for finding synthesis plan $T$
        (empirically estimated from a finite number of samples).
        To allow the algorithm to best make use of its budget of reaction model calls,
        we only expanded nodes after they were visited 10 times.
        The marginal feasibility value of reach reaction was used as the policy in the upper-confidence bound.
        We used an exploration constant of $c=0.01$ to avoid ``wasting'' reaction model calls on exploration,
        and only gave non-zero rewards for up to 100 visits to the same synthesis plan
        to avoid endlessly re-visiting the same solutions.
\end{itemize}

All of these algorithms are run with a \emph{tree}-structured search graph
(which for MCTS is an ``OR'' graph; AND/OR for all others).

We chose \emph{not} to compare with proof-number search \citep{kishimoto2019depth}
because we did not see a way to configure it to optimize SSP (see Appendix~\ref{appendix:algs:pns}).
We chose not to compare with algorithms requiring some degree of learning from self-play
(including RetroGraph and the methods discussed in Appendix~\ref{appendix:related work:heuristics})
due to computational constraints,
and because it seemed inappropriate to compare with self-play methods
without also learning a heuristic for retro-fallback with self-play.

\subsubsection{Heuristic functions}
\label{appendix:expt-details:heuristic-config}

The heuristic obviously plays a critical role in heuristic-guided search algorithms!
Ideally one would control for the effect of the heuristic by using the same heuristic for different methods.
However, this is not possible when comparing algorithms from different families because the heuristics are interpreted differently!
For example, in retro-fallback the heuristic is interpreted as a potential future SSP value in $[0,1]$ (higher is better),
while in retro* it is interpreted as a cost between $[0,\infty]$ (lower is better).
If we used literally the same heuristic it would give opposite signals to both of these algorithms,
which is clearly not desirable or meaningful.
Therefore, we tried our best to design heuristics which were ``as similar as possible.''

\paragraph{Optimistic heuristic}
Heuristics which predict the best possible value are a common choice of naive heuristic.
Besides being an important baseline, optimistic heuristics are always \emph{admissible} 
(i.e.\@ they never overestimate search difficulty),
which is a requirement for some algorithms like A* to converge to the optimal solution \citep{pearl1984heuristics}.
For retro-fallback, the most optimistic heuristic is $h_\text{rfb}(m)=1$,
while for retro* it is $h_\text{r*}(m)=0$,
as these represent the best possible values for SSP and cost respectively.
For MCTS, the heuristic is a function of a \emph{partial plan} $T'$ rather than a single molecule.
We choose the heuristic to be $\E_{f\sim \xi_f}[\min_{r\in T'}f(r)]$,
which is the expected SSP of the plan $T'$ if it were completed by making every frontier molecule buyable.\footnote{
Note that the $\min$ function will be 1 if \emph{all} reactions are feasible, otherwise 0.
Using $\prod_r$ instead of $\min_r$ would yield the same output.
}
In practice this quantity was estimated from $k$ samples (same as retro-fallback).

\paragraph{SA score heuristic}
SA score gives a molecule a score between 1 and 10 based on a dictionary assigning synthetic difficulties to different subgraphs of a molecule \citep{ertl2009estimation}.
A score of 1 means easy to synthesize, while a score of 10 means difficult to synthesize.
For retro-fallback, we let the heuristic decrease linearly with the SA score:
\begin{equation*}
    h_\text{rfb}(m) = 1 - \frac{\mathrm{SA}(m)-1}{10}\ .
\end{equation*}
Because the reaction costs in retro* were set to negative log feasibility values,
we thought a natural extension to retro* would be to use
\begin{equation*}
     h_\text{r*}(m)=-\log{h_\text{rfb}(m)}   \ .
\end{equation*}
This choice has the advantage of preserving the interpretation of total cost as 
the negative log joint probability,
which also perfectly matches retro-fallback's interpretation of the heuristic
(recall that in section~\ref{sec:retro fallback ingredients} the heuristic values were assumed to be independent).
We designed MCTS's heuristic to also match the interpretation of ``joint probability'':
\begin{equation*}
    h_\text{MCTS}(T') = \E_{f\sim \xi_f}\left[\left(\underbrace{\min_{r\in T'}f(r)}_\text{reactions feasible}\right) \prod_{m\in\frontier(T'),b(m)=0} h_\text{rfb}(m) \right]
\end{equation*}
which is the expected SSP of the plan if all non-purchasable molecules are made purchasable independently with probability $h_\text{rfb}(m)$.

\subsubsection{Computing analysis metrics}

Our primary analysis metric is the SSP.
For algorithms which use AND/OR graphs (retro-fallback, retro*, breadth-first search),
we computed the SSP using equations~\ref{eqn:molecule success}--\ref{eqn:reaction success}
with $k=10\,000$ samples from $\xi_f,\xi_b$.
For algorithms using a tree-structured search graph,
this was pre-processed into a cyclic search graph before analysis
(for consistency).

For algorithms which use OR trees (in this paper, just MCTS)
the best method for analysis is somewhat ambiguous.
One option is to extract all plans $T\subseteq \gG'$ and calculate whether each plan succeeds
on a series of samples $f_i,b_i$.
A second option is to convert $\gG'$ into an AND/OR graph and analyze it like other AND/OR graphs.
Although they seem similar, these options are subtly different:
an OR graph may contain reactions in different locations which are not connected to form a synthesis plan,
but \emph{could} form a synthesis plan if connected.
The process of converting into an AND/OR graph would effectively form all possible synthesis plans
which could be made using reactions in the original graph,
even if they are not actually present in the original graph.
We did implement both methods and found that converting to an AND/OR graph tends to increase performance,
so this choice does make a meaningful difference.
We think the most ``realistic'' option is unclear,
so for consistency with other algorithms we chose to just convert to an AND/OR graph.

All analysis metrics involving individual synthesis routes
were calculated by enumerating the routes in best-first order using a priority queue,
implemented in \textsc{syntheseus}.

\clearpage
\subsection{
    Additional results for
    section~\ref{sec:experiment:how effective is retro fallback}
    (how effective is retro-fallback at maximizing SSP)
}

Figure~\ref{fig:ssp-rs190-optimistic}
shows that if the optimistic heuristic is used instead
of the SA score heuristic,
retro-fallback still maximizes SSP more effectively than
other algorithms.

\begin{figure}[h]
    \centering
    \includegraphics{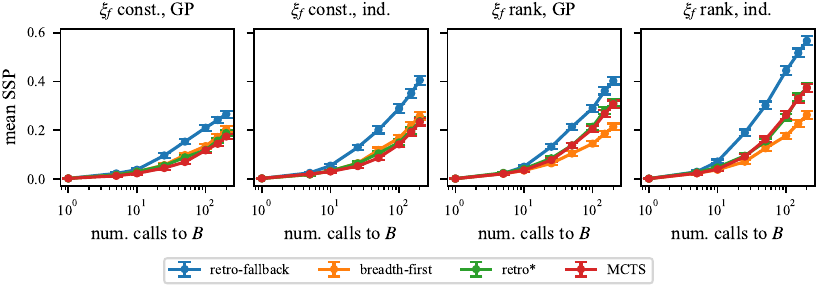}
    \caption{
        Mean SSP across all 190 test molecules
        from \citet{chen2020retro} vs time
        using the \emph{optimistic}
        heuristic.
        Interpretation is identical to Figure~\ref{fig:ssp-rs190-sascore}
        (except for the different heuristic).
    }
    \label{fig:ssp-rs190-optimistic}
\end{figure}

Figure~\ref{fig:ssp-guacamol-all-heuristics}
shows a similar result for the easier GuacaMol test
molecules.
However, the difference between retro-fallback and other algorithms is smaller
than on the 190 ``hard'' molecule test set from \citet{chen2020retro}.
This is likely because this test set contains many molecules which are easier
to synthesize.
For example, after a single iteration
algorithms achieve a mean SSP of $\approx0.15$
on the GuacaMol test set,
compared to $0$ for the molecules from \citet{chen2020retro}.

We also note the following:
\begin{itemize}
    \item For the ``constant, independent'' feasibility model, breadth-first search and retro* are essentially the same algorithm. Therefore their performance is almost identical.
    \item Comparing results with the SA score and optimistic heuristic,
        it appears that the use of a non-trivial heuristic does not actually make a terribly
        large difference.
        This is consistent with the results of \citet{chen2020retro}
        in their experiments with retro*, where adding a heuristic
        provided only a modest increase in performance.
\end{itemize}

\begin{figure}[ht]
    \centering
    {\large SA score heuristic} \\
    \vspace{0.3cm}
    \includegraphics{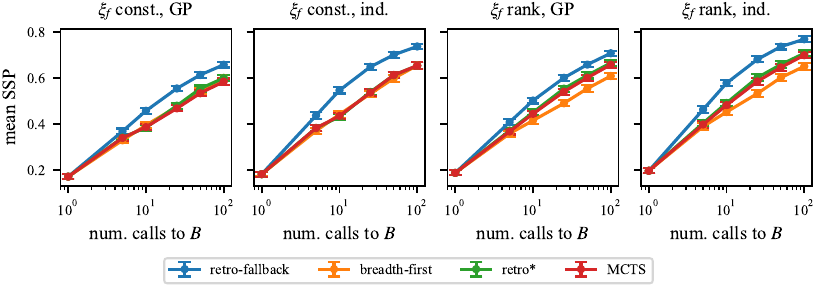} \\
    \vspace{0.3cm}
    {\large optimistic heuristic} \\
    \vspace{0.3cm}
    \includegraphics{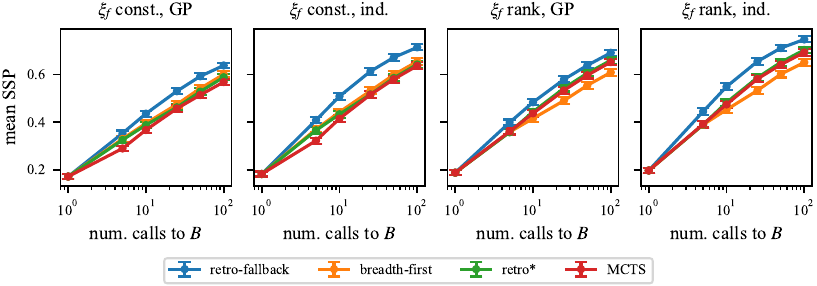}
    \caption{
        Mean SSP vs time
        for 1000 molecules from GuacaMol
        (described in \ref{appendix:expt-details:test-mols}).
        Interpretation is identical to 
        Figure~\ref{fig:ssp-rs190-sascore}.
    }
    \label{fig:ssp-guacamol-all-heuristics}
\end{figure}

\clearpage
\subsubsection{Results for individual molecules}
\label{appendix:expts:individual-mols}

The SSP results in Figures~\ref{fig:ssp-rs190-sascore},
\ref{fig:ssp-rs190-optimistic},
and \ref{fig:ssp-guacamol-all-heuristics}
are aggregated across all test molecules.
To understand where performance differences come from,
Figures~\ref{fig:ssp-individual-mols-sascore}
and \ref{fig:ssp-individual-mols-optimistic}
plot SSP over time for individual molecules.
It appears that there is a high amount of variation
due to randomness,
and variation in outcomes between molecules.

\begin{figure}[tb]
    \centering
    \includegraphics{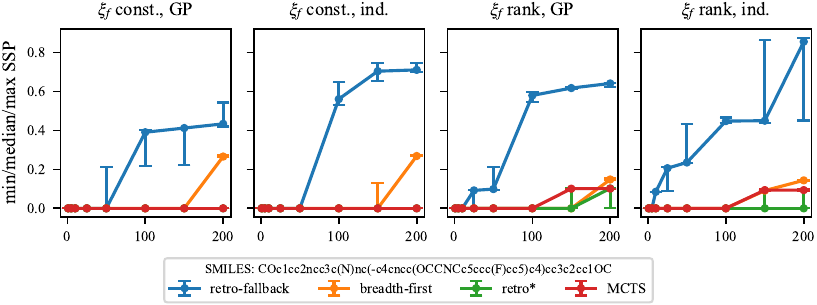}
    \includegraphics{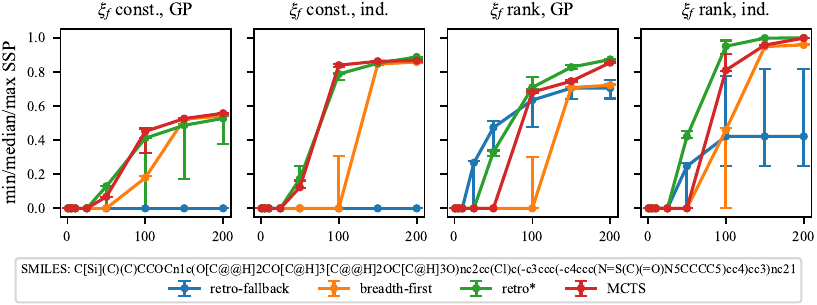}
    \includegraphics{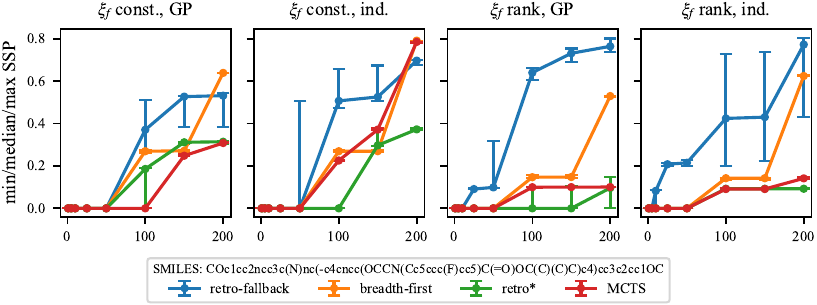}
    \includegraphics{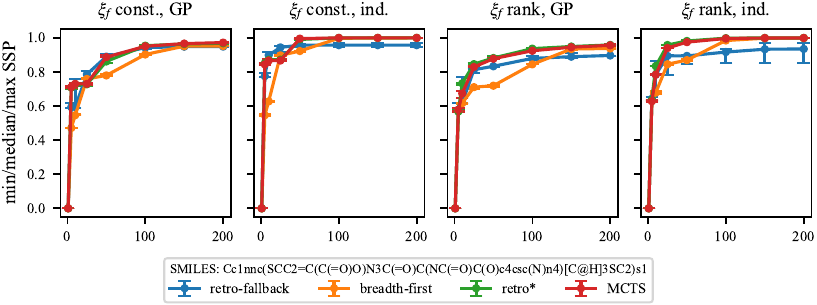}
    \caption{
        Min/median/max SSP vs time for individual molecules
        from the 190 ``hard'' molecule test set
        (across 3 trials).
        Algorithms are run using the \emph{SA score}
        heuristic.
    }
    \label{fig:ssp-individual-mols-sascore}
\end{figure}

\begin{figure}[tb]
    \centering
    \includegraphics{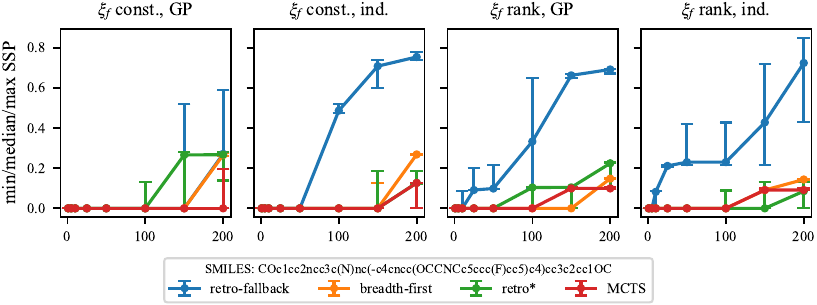}
    \includegraphics{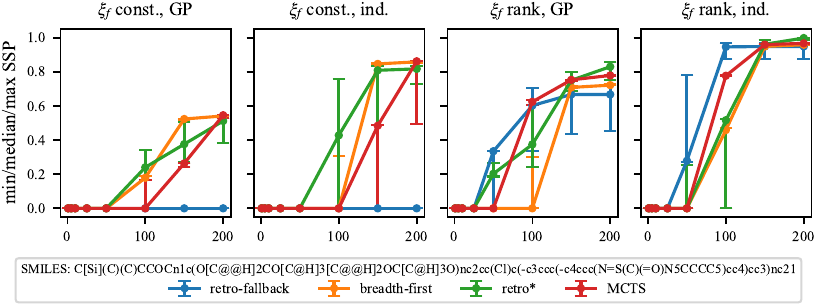}
    \includegraphics{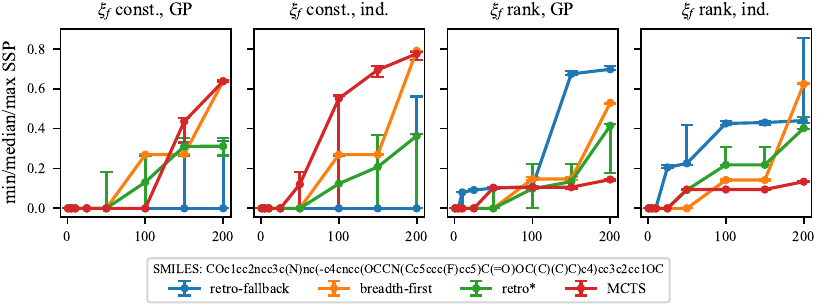}
    \includegraphics{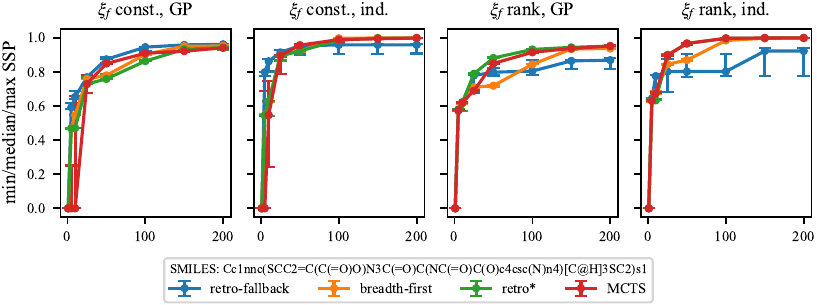}
    \caption{
        Min/median/max SSP vs time for individual molecules
        run with the
        \emph{optimistic}
        heuristic
        (same molecules as Figure~\ref{fig:ssp-individual-mols-sascore}).
    }
    \label{fig:ssp-individual-mols-optimistic}
\end{figure}

\clearpage
\subsubsection{Results using other metrics}
\label{appendix:expts:other-metrics}

Figure~\ref{fig:most-feasible-route}
shows box plots of the most successful synthesis plan by the
end of the search.
With the exception of constant independent
feasibility model on GuacaMol,
retro-fallback always performs at least as well as other algorithms.

Figure~\ref{fig:shortest-route}
shows box plots of the \emph{shortest} synthesis plan
(with non-zero success probability)
found by all algorithms by the end of the search.
Retro-fallback consistently performs at least as well as other algorithms.

Figure~\ref{fig:fraction solved}
shows the fraction of molecules
for which any synthesis plan
with non-zero success probability is found
over time.
Retro-fallback consistently performs better than other algorithms.

\begin{figure}[tb]
    \centering
    {190 ``hard'' molecules; SA score heuristic} \\
    \vspace{0.3cm}
    \includegraphics{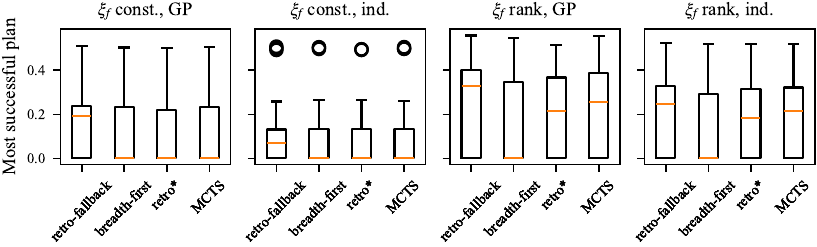} \\
    \vspace{0.3cm}
    {190 ``hard'' molecules; optimistic heuristic} \\
    \vspace{0.3cm}
    \includegraphics{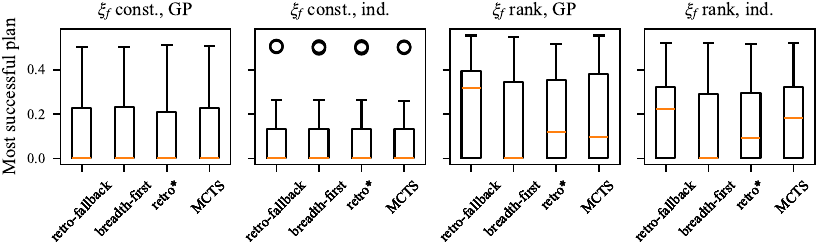} \\
    \vspace{0.3cm}
    {GuacaMol; SA score heuristic} \\
    \vspace{0.3cm}
    \includegraphics{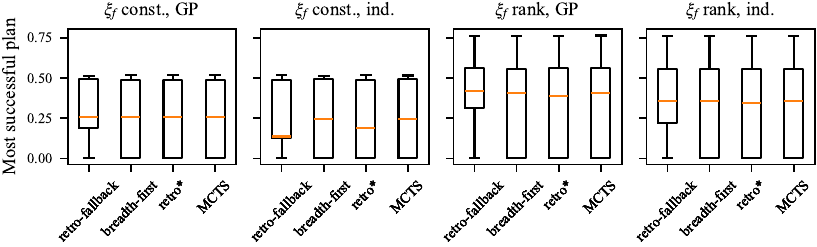} \\
    \vspace{0.3cm}
    {GuacaMol; optimistic heuristic} \\
    \vspace{0.3cm}
    \includegraphics{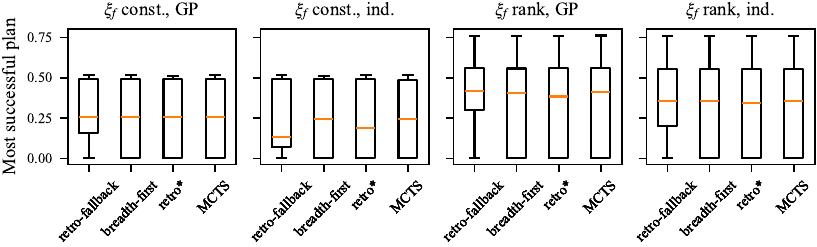}
    \caption{
        Success probability of most feasible synthesis plan
        at end of search
        (i.e.\@ $\max_{T\in\enumplans_{\targetmol}(\gG')}\sigma(T;\xi_f,\xi_b)$)
        for different algorithms.
    }
    \label{fig:most-feasible-route}
\end{figure}

\begin{figure}[tb]
    \centering
    {190 ``hard'' molecules; SA score heuristic} \\
    \vspace{0.3cm}
    \includegraphics{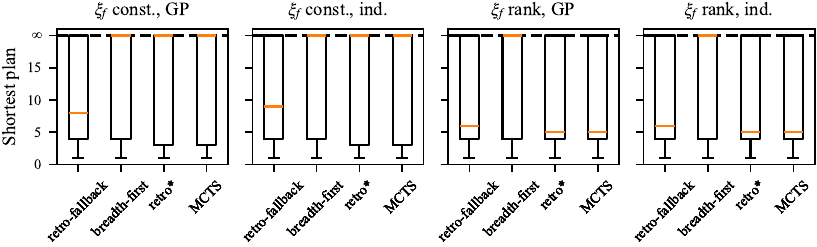} \\
    \vspace{0.3cm}
    {190 ``hard'' molecules; optimistic heuristic} \\
    \vspace{0.3cm}
    \includegraphics{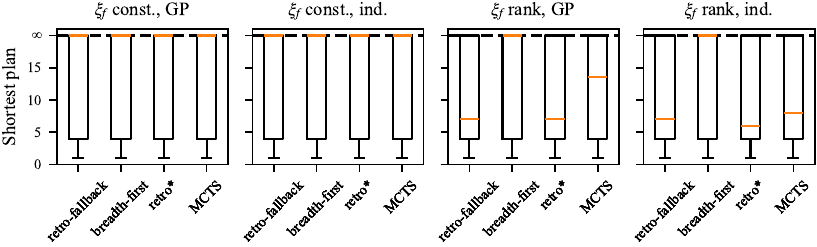} \\
    \vspace{0.3cm}
    {GuacaMol; SA score heuristic} \\
    \vspace{0.3cm}
    \includegraphics{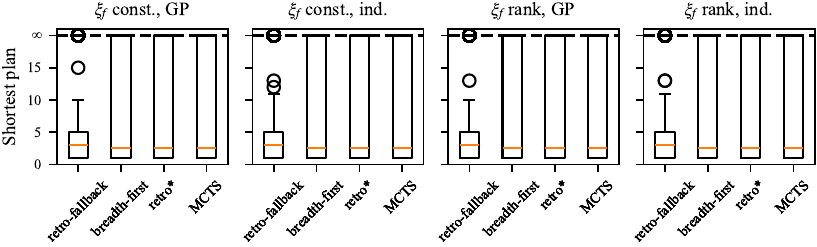} \\
    \vspace{0.3cm}
    {GuacaMol; optimistic heuristic} \\
    \vspace{0.3cm}
    \includegraphics{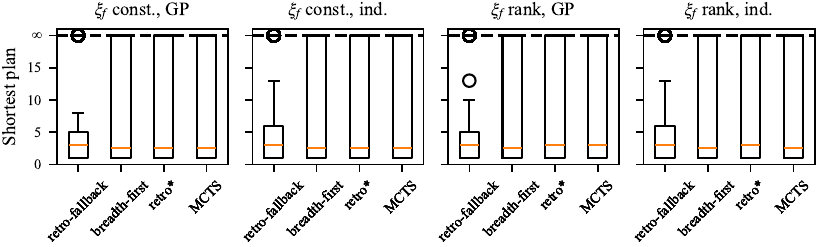}
    \caption{
        Distribution of lengths of the shortest synthesis plan
        with non-zero success probability
        at end of search
        for different algorithms.
    }
    \label{fig:shortest-route}
\end{figure}

\begin{figure}[tb]
    \centering
    {190 ``hard'' molecules; SA score heuristic} \\
    \vspace{0.3cm}
    \includegraphics[width=0.9\textwidth]{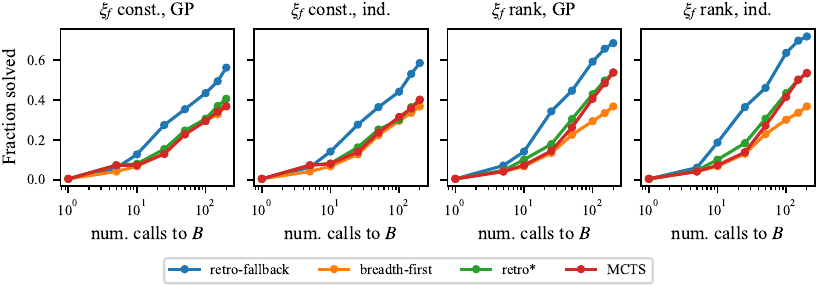} \\
    \vspace{0.3cm}
    {190 ``hard'' molecules; optimistic heuristic} \\
    \vspace{0.3cm}
    \includegraphics[width=0.9\textwidth]{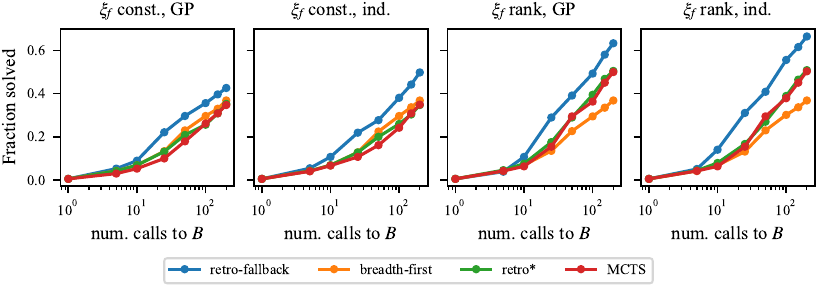} \\
    \vspace{0.3cm}
    {GuacaMol; SA score heuristic} \\
    \vspace{0.3cm}
    \includegraphics[width=0.9\textwidth]{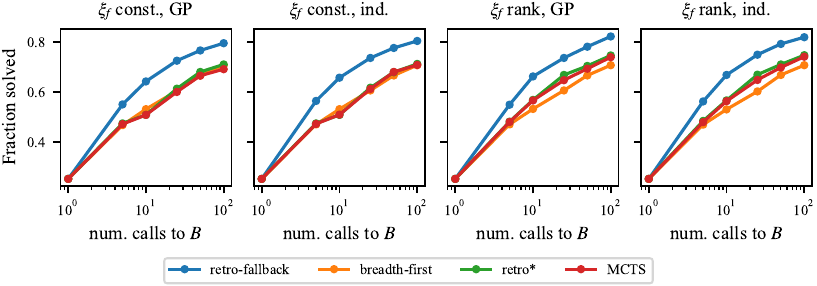} \\
    \vspace{0.3cm}
    {GuacaMol; optimistic heuristic} \\
    \vspace{0.3cm}
    \includegraphics[width=0.9\textwidth]{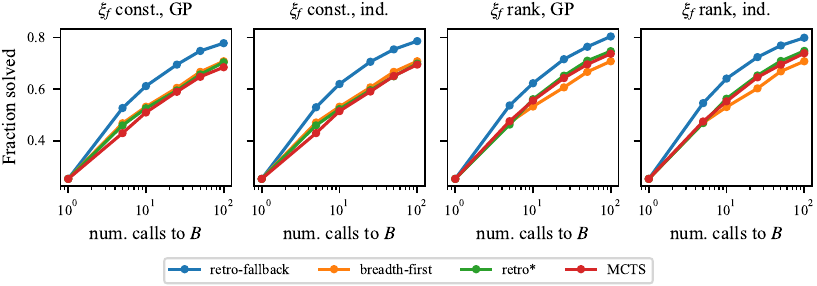}
    \caption{
        Fraction of molecules for which a synthesis plan
        with non-zero success probability was found
        vs time
        for different algorithms.
    }
    \label{fig:fraction solved}
\end{figure}

\clearpage
\subsubsection{Results on FusionRetro benchmark}
\label{appendix:expts:fusionretro}

Here we present results on the benchmark dataset from FusionRetro,
which is derived from USPTO routes and contains 5838 test molecules
\citep{liu2023fusionretro}.
In addition to SSP and the fraction solved,
we also evaluate performance by checking whether the outputted synthesis plans
use the same starting molecules as a known ground-truth synthesis route.
\citet{liu2023fusionretro} call this metric ``exact set-wise matching,''
but we will call it \emph{precursor matching} because we think this name is less ambiguous.
Because this metric depends on the purchasable molecules,
we use a buyability model derived from the inventory of \citet{liu2023fusionretro}
instead of the model derived from eMolecules used for all other experiments.
This is a deterministic model: molecules in the inventory are independently buyable with probability 1,
and all other molecules are not buyable.
We use the rank-independent feasibility model from section~\ref{sec:experiment:how effective is retro fallback}.
All other details are kept the same.

The results after 50 reaction model calls are tabulated in Table~\ref{tab:fusion-retro-results}.
As expected, retro-fallback attains higher SSP scores than the baseline retro* and breadth-first search methods,
regardless of the heuristic.
Just like on the GuacaMol test set from section~\ref{sec:experiment:how effective is retro fallback},
retro-fallback also finds at least one potential synthesis route for a higher fraction of molecules than the baselines.
Finally, the precursor matching for the single most feasible synthesis plan is extremely similar for all methods (around 11\%, with differences between the methods not being statistically significant).\footnote{
Readers familiar with \citet{liu2023fusionretro} may wonder why the precursor matching scores are much lower
than what is reported in Table~1 of \citet{liu2023fusionretro}.
This is because we used the same pre-trained reaction model from \citet{chen2020retro}
as our single-step model,
whereas \citet{liu2023fusionretro} retrains the models using their training dataset.
We did not re-train the model because it also forms the basis for our feasibility model,
which was loosely calibrated with the inspections of an expert chemist.
}
This is what one would expect: the best synthesis plans found by all methods will likely be similar;
the difference between retro-fallback and the other algorithms is in the secondary synthesis plans that it finds.
Overall, these results show that retro-fallback outperforms baseline algorithms on the SSP metric while being no worse
than the baselines on metrics which involve only single synthesis plans.

\begin{table}[hb]
    \centering
    \begin{tabular}
{llr@{\hspace{0.02cm}$\pm$\hspace{0.02cm}}l@{\hspace{0.30cm}}r@{\hspace{0.02cm}$\pm$\hspace{0.02cm}}l@{\hspace{0.30cm}}r@{\hspace{0.02cm}$\pm$\hspace{0.02cm}}l@{\hspace{0.30cm}}}
\toprule
Algorithm & Heuristic & \multicolumn{2}{c}{Mean SSP (\%)} & \multicolumn{2}{c}{Solved (\%)} & \multicolumn{2}{c}{Precursor Match (\%)}\\
\midrule
retro-fallback & optimistic & 67.58 & 0.57 & 72.03 & 0.59 & 10.84 & 0.41\\
retro-fallback & SAScore & 68.66 & 0.57 & 73.19 & 0.58 & 10.84 & 0.41\\
\midrule  %
breadth-first & N/A & 61.66 & 0.61 & 65.26 & 0.62 & 10.96 & 0.41\\
retro* & optimistic & 65.40 & 0.60 & 68.62 & 0.61 & 11.58 & 0.42\\
retro* & SAScore & 65.93 & 0.60 & 69.10 & 0.60 & 11.58 & 0.42\\
MCTS & optimistic & 65.02 & 0.60 & 68.43 & 0.61 & 11.20 & 0.41\\
MCTS & SAScore & 65.37 & 0.60 & 68.64 & 0.61 & 11.13 & 0.41\\
\bottomrule
\end{tabular}
    \caption{
        Results on 5838 test molecules FusionRetro benchmark \citep{liu2023fusionretro}.
        Experimental details and metrics are explained in section~\ref{appendix:expts:fusionretro}.
        Larger values are better for all metrics.
        $\pm$ values indicate standard error of the mean estimate.
    }
    \label{tab:fusion-retro-results}
\end{table}

\clearpage
\subsubsection{Results using a non-binary buyability model}
\label{appendix:expts:non binary buyability}

Figure~\ref{fig:rstar190-non-binary-buyability}
shows the result of repeating the experiment
from section~\ref{sec:experiment:how effective is retro fallback}
but using the ``stochastic'' buyability model
from Appendix~\ref{appendix:buyability model details}.
The results are essentially indistinguishable from
Figure~\ref{fig:ssp-rs190-sascore}
and Figure~\ref{fig:ssp-rs190-optimistic}.

\begin{figure}[h]
    \centering
    {SA score heuristic} \\
    \vspace{0.3cm}
    \includegraphics{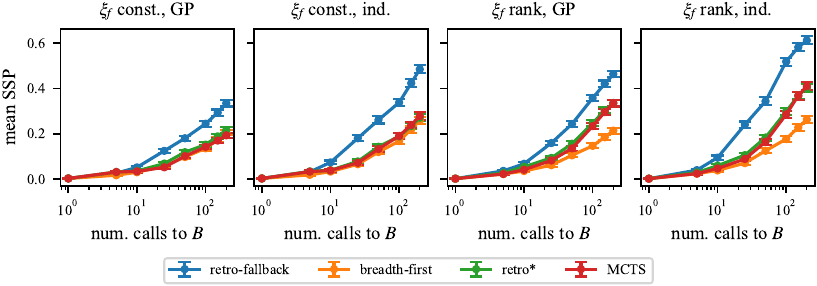} \\
    \vspace{0.3cm}
    {optimistic heuristic} \\
    \vspace{0.3cm}
    \includegraphics{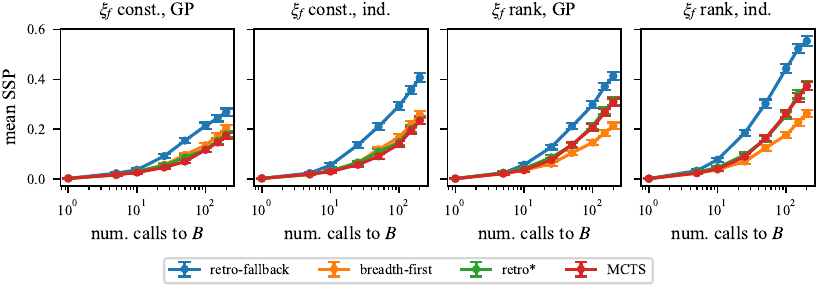}
    \caption{
        SSP vs time for 190 ``hard'' molecules
        using the \emph{stochastic}
        buyability model from \ref{appendix:buyability model details}.
    }
    \label{fig:rstar190-non-binary-buyability}
\end{figure}

\clearpage
\subsection{
    Plots for time complexity and variability of retro-fallback
    for section~\ref{sec:experiment:speed and variability}
}

\subsubsection{Time complexity}

Figure~\ref{fig:retro-fallback runtimes}
plots the observed empirical scaling of retro-fallback
with respect to the search graph size
for experiments on the 190 ``hard'' molecule test set.

\begin{figure}[h]
    \centering
    {SA score heuristic} \\
    \vspace{0.3cm}
    \includegraphics{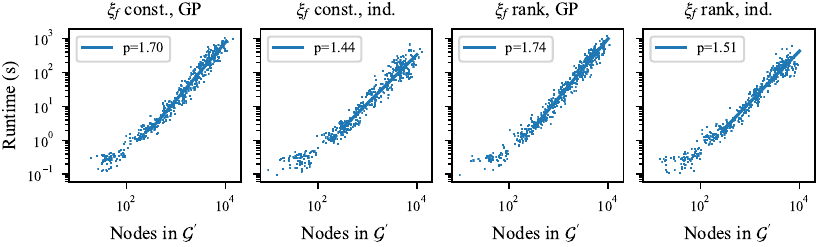} \\
    \vspace{0.3cm}
    {optimistic heuristic} \\
    \vspace{0.3cm}
    \includegraphics{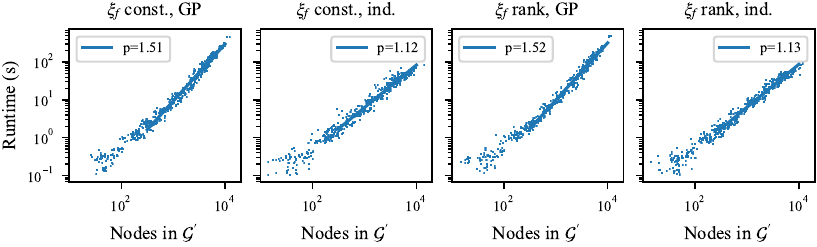}
    \caption{
        Number of nodes vs total runtime
        for retro-fallback experiments
        on 190 ``hard'' molecule test set
        from section~\ref{sec:experiment:how effective is retro fallback},
        along with log-log fit of runtime 
        ($\log t=p\log{n} + C $).
    }
    \label{fig:retro-fallback runtimes}
\end{figure}

\subsubsection{Variability}

Figure~\ref{fig:retro-fallback variability}
plots the mean and standard deviation
of SSP values
for a 25 molecule subset
of the GuacaMol dataset.
Analysis is in section~\ref{sec:experiment:speed and variability}.

\begin{figure}[h]
    \centering
    {SA score heuristic} \\
    \vspace{0.3cm}
    \includegraphics{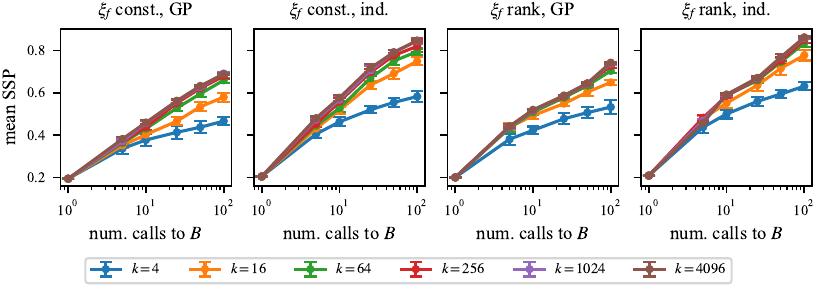} \\
    \vspace{0.3cm}
    {optimistic heuristic} \\
    \vspace{0.3cm}
    \includegraphics{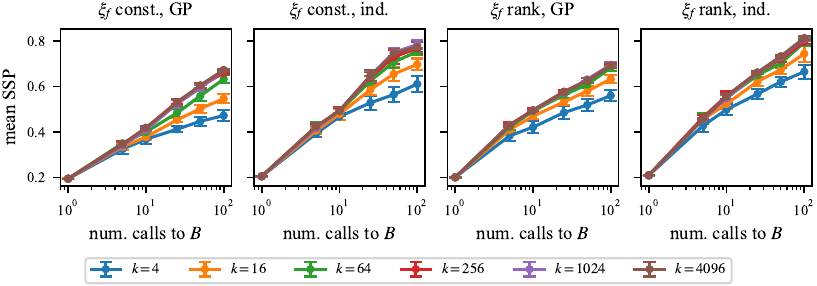} \\
    \caption{
        Mean and standard deviation (over 10 trials)
        of average SSP for 25 molecules
        from the GuacaMol dataset
        when retro-fallback is run with different number
        of samples $k$
        for the feasibility and buyability models.
    }
    \label{fig:retro-fallback variability}
\end{figure}
\clearpage
\section{Limitations}\label{appendix:limitations}

Here we explicitly list and discuss some limitations
of retro-fallback and SSP.

\paragraph{SSP is contingent on high-quality feasibility model}
With a low-quality feasibility model, the probabilities implied by SSP will likely not be meaningful.
We think the line of research advocated for in this paper will primarily be impactful
\emph{if}
we are able to produce better-quality feasibility models in the future.

\paragraph{Speed}
Retro-fallback appears to be significantly slower than other algorithms.
This likely comes from the need to do perform updates for many samples from $\xi_f$ and $\xi_b$
(256 in most of our experiments).
In contrast, 
other algorithms perform simple scalar updates.
However, we note that if retro-fallback is used with a more complex reaction model (e.g.\@ a transformer)
then the computational cost of the search algorithm's internal computations will be less significant compared to the cost of performing an expansion.
We therefore do not expect the speed of retro-fallback to limit its long-term potential utility.

\paragraph{Success as a binary notion}
A lot of nuance was lost by modelling reactions
and molecules as binary outcomes.
In practice, chemists may care about this nuance.
For example, in some circumstances having a low yield may be acceptable,
but producing the wrong product may not be acceptable.
Marking both outcomes as ``infeasible''
destroys this distinction.

\paragraph{Plan length}
Chemists generally prefer synthesis plans
with a few steps as possible.
Retro-fallback does not directly optimize for this
(and we do not see a straightforward way to extend
retro-fallback to this).
However, one of the main justifications for preferring short plans is that there are fewer steps that can go wrong, 
and therefore we expect retro-fallback to have a strong bias towards short plans regardless
(similar to existing algorithms which apply a more direct penalty to the length of synthesis plans).

\paragraph{Number of synthesis plans}
Our definition of SSP considers an arbitrary number of plans,
in practice chemists are unlikely to try more than around 10 plans before moving on to something else.
For large search graphs with many synthesis plans,
SSP may therefore lose its connection with what a chemist might do in the lab.
Unfortunately, we do not believe there is a straightforward
way to calculate the SSP of a limited number of synthesis plans
(the limit on the number of plans will likely preclude a dynamic programming solution).

\clearpage
\section{Future work}\label{appendix:future-work}

\paragraph{Relaxing assumptions}
If one wishes to re-insert the nuance lost by defining feasibility and buyability as binary outcomes,
one could potentially explicitly model factors such as yields and shipping times and build a binary stochastic process on top of this.
We do not have a clear idea of how retro-fallback or SSP could be generalized into some sort of continuous ``degree of success'', but imagine future work in this area could be useful.
Relaxing the independence assumption of the heuristic function was discussed in Appendix~\ref{appendix:rfbd:rejected alternatives}.
The heuristic could potentially also be modified to depend on the remaining compute budget.
Finally, using a separate feasibility and buyability model
implicitly assumes that these outcomes are independent.
We think this is a reasonable assumption because reaction feasibility is uncertain due to not fully understanding the physical system or not having a reliable model $B$,
while uncertainty in buyability would originate from issues of shipping, etc.
That being said, ``virtual libraries'' are one area where a molecule not being buyable meant that somebody else was unable to synthesize it.
This may impact which reactions a chemist would predict to be feasible (although it seems unlikely in practice that a vendor would tell you the reactions that they tried).
Nonetheless, if one wanted to account for this $\xi_f$ and $\xi_b$ could be merged into a joint feasibility-buyability model $\xi_{fb}$ from which functions $f$ and $b$ are simultaneously sampled.

\paragraph{Theoretical guarantees of performance}
We suspect that it is possible to give a theoretical guarantee that retro-fallback's worst-case performance is better than that of retro*
by formalizing the scenario in section~\ref{appendix:individual plans not ssp}.
However, we were unable to complete such a proof.
We also expect it could be possible (and useful) to theoretically characterize how
the behaviour retro-fallback with a finite number of samples $k$ deviates
from the behaviour of ``exact'' retro-fallback
in the limit of $k\to\infty$ where all estimates of SSP are exact.
Such analysis might provide insight into how large $k$ should be set to
for more general feasibility models.

\paragraph{Benchmark}
If a high-quality feasibility model could be created,
it would be useful to use SSP values based on this
feasibility model to create a benchmark for retrosynthesis
search algorithms.
This might help to accelerate progress in this field,
just as benchmarks have accelerated progress in other sub-fields of machine learning.

\end{document}